\documentclass{article}

\usepackage{microtype}
\usepackage{graphicx}
\usepackage{booktabs} % for professional tables

\usepackage{hyperref}

\usepackage[accepted]{icml2026}

\usepackage{amsmath}
\usepackage{amssymb}
\usepackage{mathtools}
\usepackage{amsthm}
\usepackage{mlmath}
\usepackage{lipsum}
\usepackage{subfig}
\usepackage{enumerate}

\usepackage{enumitem}

\setlist[itemize]{topsep=0pt, parsep=0.0pt, leftmargin=2em}

\setlist[enumerate]{topsep=0pt, parsep=0pt, leftmargin=2em}

\usepackage[capitalize,noabbrev]{cleveref}

\theoremstyle{plain}
\newtheorem{theorem}{Theorem}[section]
\newtheorem{proposition}[theorem]{Proposition}
\newtheorem{lemma}[theorem]{Lemma}

\theoremstyle{definition}
\newtheorem{definition}[theorem]{Definition}

\theoremstyle{remark}
\newtheorem{remark}[theorem]{Remark}

\newcommand{\tr}{\mathrm{tr}}

\usepackage[textsize=tiny]{todonotes}

\icmltitlerunning{Towards Understanding Adam Convergence on Highly Degenerate Polynomials}

\begin{document}

\twocolumn[
  \icmltitle{Towards Understanding Adam Convergence on Highly Degenerate Polynomials}

  % It is OKAY to include author information, even for blind submissions: the
  % style file will automatically remove it for you unless you've provided
  % the [accepted] option to the icml2026 package.

  % List of affiliations: The first argument should be a (short) identifier you
  % will use later to specify author affiliations Academic affiliations
  % should list Department, University, City, Region, Country Industry
  % affiliations should list Company, City, Region, Country

  % You can specify symbols, otherwise they are numbered in order. Ideally, you
  % should not use this facility. Affiliations will be numbered in order of
  % appearance and this is the preferred way.
  \icmlsetsymbol{equal}{*}

  \begin{icmlauthorlist}
    \icmlauthor{Zhiwei Bai}{equal,ins,sms}
    \icmlauthor{Jiajie Zhao}{equal,ins,sms}
    \icmlauthor{Zhangchen Zhou}{ins,sms}
    %\icmlauthor{}{sch}
    \icmlauthor{Zhi-Qin John Xu}{ins,sms,ser}
    \icmlauthor{Yaoyu Zhang}{ins,sms}
    %\icmlauthor{}{sch}
    %\icmlauthor{}{sch}
  \end{icmlauthorlist}

  \icmlaffiliation{ins}{Institute of Natural Sciences, MOE-LSC, Shanghai Jiao Tong University}
  \icmlaffiliation{sms}{School of Mathematical Sciences, Shanghai Jiao Tong University}
  \icmlaffiliation{ser}{Shanghai Seres Information Technology Co., Ltd, Shanghai 200040, China}

  \icmlcorrespondingauthor{Zhi-Qin John Xu}{xuzhiqin@sjtu.edu.cn}
  \icmlcorrespondingauthor{Yaoyu Zhang}{zhyy.sjtu@sjtu.edu.cn}

  % You may provide any keywords that you find helpful for describing your
  % paper; these are used to populate the "keywords" metadata in the PDF but
  % will not be shown in the document
  \icmlkeywords{Machine Learning, ICML}

  \vskip 0.3in
]

% this must go after the closing bracket ] following \twocolumn[ ...

% This command actually creates the footnote in the first column listing the
% affiliations and the copyright notice. The command takes one argument, which
% is text to display at the start of the footnote. The \icmlEqualContribution
% command is standard text for equal contribution. Remove it (just {}) if you
% do not need this facility.

% Use ONE of the following lines. DO NOT remove the command.
% If you have no special notice, KEEP empty braces:
% \printAffiliationsAndNotice{}  % no special notice (required even if empty)
% Or, if applicable, use the standard equal contribution text:
\printAffiliationsAndNotice{\icmlEqualContribution}

% \maketitle
\begin{abstract}
Adam is a widely used optimization algorithm in deep learning, yet the specific class of objective functions where it exhibits inherent advantages remains underexplored. Unlike prior studies requiring external schedulers and $\beta_2$ near 1 for convergence, this work investigates the ``natural'' auto-convergence properties of Adam. We identify a class of highly degenerate polynomials where Adam converges automatically without additional schedulers. Specifically, we derive theoretical conditions for local asymptotic stability on degenerate polynomials and demonstrate strong alignment between theoretical bounds and experimental results. We prove that Adam achieves local linear convergence on these degenerate functions, significantly outperforming the sub-linear convergence of Gradient Descent and Momentum. This acceleration stems from a decoupling mechanism between the second moment $v_t$ and squared gradient $g_t^2$, which exponentially amplifies the effective learning rate. Finally, we characterize Adam's hyperparameter phase diagram, identifying three distinct behavioral regimes: stable convergence, spikes, and SignGD-like oscillation. 
% Given the prevalence of highly degenerate directions in deep learning landscapes, our findings provide insights into the advantages of Adam.
\end{abstract}

\section{Introduction}
Adam~\cite{kingma2014adam} is widely used in deep learning optimization, yet theoretical understanding of which problem types inherently favor Adam over gradient descent (GD) and momentum methods remains limited. In particular, Adam's convergence itself has been a persistent challenge: \citet{j.2018on} demonstrated that Adam can fail to converge even in simple convex settings. Numerous studies have since focused on the convergence properties of adaptive gradient methods~\citep{chen2018on, li2019convergence, xie2020linear, defossez2022a, da2020general, shi2021rmsprop, zou2019sufficient, zhou2024on, zhang2022adam}. Among these, \citet{zhang2022adam} showed that with decreasing learning rate scheduler $\eta/\sqrt{T}$, Adam can converge at a rate of $O(\log(T)/\sqrt{T})$ when $\beta_2$ is sufficiently close to $1$ and $\beta_1 < \sqrt{\beta_2}$. Nevertheless, the auto-convergence properties of Adam without external schedulers remain underexplored—specifically, \textbf{the function classes where Adam ``naturally'' exhibits convergence advantages over GD are still poorly understood.}

\begin{figure}[!tb]
    \centering
    \subfloat[$L(x)=\frac{1}{2}x^2$]{\includegraphics[width=0.5\linewidth]{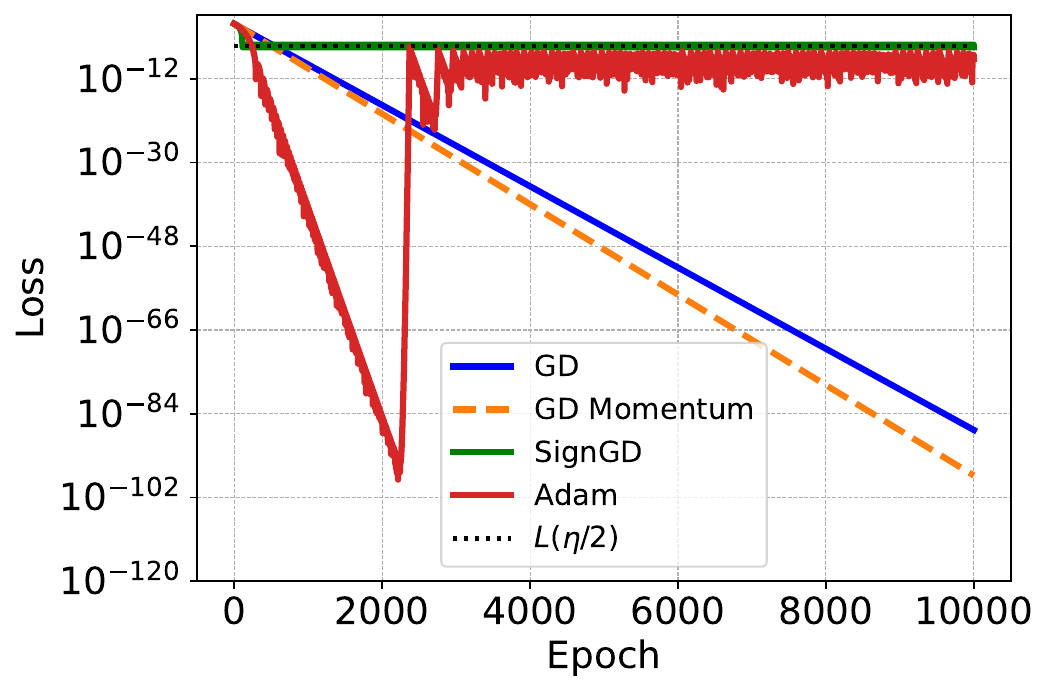}}
    \subfloat[$L(x)=\frac{1}{4}x^4$]{\includegraphics[width=0.5\linewidth]{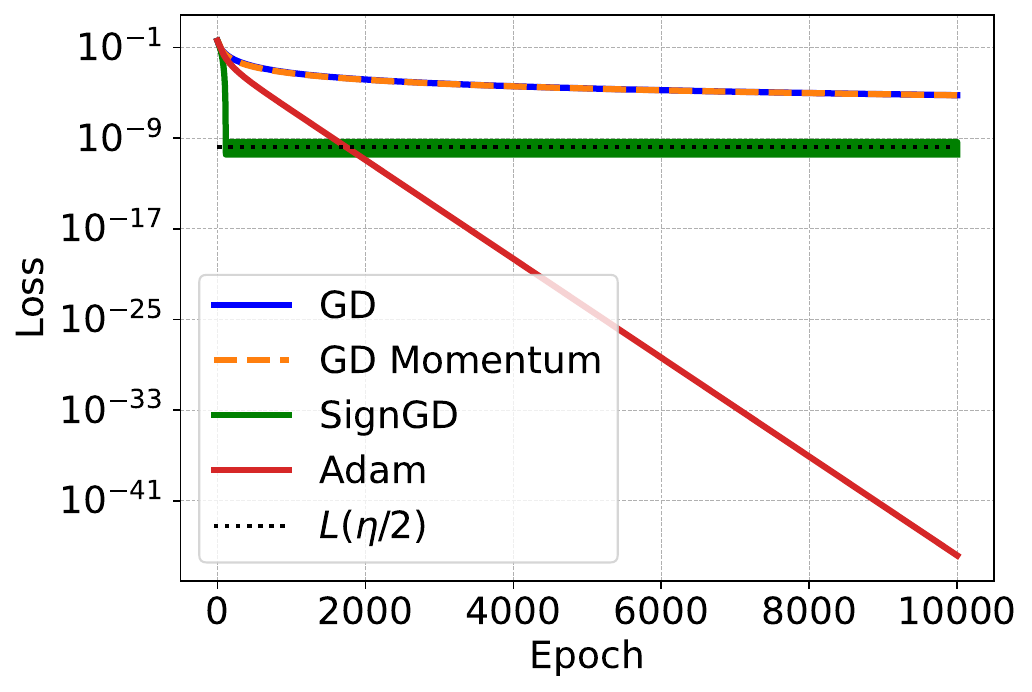}}
\caption{Convergence behavior differs between strongly convex and degenerate polynomials. Adam uses $\beta_1=0.9, \beta_2=0.99$.}
    \label{fig:adam_gd_comparison}
\end{figure}

This work investigates Adam's convergence behavior on highly degenerate polynomials. Our motivation comes from the observation in Fig.~\ref{fig:adam_gd_comparison}: on the strongly convex function $L(x) = \frac{1}{2}x^2$, constant-stepsize Adam initially exhibits linear convergence similar to gradient descent and momentum, but eventually suffers from loss spikes as studied in~\citet{cohen2023adaptive, bai2025adaptive}. In contrast, on the degenerate function $L(x) = \frac{1}{4}x^4$, Adam achieves stable \textit{linear} convergence without instabilities, while GD and Momentum methods degrade to \textit{sublinear} convergence. This distinction is particularly crucial because, although classical optimization theory focuses on strongly convex cases, extensive research shows that deep learning loss landscapes contain many highly degenerate directions~\cite{sagun2017empirical, zhang2021embedding,embedding2021long,bai2024embedding,fukumizu2019semi,csimcsek2021geometry,zhang2025geometrylocalrecoveryglobal}.

Focusing on degenerate objectives $L(x) = \frac{1}{k}x^k$ ($k \geq 4$, even), we derive theoretical conditions for local asymptotic stability with strong empirical alignment. We prove that Adam achieves linear convergence on these functions, significantly outperforming the sublinear rates of GD and Momentum. We explicitly characterize this acceleration arises from a decoupling mechanism between the second moment estimate $v_t$ and the squared gradient $g_t^2$, which exponentially amplifies the effective learning rate. We also characterize Adam's hyperparameter phase diagram of different behaviors, identifying three distinct regimes: stable convergence, spikes, and SignGD-like oscillation.

Our main contributions are:
\begin{enumerate}
    \item[(i)] We identify a class of highly degenerate polynomials where Adam converges without learning rate schedulers, deriving local convergence conditions over the full hyperparameter domain $[0, 1)$ that generalize prior results requiring $\beta_2$ close to 1~\cite{zhang2022adam}.
    \item[(ii)] We prove Adam achieves local linear convergence on degenerate functions, outperforming GD's sublinear rate through a decoupling mechanism between $v_t$ and $g_t^2$ that exponentially amplifies the effective learning rate, distinct from prior SignGD-based acceleration mechanism studied in~\citet{kunstner2023noise, kunstner2024heavy}.
    \item[(iii)] We systematically characterize Adam's phase diagram across hyperparameters, identifying three regimes—stable convergence, spikes, and SignGD-like oscillation—that explain empirical behaviors observed in prior works~\cite{ma2022qualitative, bai2025adaptive}.
\end{enumerate}

\section{Related Works}

\textbf{Convergence Analysis of Adaptive Methods.} Extensive research has analyzed the convergence of adaptive methods. Several works identify failure modes: \citet{j.2018on} demonstrated Adam can fail in simple convex settings, and recent work reveals its non-convergence on quadratics under asymmetric distribution~\citep{dereich2025adam}, while \citet{da2020general, bock2019non} showed RMSProp and Adam exhibit limit cycles on quadratic functions, and \citet{ma2022qualitative, bai2025adaptive} observed spiking, oscillation, or divergence depending on hyperparameters. Under appropriate hyperparameter selection, adaptive methods can achieve convergence rates of $O(\log T / \sqrt{T})$, typically requiring learning rate decay~\citep{chen2018on, zou2019sufficient, ward2020adagrad, shen2023unified, defossez2022a, zhang2022adam, da2020general, barakat2021convergence, dereich2024convergence}. Recent continuous-time ODE approximations and rigorous a priori bounds have further refined these convergence rates and global stability analyses~\citep{dereich2025sharp, dereich2025ode, dereich2026uniform}. In stochastic settings, \citet{dereich2024non} showed adaptive methods cannot converge without vanishing learning rates. In contrast, our work demonstrates that Adam achieves linear convergence on deterministic degenerate functions \emph{without} learning rate decay.

\textbf{Degeneracy in Deep Learning Loss Landscapes.} Deep learning loss landscapes exhibit pervasive high-order degeneracy, with empirical Hessian analysis revealing eigenvalues concentrated near zero~\citep{sagun2017empirical}. Several studies investigate this degeneracy through the symmetry of neural network parameterization~\citep{fukumizu2019semi, csimcsek2021geometry, zhang2025geometrylocalrecoveryglobal}. The Embedding Principle further shows that critical points of smaller networks embed into larger networks as high-dimensional degenerate manifolds~\citep{zhang2021embedding, embedding2021long, bai2024embedding}.

\textbf{Adam Outperforms Gradient Descent.} Understanding why Adam outperforms GD has been extensively studied. While heavy-tailed stochastic noise in language tasks was initially proposed~\cite{zhang2020adaptive}, \citet{chen2021heavy, kunstner2023noise} showed the performance gap persists even in deterministic full-batch settings. Proposed mechanisms include Hessian block heterogeneity~\citep{zhang2024transformers}, gradient heterogeneity~\citep{tomihari2025understanding}, coordinate-wise $\ell_\infty$ geometry exploitation~\citep{xie2024adam}, and resilience to heavy-tailed class imbalance~\citep{kunstner2024heavy}. Recently, \citet{davis2025gradient} proved that standard gradient descent yields sublinear convergence on highly degenerate functions, whereas adaptive step sizes via additional Polyak's rule achieve linear convergence. Compared to this, our work focuses on Adam's \textit{inherent adaptation mechanism} via second-order moments, demonstrating linear convergence without \textit{any} external modifications.

To further clarify the distinction between our theoretical framework and existing literature on Adam's advantages over SGD, we summarize the key mechanisms and their structural assumptions in Table \ref{tab:adam_vs_sgd}. It is important to note that these mechanisms are not mutually exclusive; multiple factors likely operate simultaneously during practical training. Our analysis isolates a fundamental, landscape-driven factor that has been largely overlooked in previous theoretical treatments.

\begin{table*}[!htbp]
\centering
\caption{Comparison of theoretical mechanisms explaining Adam's advantages over SGD. While prior works often rely on specific assumptions regarding noise, geometry, or data distribution, our proposed mechanism operates solely on loss landscape degeneracy—a ubiquitous structural property in deep neural networks that does not require stochasticity.}
\label{tab:adam_vs_sgd}
\resizebox{\textwidth}{!}{
\begin{tabular}{lllc}
\toprule
\textbf{Work} & \textbf{Proposed Mechanism} & \textbf{Key Assumption} & \textbf{Requires Stochasticity} \\
\midrule
\citet{zhang2020adaptive} & Heavy-tailed noise & Heavy-tailed gradient noise & Yes \\
\citet{zhang2024transformers} & Hessian block heterogeneity & Block-diagonal Hessian structure & No \\
\citet{tomihari2025understanding} & Gradient heterogeneity & Heterogeneous gradients and SignGD approx. & No \\
\citet{xie2024adam} & $\ell_\infty$ geometry exploitation & Coordinate-wisely smooth w.r.t.\ $\ell_\infty$ norm & No \\
\citet{kunstner2024heavy} & Heavy-tailed class imbalance & Imbalanced data distribution & No \\
\midrule
\textbf{Ours} & \textbf{$v_t$-$g_t^2$ decoupling} & \textbf{Loss landscape degeneracy} & No \\
\bottomrule
\end{tabular}
}
\end{table*}

\section{Preliminaries and Problem Setup}
\label{sec:preliminaries}
\subsection{The Model Problem: High-Order Degeneracy}
In this work, we consider the local behavior around a degenerate minimum $x^* = 0$ where the first $k-1$ derivatives vanish. In particular, we study the prototype function:
\begin{equation} 
\label{eq:loss_function}
    L(x) = \frac{1}{k} x^k, \quad \text{where } k \ge 4 \text{ is an even integer}.
\end{equation}
The gradient is $\nabla L(x):=L'(x) = x^{k-1}$ and the Hessian $\nabla^2 L(x):=L''(x) = (k-1)x^{k-2}$ vanishes as $x \to 0$. 
% This vanishing curvature poses challenges for gradient-based optimizers, typically leading to sublinear convergence (see Sec.~\ref{sec:curse_of_degeneracy}).

\subsection{Algorithm Specifications}
\textbf{Gradient Descent (GD) and Momentum.}
Standard GD updates as $x_{t+1} = x_t - \eta \nabla L(x_t)$. Momentum adds a velocity term:
\begin{equation*}
    m_{t+1} = \beta m_t + \nabla L(x_t), \quad x_{t+1} = x_t - \eta m_{t+1},
\end{equation*}
where $\beta \in [0, 1)$ and $\eta$ is a constant learning rate.

\textbf{Adam.} Adam incorporates adaptive learning rates via exponential moving averages of gradient moments. Let $m_t$ and $v_t$ denote the first and second moment estimates:
\begin{equation*}
\begin{aligned}
    m_t = \beta_1 m_{t-1} + (1-\beta_1) g_t, \quad v_t = \beta_2 v_{t-1} + (1-\beta_2) g_t^2,
\end{aligned}
\label{eq:vt_update}
\end{equation*}
where $g_t = \nabla L(x_t)$, and $\beta_1, \beta_2 \in [0, 1)$ are hyperparameters. The parameter update is:
\begin{equation} \label{eq:adam_update_raw}
    x_{t+1} = x_t - \eta \frac{\hat{m}_t}{\sqrt{\hat{v}_t} + \varepsilon},
\end{equation}
where $\hat{m}_t = m_t / (1-\beta_1^t)$ and $\hat{v}_t = v_t / (1-\beta_2^t)$ are bias-correction terms, and $\varepsilon$ prevents division by zero.

\textbf{Special Cases of Adam.} (i) When $\beta_1=0$, Adam reduces to \textbf{RMSProp}: $x_{t+1} = x_t - \eta \cdot \frac{g_t}{\sqrt{v_t}+\varepsilon}$.
(ii) When $\beta_1=\beta_2=0$ and $\varepsilon\to 0$, Adam reduces to \textbf{SignGD}: $x_{t+1} = x_t - \eta \cdot \mathrm{sgn}(g_t)$, which cannot converge with constant stepsize and stagnates at $O(L(\eta))$.

\textbf{Theoretical Simplifications.} 
To focus on asymptotic convergence and intrinsic dynamics, we adopt two standard simplifications:
\begin{itemize}
    \item \textit{Vanishing $\varepsilon$:} We take $\varepsilon = 0$ since non-zero $\varepsilon$ eventually dominates $\sqrt{v_t}$ as gradients vanish, reducing Adam to momentum-GD.
    \item \textit{Asymptotic bias correction:} For large $t$, bias correction terms satisfy $1-\beta^t \approx 1$, so we analyze uncorrected moments $m_t$ and $v_t$.
\end{itemize}

\section{Adam Convergence on Highly Degenerate Polynomials}

We derive the state space equations for Adam and present local convergence results on highly degenerate functions.

\textbf{State Space Dynamics of Adam.} 
For $L(x)=\frac{1}{k} x^k$, substituting $g_t = x_t^{k-1}$ into the Adam update rules with $\varepsilon=0$ yields:
\begin{equation*}
\left\{
\begin{aligned}
    & m_t = \beta_1 m_{t-1} + (1-\beta_1) x_t^{k-1}, \\
    & v_t = \beta_2 v_{t-1} + (1-\beta_2) x_t^{2k-2}, \\
    & x_{t+1} = x_t - \eta \frac{m_t}{\sqrt{v_t}}.
\end{aligned}
\right.
\label{eq:Adam}
\end{equation*}
To decouple the iterate scale from optimizer dynamics, we introduce normalized state variables:
\begin{equation} 
\label{eq:change_of_vars}
    \omega_t \coloneqq \frac{m_t}{x_t^{k-1}}, \quad \lambda_t \coloneqq \frac{x_t^{k-2}}{\sqrt{v_t}},
\end{equation}
where $\omega_t$ represents the normalized first moment and $\lambda_t\geq 0$ the \textit{effective curvature}, capturing the relative scaling between Hessian-induced curvature and adaptive step size. The update rule becomes:
\begin{equation}
    x_{t+1} = x_t - \eta \frac{m_t}{x_t^{k-1}} \cdot \frac{x_t^{k-1}}{\sqrt{v_t}} =  (1 - \eta \omega_t \lambda_t)x_t.
\end{equation}
The full dynamics are governed by:
\begin{equation}\label{eq:dynamical_system}
\left\{
\begin{aligned}
& \omega_{t+1}=\frac{\beta_1 \omega_t}{\left(1-\eta \omega_t \lambda_t\right)^{k-1}}+1-\beta_1\\
& \lambda_{t+1}=\frac{\left(1-\eta \omega_t \lambda_t\right)^{k-2} \lambda_t}{\sqrt{\beta_2+\left(1-\beta_2\right)\left(1-\eta \omega_t \lambda_t\right)^{2 k-2} \lambda_t^2 x_t^2}}\\
& x_{t+1}=\left(1-\eta \omega_t \lambda_t\right) x_t
\end{aligned}
\right.
\end{equation}
Loss decreases monotonically if and only if:
\begin{equation}
    L(x_{t+1}) \le L(x_t) \iff 0 \le \omega_t \lambda_t \le \frac{2}{\eta}.
\label{eq:loss_stability_condition}
\end{equation}

\begin{figure}[!htb]
\centering
\subfloat[]{\includegraphics[height=0.33\linewidth]{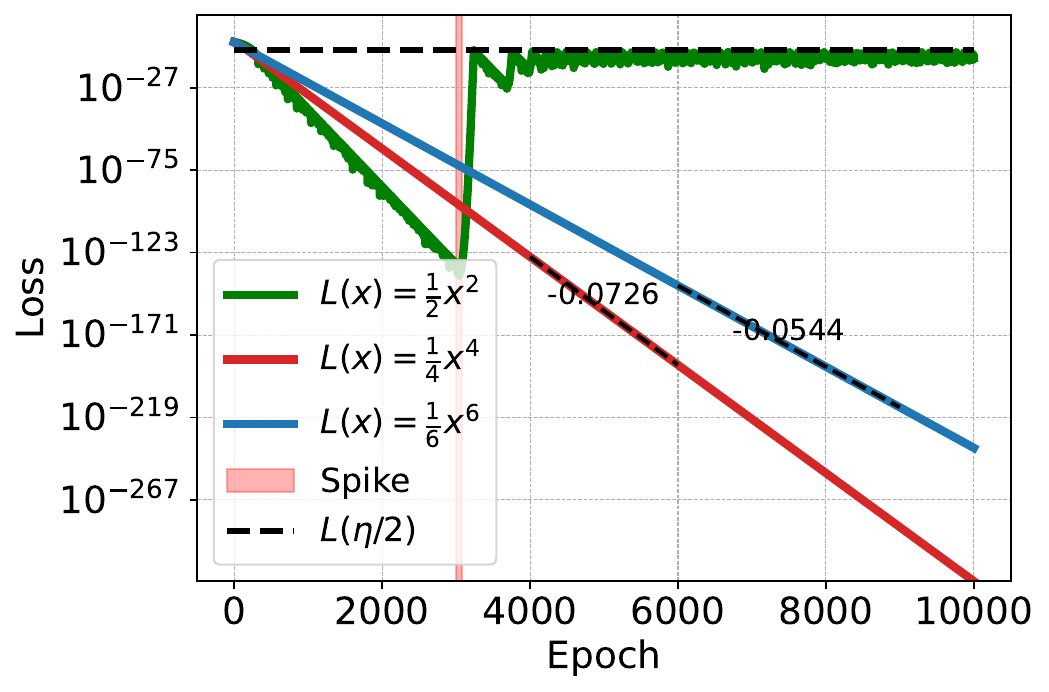}}
\hfill
\subfloat[]{\includegraphics[height=0.33\linewidth]{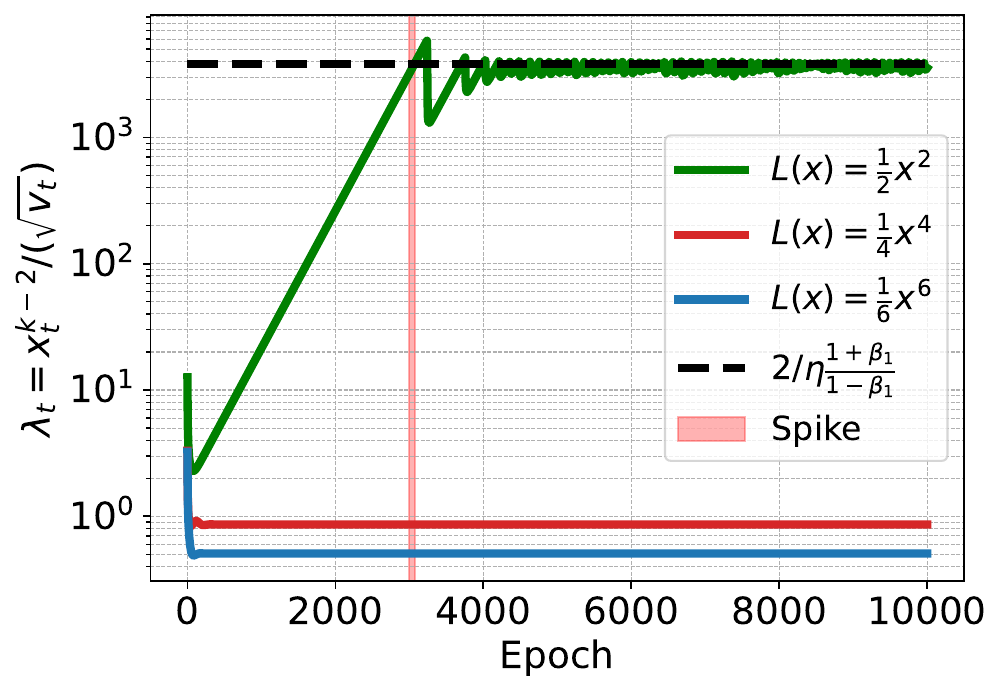}}
\caption{\textbf{(a)} Training loss of Adam ($\beta_1=0.9, \beta_2=0.93$) for different $L(x)$. The text online shows the slope of the exponential fit. \textbf{(b)} Evolution of effective curvature $\lambda_t$. The red shaded region highlights instability where $\lambda_t$ exceeds the theoretical stability threshold $\frac{2}{\eta}\frac{1+\beta_1}{1-\beta_1}$~\citep{cohen2023adaptive,bai2025adaptive}.}
\label{fig:x_2_and_x_4}
\end{figure}
\textbf{Why Adam Converges on Degenerate Functions.} 
Fig.~\ref{fig:x_2_and_x_4} intuitively illustrates why Adam exhibits instability on strongly convex quadratics but converges naturally on degenerate landscapes. Fig.~\ref{fig:x_2_and_x_4}(a) shows that for $k=2$, Adam fails with loss spikes (red shaded region), while for $k>2$, it achieves stable exponential convergence. Fig.~\ref{fig:x_2_and_x_4}(b) reveals the mechanism through the effective curvature $\lambda_t$. For $k=2$, $\lambda_t = 1/\sqrt{v_t}$ grows unbounded as $v_t$ decays, exceeding the stability threshold. However, for $k>2$, the numerator $x_t^{k-2}$ and denominator $\sqrt{v_t}$ decay synchronously, causing $\lambda_t$ to converge to a constant, maintaining stability.

\textbf{Local Stability Analysis of Adam Convergence.}
Motivated by the experiments in Fig.~\ref{fig:x_2_and_x_4}, we analyze the conditions for the convergence of the iterative system defined in Eq.~\eqref{eq:dynamical_system}. The system admits two classes of fixed points:
\begin{enumerate}
    \item[(i)] The trivial fixed point: $(\omega^*, \lambda^*, x^*) = (1, 0, c)$ for any constant $c$, which is unstable.
    \item[(ii)] The non-trivial fixed point, which corresponds to linear convergence:
    \begin{equation} \label{eq:stable_fixed_point}
    \footnotesize
        \omega^* = \frac{1 - \beta_1}{1 - \beta_1 \beta_2^{-\frac{k-1}{2(k-2)}}} , \quad
        \lambda^* = \frac{1 - \beta_2^{\frac{1}{2(k-2)}}}{\eta \omega^*}, \quad x^*=0.
    \end{equation}
\end{enumerate}
Crucially, if the system converges to the non-trivial fixed point (ii), the product $\omega_t \lambda_t$ approaches the equilibrium value $\omega^* \lambda^* = \frac{1}{\eta}(1 - \beta_2^{\frac{1}{2(k-2)}})$. Since $\beta_2 < 1$, this value is strictly positive and remains well below the stability threshold $2/\eta$. Consequently, the system exhibits linear convergence. Within the basin of attraction, convergence hinges entirely on the \textit{local asymptotic stability} of this fixed point.

This local stability is determined by the Jacobian matrix of the linearized system. The full Jacobian exhibits a block structure where the eigenvalue associated with the $x$-dynamics is explicitly $\beta_2^{\frac{1}{2(k-2)}} < 1$. Thus, stability is governed by the $2 \times 2$ sub-matrix $J$ describing the coupled $(\omega, \lambda)$ dynamics:

\begin{equation*} \label{eq:jacobian_matrix}
\scriptsize
J = \begin{bmatrix}
    \beta_1 \Gamma (k - 1)\beta_2^{\frac{-k}{2(k-2)}} + \beta_1 \beta_2^{\frac{1-k}{2(k-2)}} 
    & 
    \frac{ \beta_1 \eta (1 - \beta_1)^2 (k - 1)\beta_2^{\frac{-k}{2(k-2)}} }{ \left( 1 - \beta_1 \beta_2^{-\frac{k-1}{2(k-2)}} \right)^2 } 
    \\ 
    \frac{ \Gamma^2 (2 - k) \left( 1 - \beta_1 \beta_2^{-\frac{k-1}{2(k-2)}} \right)^2 \beta_2^{\frac{k-3}{2(k-2)}}  }{ \sqrt{\beta_2} \eta (1 - \beta_1)^2 }
    & 
    \frac{ \Gamma (2 - k)\beta_2^{\frac{k-3}{2(k-2)}} + \sqrt{\beta_2} }{ \sqrt{\beta_2} }
\end{bmatrix}
\end{equation*}
where $\Gamma \coloneqq 1 - \beta_2^{\frac{1}{2(k-2)}}$. % Let $r(J)$ denote the spectral radius of $J$. The fixed point is asymptotically stable if and only if $r(J) < 1$. 
We establish the following theorem to characterize this local stability:
\begin{theorem}[\textbf{Local Stability and Convergence Rate of Adam}]
\label{thm:adam_stability}
Consider the dynamical system in Eq.~\eqref{eq:dynamical_system} and hyperparameters $\beta_1, \beta_2\in [0, 1)$. The following hold:
\begin{enumerate}
    \item[(i)] \textbf{Existence:} The non-trivial fixed point~\eqref{eq:stable_fixed_point} exists if and only if $\beta_1 < \beta_2^{\frac{k-1}{2(k-2)}}$.
    \item[(ii)] \textbf{Stability:} The non-trivial fixed point is  asymptotically stable, i.e., spectral radius $r(J) < 1$ if and only if:
    \begin{align}
        \beta_1 &< \beta_2^{\frac{k}{2(k-2)}}, \label{eq:stability_condition_1} \quad \text{and} \\
        \beta_1 &> \frac{(k-2)\beta_2^{-\frac{1}{2(k-2)}}-k}{(k-(k-2)\beta_2^{\frac{1}{2(k-2)}})\beta_2^{-\frac{k}{2(k-2)}}}. \label{eq:stability_condition_2}
    \end{align}
    \item[(iii)] \textbf{Convergence Rate:} Within the basin of attraction, iterates converge linearly:
    \begin{equation}
\lim_{t\to \infty} \frac{x_{t+1}}{x_t}=\beta_2^{\frac{1}{2(k-2)}}
\label{eq:adam_rate1}
    \end{equation}
\end{enumerate}
\end{theorem}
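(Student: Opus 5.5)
We will work directly with the reduced system~\eqref{eq:dynamical_system} and take the three parts in order.

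\textbf{(i) Existence.} Set $x^*=0$. Then the $\lambda$-equation becomes $\lambda=(1-\eta\omega\lambda)^{k-2}\lambda/\sqrt{\beta_2}$. A fixed point with $\lambda^*>0$ therefore requires $(1-\eta\omega^*\lambda^*)^{k-2}=\sqrt{\beta_2}$. We select the root $q:=1-\eta\omega^*\lambda^*=\beta_2^{\frac{1}{2(k-2)}}\in(0,1)$. The alternative $q=-\beta_2^{1/(2(k-2))}$ is admissible for even $k$, but it makes the loss non-monotone and corresponds to sign-flipping iterates; we will check that it either gives no admissible fixed point or is excluded by the positivity convention, so we restrict to $q>0$.

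Substituting into the $\omega$-equation gives $\omega^*=\beta_1\omega^* q^{-(k-1)}+1-\beta_1$. This yields the formula for $\omega^*$ in~\eqref{eq:stable_fixed_point}, and then $\lambda^*=(1-q)/(\eta\omega^*)$. The conditions $\lambda^*>0$ and $\lambda^*<\infty$ hold exactly when $\omega^*\in(0,\infty)$. Since $1-\beta_1>0$, this is equivalent to $1-\beta_1 q^{-(k-1)}>0$, i.e.\ $\beta_1<\beta_2^{\frac{k-1}{2(k-2)}}$. For the converse, if this inequality fails then $\omega^*\le 0$ or $\omega^*$ is undefined, so $\lambda^*\le 0$ and no non-trivial fixed point exists.

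\textbf{(ii) Stability.} First we justify the block structure. The partial derivatives of the $(\omega,\lambda)$ maps with respect to $x$ at $x=0$ vanish, because $x$ enters only through $x^2$. Hence the Jacobian is block lower-triangular, and its eigenvalues are those of $J$ together with $\partial x_{t+1}/\partial x_t=q<1$. Next we compute $J$ by differentiating at the fixed point, using $q^{k-2}=\sqrt{\beta_2}$ and $\Gamma=1-q$. This reproduces the displayed matrix.

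We then apply the Jury (Schur--Cohn) criterion for $2\times2$ matrices: $r(J)<1$ if and only if $|\det J|<1$ and $|\tr J|<1+\det J$. We expect $\det J$ to simplify drastically, to something like $\beta_1 q^{-k}$. Indeed, the cross terms should cancel the product $\Gamma^2(k-1)(2-k)$ part, leaving
\[
\det J=\beta_1\beta_2^{-\frac{k}{2(k-2)}}\cdot\bigl(\text{stuff}\bigr).
\]
Our working guess is that $\det J<1$ gives exactly~\eqref{eq:stability_condition_1}. The remaining conditions are $1-\tr J+\det J>0$ and $1+\tr J+\det J>0$. We expect the first to hold automatically under (i), since it is the characteristic polynomial evaluated at $1$ and is proportional to $\Gamma^2$ times a positive factor. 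We expect the second, $p(-1)>0$, to be linear in $\beta_1$. Solving it for $\beta_1$ should give~\eqref{eq:stability_condition_2}.

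The main obstacle is this algebra: simplifying $\tr J$ and $\det J$ into clean expressions in $q$ and $\beta_1$, and verifying which Jury inequalities are automatically satisfied in the region $\beta_1<q^{k-1}$. Note that~\eqref{eq:stability_condition_1} is stronger than (i), since $q^k<q^{k-1}$. We also need to handle the sign of the coefficient of $\beta_1$ when dividing, which is where the denominator $k-(k-2)q$ appears. Because $q<1$, this denominator is positive, so the inequality direction is preserved. We will carry this out symbolically with $q$ as the only variable.

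\textbf{(iii) Rate.} Inside the basin of attraction, $(\omega_t,\lambda_t,x_t)\to(\omega^*,\lambda^*,0)$. Since $x_{t+1}/x_t=1-\eta\omega_t\lambda_t$, continuity gives
\[
\lim_{t\to\infty}\frac{x_{t+1}}{x_t}=1-\eta\omega^*\lambda^*=q=\beta_2^{\frac{1}{2(k-2)}},
\]
which is~\eqref{eq:adam_rate1}. Local asymptotic stability from (ii), via the linearization theorem for maps, guarantees that the basin of attraction is a nonempty open set.
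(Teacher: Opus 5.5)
Your proposal is correct and follows the paper's proof essentially step for step. The shared steps are: existence via $\omega^*>0$; the block-diagonal Jacobian at $x^*=0$ with extra eigenvalue $q=\beta_2^{\frac{1}{2(k-2)}}$; the $2\times2$ Jury test with $\det J=\beta_1 q^{-k}$ and $1-\tr(J)+\det(J)=(k-2)(1-q)\bigl(q^{-1}-\beta_1 q^{-k}\bigr)$, which is linear in $\Gamma$ rather than $\Gamma^2$ and positive exactly under (i); the condition $1+\tr(J)+\det(J)>0$ rearranging to \eqref{eq:stability_condition_2}; and the rate by continuity.

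One side remark needs correcting. The root $q=-\beta_2^{\frac{1}{2(k-2)}}$ gives a genuine fixed point with $\omega,\lambda>0$, and the loss there still strictly decreases since $|q|<1$. So it is excluded only because (i) concerns the specific point \eqref{eq:stable_fixed_point}, not by your "no admissible fixed point" check. It is harmless: the same formula gives $1-\tr(J)+\det(J)<0$ at that point, so it is always unstable.
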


\begin{proof}
The detailed derivation and rigorous proof are provided in Appendix~\ref{sec:proof of Adam}.
\end{proof}

\textbf{Visualization of Theoretical and Empirical Phase Diagrams.} Fig.~\ref{fig:convergence_phase}(a) visualizes the theoretical stability region ($r(J) < 1$) for $k=4$, primarily governed by $\beta_1 < \beta_2^{\frac{k}{2(k-2)}}$ (simplifying to $\beta_1 < \beta_2$ for $k=4$), with a minor constraint from the second inequality (lower-left corner). Fig.~\ref{fig:convergence_phase}(b) shows strong empirical alignment: configurations satisfying the stability condition achieve machine precision (loss $\approx 10^{-300}$), while violations result in significantly higher losses. Furthermore, the theoretical loss convergence rate, derived as $\frac{k\ln \beta_2}{2(k-2)}$, yields values of $\approx -0.0726$ for $k=4$ and $\approx -0.0544$ for $k=6$. These predictions align precisely with the empirical slopes observed in Fig.~\ref{fig:x_2_and_x_4}(a).
\begin{figure}[!htb]
    \centering
    \subfloat[Theoretical Phase Diagram]{\includegraphics[height=0.41\linewidth]{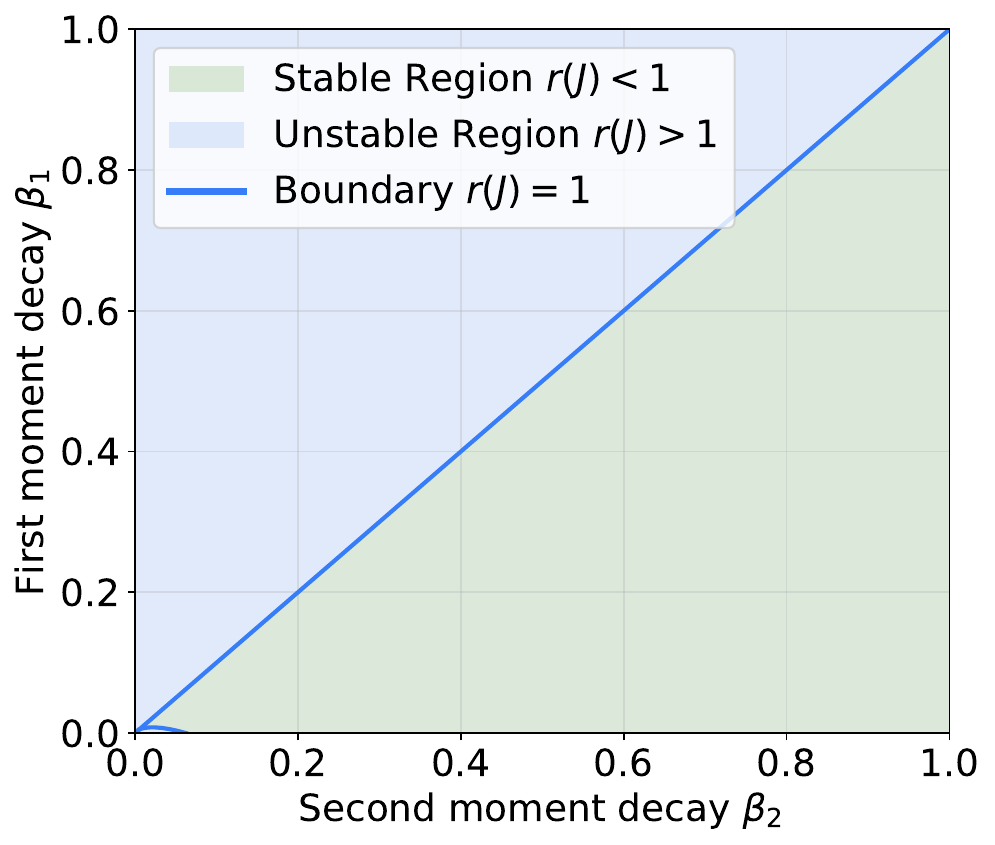}}
    \subfloat[Empirical Phase Diagram]{\includegraphics[height=0.41\linewidth]{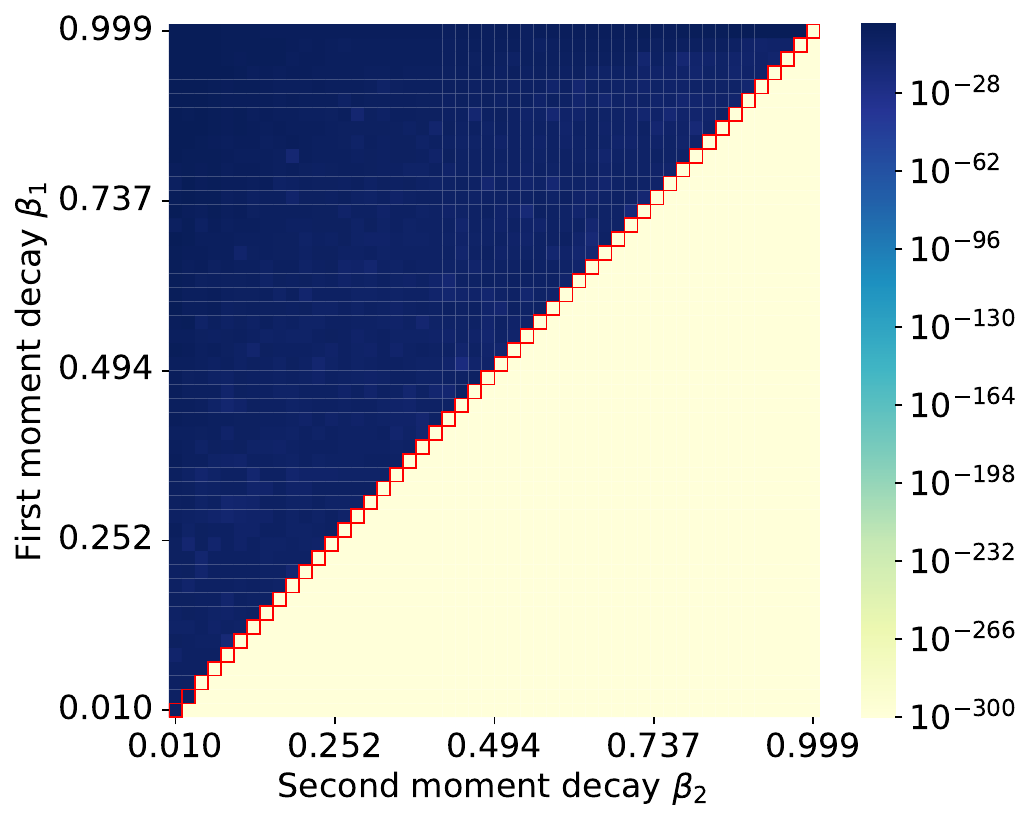}}
    \caption{\textbf{(a)} Theoretical convergence phase diagram for $k=4$, partitioned according to the stability conditions derived in Eq.~\eqref{eq:stability_condition_1} and Eq.~\eqref{eq:stability_condition_2}. \textbf{(b)} Experimental validation using Adam on $L(x)=\frac{1}{4}x^4$ with $x_0=1.0, \eta=0.001$. The heatmap displays the final training loss after $100,000$ steps. Additional results for $k=6$ can be found in Fig.~\ref{fig:convergence_phase_k_6}. For a detailed discussion and a magnified view of the lower-left corner region, please refer to Appendix~\ref{app:zoomin}.}
    \label{fig:convergence_phase}
\end{figure}

\begin{remark}
[\textbf{Monotonic expansion of stability region.}] As degeneracy order $k$ increases, the exponent $\frac{k}{2(k-2)}$ in the primary stability constraint $\beta_1 < \beta_2^{\frac{k}{2(k-2)}}$ decreases. Since $\beta_2 < 1$, this relaxes the upper bound on $\beta_1$, enlarging the stable region. Thus, satisfying the condition for $k=4$ guarantees stability for all $k>4$.
\end{remark}

\begin{remark}[\textbf{Basin of Attraction for Full Adam.}]  Characterizing the exact global basin of attraction for Thm.~\ref{thm:adam_stability} is challenging due to the strong nonlinear coupling between $\omega_t$ and $\lambda_t$ introduced by the momentum $m_t$. This coupling transforms the process into a 3-dimensional discrete dynamical system over $(x_0, m_0, v_0)$ with a potentially complex, initialization-dependent boundary, unlike the simpler, monotonic state space of RMSProp (see Thm.~\ref{thm:rmsprop_global_convergence}). Nevertheless, empirical evidence demonstrates a broad and robust effective basin. As shown in Fig.~\ref{fig:convergence_phase}(b), under standard initializations ($m_0=g_0$, $v_0=g_0^2$), the convergence region aligns closely with our local stability predictions, allowing typical trajectories to reliably enter the theoretical linear convergence basin.
\end{remark}

\begin{remark}[\textbf{Extension to General Degenerate Minima}]
\label{rmk:general_function_class}
While Thm.~\ref{thm:adam_stability} is explicitly derived for the monomial loss $L(x) = \frac{1}{k}x^k$, the theoretical results extend directly to a broader class of functions $L(x) = \frac{1}{k}x^k(1 + h(x))$ for $k \ge 4$, where $h$ is an analytic function satisfying $h(0) = 0$. For this generalized function class, the gradient is given by $g(x) = x^{k-1}\left(1 + h(x) + \frac{1}{k}xh'(x)\right) = \Theta(x^{k-1})$ as $x \to 0$. Because the higher-order terms introduced by $h$ vanish when evaluating the Jacobian at the fixed point $(\omega^*, \lambda^*, 0)$, the local stability conditions (Eqs.~\eqref{eq:stability_condition_1}, \eqref{eq:stability_condition_2}) and the resulting convergence rate (Eq.~\eqref{eq:adam_rate1}) remain perfectly identical. Consequently, our theoretical framework captures the universal local behavior of the optimizer around any $k$-th order degenerate minimum, rather than being restricted to purely monomial landscapes.
\end{remark}

\section{The Power of Adaptivity: Exponential Speedup Against Degeneration}

Degeneracy in objective functions often degrades convergence rates from linear to sublinear. We demonstrate that while GD and momentum are confined to slow sublinear convergence on degenerate landscapes, adaptive mechanisms achieve linear convergence acceleration through implicit amplification of the effective learning rate.

\subsection{The Curse of Degeneracy: Slow Convergence of GD and Momentum}
\label{sec:curse_of_degeneracy}
We establish that both GD and Momentum suffer from \textit{power-law} convergence on degenerate objectives.
% , with convergence time deteriorating exponentially as the degeneracy order $k$ increases.
\begin{theorem}[\textbf{Power-Law Convergence of Gradient Flow}]
\label{thm:gd_power_law}
Consider gradient flow on $L(x) = \frac{1}{k}x^k$ ($k \ge 4$, even):
$\dot{x} = -\eta \nabla L(x) = -\eta x^{k-1}, \quad x(0) = x_0 \ne 0$. For any fixed learning rate $\eta > 0$, the solution $x(t)$ exhibits an asymptotic power-law decay:
\begin{equation}
    x(t) = \Theta\left(t^{-\frac{1}{k-2}}\right) \quad \text{as } t \to \infty.
\end{equation}
See Appendix~\ref{oldthm:gd_power_law} for the proof.
\end{theorem}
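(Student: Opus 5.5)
The plan is to solve the scalar ODE $\dot x = -\eta x^{k-1}$ exactly by separation of variables and read off the decay rate. Since $k$ is even, $k-1$ is odd, so $\dot x$ has the opposite sign of $x$. This gives three facts. The origin is an equilibrium. By uniqueness (the right-hand side is locally Lipschitz), a trajectory with $x_0\neq 0$ never reaches $0$ in finite time. Hence $x(t)$ keeps the sign of $x_0$ and $|x(t)|$ is strictly decreasing. By the symmetry $x\mapsto -x$ of the equation, it suffices to treat $x_0>0$.

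For $x>0$ we write $\frac{dx}{x^{k-1}} = -\eta\,dt$. Integrating from $0$ to $t$ gives
\begin{equation*}
\frac{1}{k-2}\left(x(t)^{-(k-2)} - x_0^{-(k-2)}\right) = \eta t,
\end{equation*}
so that
\begin{equation*}
x(t) = \left(x_0^{-(k-2)} + (k-2)\eta t\right)^{-\frac{1}{k-2}}.
\end{equation*}
This expression is well defined and positive for all $t\ge 0$, so the solution exists globally. By uniqueness it is the solution. Equivalently, one can verify directly that the change of variables $u = x^{-(k-2)}$ turns the equation into the linear ODE $\dot u = (k-2)\eta$, which avoids any worry about dividing by $x$.

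Finally, for $t\to\infty$ we factor out $(k-2)\eta t$:
\begin{equation*}
x(t) = \bigl((k-2)\eta t\bigr)^{-\frac{1}{k-2}}\left(1 + \frac{x_0^{-(k-2)}}{(k-2)\eta t}\right)^{-\frac{1}{k-2}}.
\end{equation*}
The second factor tends to $1$, so $x(t)\sim ((k-2)\eta)^{-1/(k-2)}\, t^{-1/(k-2)}$. This gives $x(t)=\Theta(t^{-1/(k-2)})$, with constants depending only on $k$ and $\eta$ for large $t$ (for all $t\ge 1$ the constants depend also on $x_0$). The same bound holds for $|x(t)|$ when $x_0<0$. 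The consequence $L(x(t))=\Theta(t^{-k/(k-2)})$ follows immediately.

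There is no real obstacle. The only point needing care is justifying that the trajectory never crosses or reaches zero, so that dividing by $x^{k-1}$ is legitimate. This is handled by uniqueness of solutions, or bypassed entirely by the $u$-substitution.
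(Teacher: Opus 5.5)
Your proof is correct and follows essentially the same route as the paper: separate variables to get the explicit solution $x(t) = \bigl(x_0^{-(k-2)} + (k-2)\eta t\bigr)^{-\frac{1}{k-2}}$, then read off the $\Theta(t^{-\frac{1}{k-2}})$ decay as $t\to\infty$. Your justification that the trajectory never reaches zero (via uniqueness, or the substitution $u = x^{-(k-2)}$) makes explicit a step the paper only asserts with ``the dynamics remain in the positive domain.''
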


\begin{remark}[\textbf{The Curse of Degeneracy}]
Theorem \ref{thm:gd_power_law} reveals a severe ``\textit{curse of degeneracy}'' regarding iteration complexity. Inverting the rate $x(t) \sim t^{-\frac{1}{k-2}}$ shows that reaching precision $|x(t)| \le \epsilon$ requires time: $T_\epsilon \sim \epsilon^{-(k-2)} = \left(\frac{1}{\epsilon}\right)^{k-2}$, implying computational cost grows \textbf{exponentially} with degeneracy order $k$.
\end{remark}

% \textbf{Momentum: Ineffectiveness in Degenerate Regimes.} 
We show that momentum provides no asymptotic improvement in degenerated settings.

\begin{theorem}[\textbf{Power-Law Convergence of Momentum}]
\label{thm:momentum_power_law}
Consider continuous-time Heavy-ball momentum with $\beta_1 \in [0, 1)$: $\dot{m} = -(1-\beta_1) m + (1-\beta_1) \nabla L(x), \quad m(0)=0,$ and $\dot{x} = -\eta m$. For $L(x) = \frac{1}{k}x^k$ ($k \ge 4$, even), the trajectory $x(t)$ satisfies:
\begin{equation}
    x(t) = \Theta\left(t^{-\frac{1}{k-2}}\right) \quad \text{as } t \to \infty.
\end{equation}
See Appendix~\ref{oldthm:momentum_power_law} for the proof.
\end{theorem}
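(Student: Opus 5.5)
We need to prove Theorem \ref{thm:momentum_power_law}: for the continuous-time heavy-ball system with $m(0)=0$,
\[
x(t)=\Theta\bigl(t^{-1/(k-2)}\bigr).
\]

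The plan is to reduce the two-dimensional system to a perturbation of the gradient flow in Theorem~\ref{thm:gd_power_law}. Set $c=1-\beta_1>0$ and $g(x)=x^{k-1}$. The system is
\[
\dot m = -c m + c\,g(x), \qquad \dot x=-\eta m .
\]
Asymptotically the momentum should slave to the gradient, $m\approx g(x)$, because $m$ relaxes at the fixed rate $c$ while $g(x)$ varies on the much slower time scale $t^{-1}$. In that regime $\dot x\approx -\eta x^{k-1}$, and the power law follows.

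\textbf{Step 1: a Lyapunov function.} Define
\[
E=\frac{1}{k}x^k+\frac{\eta}{2c}m^2 .
\]
Differentiating gives
\[
\dot E=-\eta m\,x^{k-1}+\frac{\eta}{c}m\bigl(-cm+cx^{k-1}\bigr)=-\eta m^2\le 0 .
\]
So $x$ and $m$ stay bounded. By LaSalle, the only invariant set inside $\{m=0\}$ requires $x^{k-1}=0$, so $(x,m)\to(0,0)$.

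\textbf{Step 2: the slaving error.} Write $m=g(x)+r$. Then
\[
\dot r=-c r-g'(x)\dot x=-cr+\eta(k-1)x^{k-2}m .
\]
For large $t$, $x$ is small, so the forcing term is $O(x^{k-2}|m|)$, which is tiny relative to $|m|$. From this I would show $r=o(g(x))$ in the following sense: for any $\delta>0$, eventually $(1-\delta)g(x)\le m\le(1+\delta)g(x)$ when $x>0$.

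To make this precise I would use the ratio $\rho=m/g(x)$, which mirrors the paper's $\omega_t$. It satisfies
\[
\dot\rho=c(1-\rho)+\eta(k-1)x^{k-2}\rho^2 .
\]
Since $x\to0$, this is a perturbation of $\dot\rho=c(1-\rho)$, so $\rho\to1$. The catch is that $\rho$ is only defined while $x\neq0$, so oscillation through zero has to be handled. Energy decay rules out infinitely many sign changes with non-negligible amplitude, but I would still need to argue that $x$ eventually keeps a fixed sign. I plan to do this by phase-plane reasoning: near the origin the linearization in the $m$-direction has a strongly stable rate $-c$, while the $x$-direction is degenerate. A center-manifold type argument then gives the invariant curve $m=h(x)=x^{k-1}+O(x^{2k-3})$, which attracts all trajectories. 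Eventually $x$ is monotone, and $m$ has the sign of $x$.

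\textbf{Step 3: integrate.} Once $\rho\to1$, take $u=x^{-(k-2)}$ (assume $x>0$ by symmetry). Then
\[
\dot u=(k-2)\eta\rho\,x^{k-1}x^{-(k-1)}=(k-2)\eta\rho\to(k-2)\eta .
\]
Hence $u(t)\sim(k-2)\eta t$, which gives
\[
x(t)\sim\bigl((k-2)\eta t\bigr)^{-1/(k-2)},
\]
so $x(t)=\Theta\bigl(t^{-1/(k-2)}\bigr)$. This is the same comparison as in Theorem~\ref{thm:gd_power_law}.

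The main obstacle is Step 2: excluding persistent sign changes of $x$ so that $\rho$ is well defined and converges. I would first try the center-manifold reduction (the origin is a non-hyperbolic equilibrium with one stable eigenvalue $-c$ and one zero eigenvalue, with reduced dynamics $\dot x=-\eta x^{k-1}+\dots$). If that proves awkward, I would fall back on a direct invariant-region argument: show that the region $\{\tfrac12 x^{k-1}\le m\le 2x^{k-1},\,x>0\}$ is forward invariant for small $x$, and that trajectories enter it, or its mirror image, in finite time.
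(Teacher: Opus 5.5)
The step that fails is the conclusion of Step 2 that $\rho=m/x^{k-1}\to 1$ along every trajectory. The term $\eta(k-1)x^{k-2}\rho^2$ is a small perturbation of $c(1-\rho)$ only while $\rho=O(1)$. For frozen $x$, the right-hand side has a second, repelling zero near $c/(\eta(k-1)x^{k-2})$, and there are solutions with $\rho\to\infty$.

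These solutions are the trajectories on the strong stable manifold $W^{ss}$ of the origin. In coordinates $(x,m)$, $W^{ss}$ is tangent to the eigenvector $(\eta,c)$ of the eigenvalue $-c$. Along it $m\approx (c/\eta)x$ and $x(t)\sim Ae^{-ct}$.

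The center-manifold reduction does not rule them out. Carr's theorem only says that each nearby solution agrees, up to $O(e^{-\gamma t})$, with some solution $u(t)$ of the reduced equation $\dot u=-\eta h(u)$, and $u\equiv 0$ is allowed. Exponential attraction of $m$ to $h(x)$ in absolute terms says nothing about the ratio $m/h(x)$ once $x$ itself is exponentially small. Your fallback fails for the same reason: these trajectories never enter $\{\tfrac12 x^{k-1}\le m\le 2x^{k-1}\}$ or its mirror image.

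This cannot be patched, because $W^{ss}$ meets the initial line $\{m=0\}$. On its branch in $\{x>0,\,m>0\}$ we have $\tfrac{d}{dt}\bigl(x-\tfrac{\eta}{c}m\bigr)=-\eta x^{k-1}<0$ with limit $0$, hence $m<\tfrac{c}{\eta}x$. Now run time backward:
\begin{itemize}
\item While $m>x^{k-1}$, $m$ increases, so $x$ grows at least linearly.
\item Since $x^{k-1}<m<\tfrac{c}{\eta}x$ forces $x^{k-2}<c/\eta$, this phase ends in finite time.
\item Afterwards $m-x^{k-1}$ stays negative and decreasing, so $m$ reaches $0$ in finite backward time at some $x_0^*>0$.
\end{itemize}
For $x(0)=\pm x_0^*$ and $m(0)=0$, the solution therefore decays like $e^{-(1-\beta_1)t}$. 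The lower bound in $\Theta$ fails, so the statement is false as written.

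What your argument does prove, once the case $u\equiv 0$ is split off, is the following. First, $|x(t)|=O(t^{-1/(k-2)})$ for every $x(0)$. Second, $x(t)=\Theta(t^{-1/(k-2)})$ for every $x(0)\neq 0$ outside the discrete set of values with $(x(0),0)\in W^{ss}$. For those initial values $u\not\equiv 0$, so $\rho\to 1$ and your Step 3 goes through.

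The paper's own proof, a dominant-balance analysis of $\ddot x+(1-\beta_1)\dot x+\eta(1-\beta_1)x^{k-1}=0$, has the same hole. It discards the balance $\ddot x\sim-(1-\beta_1)\dot x$. In that balance, however, the gradient term $\sim e^{-(k-1)(1-\beta_1)t}$ is only a subdominant forcing, absorbed by a correction of that order, and the balance is realized exactly on $W^{ss}$. Apart from this, your Lyapunov--LaSalle, ratio-equation and center-manifold route is considerably more rigorous than the paper's consistency check.
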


Thm.~\ref{thm:momentum_power_law} confirms momentum fails to break the curse of degeneracy. As curvature vanishes ($L''(x) \propto x^{k-2} \to 0$), the gradient force diminishes too rapidly to sustain momentum, resulting in the same complexity class as GD: $T_\epsilon \sim \epsilon^{-(k-2)}$.
% Thm.~\ref{thm:momentum_power_law} confirms that Heavy-ball momentum fails to break the ``curse of degeneracy.'' Physically, as the curvature vanishes ($L''(x) \propto x^{k-2} \to 0$), the gradient force diminishes too rapidly to sustain sufficient momentum to overcome friction. Consequently, the dynamics degenerate to the same complexity class as standard GD, with time complexity remaining $T_\epsilon \sim \epsilon^{-(k-2)}$.

\subsection{Exponentially Amplified Learning Rate Mechanism}

To isolate adaptivity effects, we analyze RMSProp (Adam with $\beta_1=0$) and demonstrate that adaptive rescaling mitigates vanishing curvature, thereby accelerating convergence from sublinear to linear.

Consider RMSProp with $\varepsilon=0$:
\begin{align}
    v_t &= \beta_2 v_{t-1} + (1-\beta_2) g_t^2, \label{eq:rmsprop_v} \\
    x_{t+1} &= x_t - \eta \frac{g_t}{\sqrt{v_t}}. \label{eq:rmsprop_x}
\end{align}
The core intuition is that as $x_t$ converges, the update term $\frac{g_t}{\sqrt{v_t}}$ must vanish, causing the second moment update Eq.~\eqref{eq:rmsprop_v} to be dominated by the former memory term. This induces a geometric decay in $v_t$, which in turn drives an exponential increase in the \textit{effective learning rate} $\eta_{\text{eff}, t} \coloneqq \eta / \sqrt{v_t}$.

\begin{lemma}[\textbf{Asymptotic Behavior of Second Moments}]
\label{lemma:v_limit}
For RMSProp dynamics in Eq.~\eqref{eq:rmsprop_x}, if $\{x_t\}$ converges, then:
\begin{equation}
    \lim_{t\to \infty}\frac{v_t}{v_{t-1}} = \beta_2.
\end{equation}
\end{lemma}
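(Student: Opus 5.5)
The approach is to write the ratio explicitly and show that the gradient contribution becomes negligible relative to $v_{t-1}$. Dividing the recursion in Eq.~\eqref{eq:rmsprop_v} by $v_{t-1}$ gives
\begin{equation*}
\frac{v_t}{v_{t-1}} = \beta_2 + (1-\beta_2)\frac{g_t^2}{v_{t-1}}.
\end{equation*}
So the lemma is equivalent to $g_t^2/v_{t-1}\to 0$. We assume $v_0>0$ (or $g_0\neq 0$), so every $v_t>0$ and the RMSProp update is well defined. If some $g_t=0$, then $x_t=0$ and the iteration is frozen, so the claim is immediate from the recursion. Hence we may assume $g_t\neq 0$ for all $t$.

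\textbf{Step 1 (the update vanishes).} Since $\{x_t\}$ converges, $x_{t+1}-x_t\to 0$. By Eq.~\eqref{eq:rmsprop_x} this means $u_t \coloneqq g_t/\sqrt{v_t}\to 0$, that is, $g_t^2/v_t\to 0$.

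\textbf{Step 2 (pass from $v_t$ to $v_{t-1}$).} Write $r_t \coloneqq g_t^2/v_{t-1}$. Then $v_t = v_{t-1}(\beta_2+(1-\beta_2)r_t)$, so
\begin{equation*}
\frac{g_t^2}{v_t}=\frac{r_t}{\beta_2+(1-\beta_2)r_t}.
\end{equation*}
The map $r\mapsto r/(\beta_2+(1-\beta_2)r)$ is increasing, vanishes only at $r=0$, and tends to $1/(1-\beta_2)$ as $r\to\infty$. Hence $g_t^2/v_t\to 0$ forces $r_t\to 0$: for any $\delta>0$, once $g_t^2/v_t$ drops below the value of the map at $r=\delta$, we get $r_t<\delta$. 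This needs $\beta_2>0$. If $\beta_2=0$, then $v_t=g_t^2$, the update is $\eta\,\mathrm{sgn}(g_t)$, and the update cannot vanish. In that case convergence of $x_t$ is impossible unless the iteration is frozen, so the hypothesis is vacuous or trivial.

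\textbf{Step 3 (conclude).} Substituting $r_t\to 0$ into the ratio formula gives $v_t/v_{t-1}\to\beta_2$.

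The main obstacle I expect is the inversion in Step 2. The naive bound $g_t^2/v_{t-1}\le g_t^2/v_t$ goes the wrong way, because $v_t$ may exceed $v_{t-1}$. The explicit monotone relation between $r_t$ and $g_t^2/v_t$ above resolves this. The remaining care is in the degenerate cases ($\beta_2=0$, or $x_t=0$ exactly), which I would dispose of separately as described.
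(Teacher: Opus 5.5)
Your proof is correct and is essentially the paper's argument. Both derive $g_t^2/v_t\to 0$ from $x_{t+1}-x_t\to 0$ and then use the recursion to turn this into $v_t/v_{t-1}\to\beta_2$: the paper rewrites $g_t^2/v_t=(1-\beta_2 v_{t-1}/v_t)/(1-\beta_2)$ and takes a reciprocal, while you invert the increasing map $r\mapsto r/(\beta_2+(1-\beta_2)r)$, which is the same algebra in a different variable. You also treat the degenerate cases ($\beta_2=0$, or an exactly vanishing gradient) explicitly, which the paper leaves implicit even though its reciprocal step also silently needs $\beta_2>0$.
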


\begin{proof}
Convergence implies $\lim_{t\to\infty} \frac{g_t}{\sqrt{v_t}} = 0$, hence $\lim_{t\to\infty} \frac{g_t^2}{v_t} = 0$. Rearranging~\eqref{eq:rmsprop_v} as $g_t^2 = \frac{v_t - \beta_2 v_{t-1}}{1-\beta_2}$. $\lim_{t\to\infty} \frac{v_t - \beta_2 v_{t-1}}{(1-\beta_2)v_t} = 0$ implies $\lim_{t\to \infty} \left( 1 - \beta_2 \frac{v_{t-1}}{v_t} \right) = 0.$ Thus, $\lim_{t\to \infty}\frac{v_t}{v_{t-1}}=\beta_2$.
\end{proof}

% \begin{proof}
% Convergence of $x_t$ implies $\lim_{t\to\infty} \frac{g_t}{\sqrt{v_t}} = 0$, and consequently $\lim_{t\to\infty} \frac{g_t^2}{v_t} = 0$. Rearranging Eq.~\eqref{eq:rmsprop_v} as $g_t^2 = \frac{v_t - \beta_2 v_{t-1}}{1-\beta_2}$, we have:
% \begin{equation}
%     \lim_{t\to\infty} \frac{v_t - \beta_2 v_{t-1}}{(1-\beta_2)v_t} = 0 \implies \lim_{t\to \infty} \left( 1 - \beta_2 \frac{v_{t-1}}{v_t} \right) = 0.
% \end{equation}
% Thus, $\lim_{t\to \infty}\frac{v_t}{v_{t-1}}=\beta_2$.
% \end{proof}

Lem.~\ref{lemma:v_limit} reveals the acceleration mechanism: $v_t \sim \beta_2^t$ implies $\eta_{\text{eff}, t} \propto \beta_2^{-t/2}$. This exponential schedule accelerates convergence from polynomial to exponential.

\begin{lemma}[\textbf{Linear Convergence via Exponential Learning Rate}]
\label{lemma:exponential_lr_schedule}
Consider gradient flow on $L(x) = \frac{1}{k}x^k$ ($k \ge 4$, even) with time-varying learning rate $\dot{x} = -\eta(t) x^{k-1}$. If the learning rate follows an exponential schedule $\eta(t) = \eta_0 e^{\alpha t}$ ($\alpha > 0$), the solution $x(t)$ exhibits asymptotic exponential convergence:
\begin{equation}
    x(t) \sim C \exp\left(-\frac{\alpha}{k-2} t\right) \quad \text{as } t \to \infty,
\end{equation}
where $C$ is a constant determined by system parameters.
See Appendix~\ref{oldlemma:exponential_lr_schedule} for the proof.
\end{lemma}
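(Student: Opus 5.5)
The ODE $\dot{x} = -\eta_0 e^{\alpha t} x^{k-1}$ is separable, so I will solve it in closed form and then read off the asymptotics. Assume $x_0 \neq 0$; by the symmetry $x \mapsto -x$ (since $k-1$ is odd) it suffices to treat $x_0>0$, and uniqueness keeps $x(t)>0$ for all $t$. Separating variables gives $x^{-(k-1)}\,dx = -\eta_0 e^{\alpha t}\,dt$. Integrating from $0$ to $t$ and using $\int x^{-(k-1)}dx = -\frac{1}{k-2}x^{-(k-2)}$, we obtain
\begin{equation*}
x(t)^{-(k-2)} = x_0^{-(k-2)} + \frac{(k-2)\eta_0}{\alpha}\left(e^{\alpha t}-1\right).
\end{equation*}
The right-hand side is positive and increasing for all $t\ge 0$. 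Hence the solution exists globally with no finite-time blow-up, and $x(t)\to 0$ monotonically.

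Next I will factor out the dominant term. Write the right-hand side as $e^{\alpha t}\bigl(\frac{(k-2)\eta_0}{\alpha} + \bigl(x_0^{-(k-2)} - \frac{(k-2)\eta_0}{\alpha}\bigr)e^{-\alpha t}\bigr)$. Raising to the power $-\frac{1}{k-2}$ gives
\begin{equation*}
x(t) = e^{-\frac{\alpha}{k-2}t}\left(\frac{(k-2)\eta_0}{\alpha} + O(e^{-\alpha t})\right)^{-\frac{1}{k-2}}.
\end{equation*}
Letting $t\to\infty$, the bracket tends to $\frac{(k-2)\eta_0}{\alpha}$. Therefore $x(t)e^{\frac{\alpha}{k-2}t} \to C := \bigl(\frac{\alpha}{(k-2)\eta_0}\bigr)^{\frac{1}{k-2}}$, with sign $\mathrm{sgn}(x_0)$ in general. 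This is precisely $x(t)\sim C\exp(-\frac{\alpha}{k-2}t)$.

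The constant $C$ depends only on $\alpha,\eta_0,k$, not on $x_0$, which is consistent with "determined by system parameters." No step here is a real obstacle. The only points needing care are justifying that $x$ never crosses zero, which follows from uniqueness since $x\equiv 0$ is a solution, and checking that the limit is taken of a positive quantity so that the fractional power is well defined. I can also record the relative error, $x(t)=Ce^{-\frac{\alpha}{k-2}t}(1+O(e^{-\alpha t}))$, to make the asymptotic statement quantitative.
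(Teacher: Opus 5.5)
Your proposal is correct and follows the paper's own proof essentially step for step. Both separate variables to get the closed form for $x(t)^{-(k-2)}$, factor out the dominant $e^{\alpha t}$ term, and arrive at the same constant $C=\bigl(\frac{(k-2)\eta_0}{\alpha}\bigr)^{-1/(k-2)}$. Your extra points are valid refinements that the paper leaves implicit: no zero-crossing by uniqueness, the sign $\mathrm{sgn}(x_0)$ for negative initializations, and the quantitative $O(e^{-\alpha t})$ relative error.
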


\subsection{Discretization Introduces Stability Constraints}
\label{sec:discretization_constraints}
While Lem.~\ref{lemma:exponential_lr_schedule} suggests exponential schedules yield unbounded acceleration, discretization imposes stability constraints. The learning rate growth factor $\gamma \coloneqq \mathrm{e}^\alpha$ is upper-bounded by local curvature dynamics.

Consider gradient descent on $L(x) = \frac{1}{k}x^k$ with exponentially increasing learning rate:
\begin{equation}
\begin{aligned}
x_{t+1} & = x_t - \eta_t x_t^{k-1} =  (1 - \eta_t x_t^{k-2}) x_t,\\
\eta_{t+1} &= \gamma \eta_t, \quad (\gamma > 1).
\end{aligned}
\label{eq:gamma_dynamics}
\end{equation}
We introduce the \textit{effective sharpness} $u_t \coloneqq \eta_t x_t^{k-2}$, capturing the interplay between step size and local Hessian. If $u_t > 2$, the step size exceeds stability limits of the local curvature, causing instability. Substituting the update rules Eq.~\eqref{eq:gamma_dynamics} yields a closed-form recurrence relation for $u_t$, reducing the two-dimensional dynamics to a one-dimensional map:
\begin{equation}
u_{t+1} = \psi(u_t) \coloneqq \gamma u_t (1 - u_t)^{k-2}. \label{eq:sharpness_map}
\end{equation}

\begin{figure}[!htb]
\centering
\includegraphics[width=0.95\linewidth]{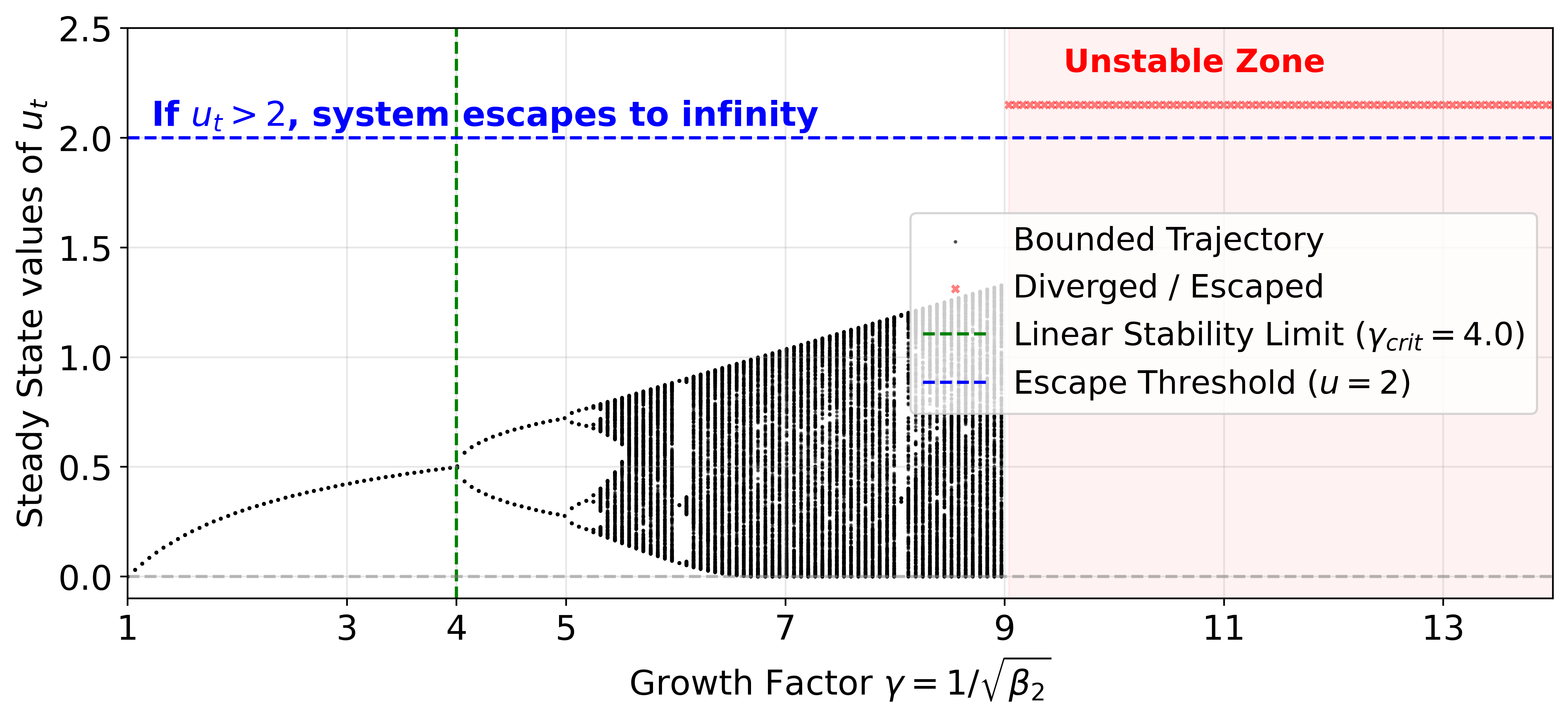}
\caption{\textbf{Bifurcation diagram of effective sharpness $u_t$ for $k=4$.} The system Eq:~\eqref{eq:sharpness_map} exhibits three regimes: (I) \textbf{Stable Fixed Point} ($u_t \to u^*$), leading to exponential convergence of $x_t$ with a fixed rate; (II) \textbf{Period-Doubling} ($u_t$ oscillates within $(0, 2)$), preserving convergence albeit with oscillating rates; (III) \textbf{Divergence} ($u_t > 2$), where numerical instability occurs.}
\label{fig:bifurcation}
\end{figure}
Fig.~\ref{fig:bifurcation} shows the limit set of $u_t$ for varying $\gamma$ after a $500$-iteration transient. For small $\gamma$, the system converges to a stable fixed point. As $\gamma$ increases, period-doubling bifurcations occur, eventually transitioning to chaos; however, $x_t$ continues to converge because $u_t < 2$. With further increases in $\gamma$, the trajectory of $u_t$ has sharp transition escaping the stable region $(0, 2)$ (i.e., diverges) after $500$ iterations.

We formalize the stability condition corresponding linear convergence with a fixed rate in the following proposition.
\begin{proposition}[\textbf{Stability of the Acceleration Map}]
\label{prop:discrete_stability}
The map in Eq.~\eqref{eq:sharpness_map} has a non-zero fixed point $u^* = 1 - \gamma^{-\frac{1}{k-2}} \in (0, 1)$, locally asymptotically stable if and only if:
\begin{equation}
\gamma < \gamma_{\text{crit}} \coloneqq \left( \frac{k}{k-2} \right)^{k-2}.
\label{eq:gamma_crit}
\end{equation}
See Appendix~\ref{app:proof_prop_stability} for Proof.
\end{proposition}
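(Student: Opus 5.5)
The argument is a one-dimensional fixed-point computation followed by the derivative test for local stability.

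\textbf{Fixed point.} Set $\psi(u)=u$ with $u\neq 0$. This gives $\gamma(1-u)^{k-2}=1$, so $(1-u)^{k-2}=\gamma^{-1}$. Since $k-2$ is even, this has two real roots, $1-u=\pm\gamma^{-1/(k-2)}$. We select the root with $1-u>0$, namely $u^*=1-\gamma^{-1/(k-2)}$. Because $\gamma>1$, we have $\gamma^{-1/(k-2)}\in(0,1)$, and hence $u^*\in(0,1)$. The other root, $u=1+\gamma^{-1/(k-2)}$, lies in $(1,2)$. It should be mentioned as a second non-zero fixed point, but the proposition concerns the one corresponding to monotone contraction $x_{t+1}=(1-u^*)x_t$ with $1-u^*>0$.

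\textbf{Derivative at $u^*$.} We compute
\[
\psi'(u)=\gamma(1-u)^{k-3}\bigl[(1-u)-(k-2)u\bigr]=\gamma(1-u)^{k-3}\bigl(1-(k-1)u\bigr).
\]
At $u^*$ we use $\gamma(1-u^*)^{k-2}=1$, so $\gamma(1-u^*)^{k-3}=1/(1-u^*)$. Writing $s\coloneqq 1-u^*=\gamma^{-1/(k-2)}\in(0,1)$, this gives
\[
\psi'(u^*)=\frac{1-(k-1)(1-s)}{s}=\frac{(k-1)s-(k-2)}{s}=(k-1)-\frac{k-2}{s}.
\]

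\textbf{Stability condition.} For a $C^1$ scalar map, $|\psi'(u^*)|<1$ is sufficient for local asymptotic stability. The upper inequality $\psi'(u^*)<1$ is equivalent to $(k-2)/s>k-2$, i.e.\ $s<1$, which always holds. The lower inequality $\psi'(u^*)>-1$ is equivalent to $(k-2)/s<k$, i.e.\ $s>(k-2)/k$. Since $s=\gamma^{-1/(k-2)}$, this becomes $\gamma^{1/(k-2)}<k/(k-2)$, which is exactly $\gamma<\left(\frac{k}{k-2}\right)^{k-2}$. This proves the "if" direction.

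\textbf{Converse.} If $\gamma>\gamma_{\text{crit}}$, then $\psi'(u^*)<-1$, and the fixed point is repelling. The main obstacle is the boundary case $\gamma=\gamma_{\text{crit}}$, where $\psi'(u^*)=-1$ and linearization is inconclusive. This is a flip (period-doubling) bifurcation point, matching Fig.~\ref{fig:bifurcation}. I would first try to settle it with the standard Schwarzian-derivative criterion: compute $\psi''(u^*)$ and $\psi'''(u^*)$ and check whether $S\psi(u^*)=\psi'''/\psi'-\tfrac32(\psi''/\psi')^2$ is negative, which would give (weak, non-exponential) attraction. Since the proposition is phrased through the linearization, in the same spirit as Thm.~\ref{thm:adam_stability}(ii) with $r(J)<1$, the cleanest route is to interpret "locally asymptotically stable" as linear stability $|\psi'(u^*)|<1$, and to remark on the degenerate boundary case separately rather than let it obstruct the equivalence.
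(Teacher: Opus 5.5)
Your proposal is correct and follows the paper's proof essentially step for step: the same fixed point $u^*=1-\gamma^{-1/(k-2)}$, the same simplification $\psi'(u^*)=(k-1)-\frac{k-2}{s}$ obtained from $\gamma(1-u^*)^{k-2}=1$, the automatically satisfied upper bound, and the lower bound $s>\frac{k-2}{k}$, which gives $\gamma<\gamma_{\text{crit}}$. Your extra root $1+\gamma^{-1/(k-2)}\in(1,2)$ is a correct observation that the paper omits. On the boundary case, carrying out your Schwarzian test at $\gamma=\gamma_{\text{crit}}$ gives $S\psi(u^*)=-\frac{k^3(k-1)}{(k-2)^2}<0$, so the fixed point there is actually weakly (non-exponentially) attracting. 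The ``only if'' therefore holds only under the linear-stability reading $|\psi'(u^*)|<1$, which is exactly the convention the paper's proof adopts, so your fallback is the right one.
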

For RMSProp, $\gamma = 1/\sqrt{\beta_2}$, so stability requires:
\begin{equation}
\beta_2 > \beta^{\text{crit}}_{2} \coloneqq \left( \frac{k-2}{k} \right)^{2(k-2)}.
\label{eq:beta2_constraint_main}
\end{equation}
For $k=4$, this gives $\beta_2 > 0.0625$. Since practical values are typically $\beta_2 \in [0.9, 0.999]$, standard adaptive optimizers operate within the stable regime. Under this stability condition, we establish the global convergence of RMSProp.

\begin{theorem}[\textbf{Global Linear Convergence of RMSProp}]
\label{thm:rmsprop_global_convergence}
Consider minimizing $L(x) = \frac{1}{k}x^k$ ($k \ge 4$, even) using RMSProp. Assume the hyperparameter satisfies the stability condition $\beta_2 > (\frac{k-2}{k})^{2(k-2)}$ and the initialization satisfies $u_0 = \eta x_0^{k-2}/\sqrt{v_0} < 1$. Then:
\begin{enumerate}
\item[(i)] \textbf{Convergence:} The iterates $\{x_t\}$ converge to $x^*=0$.
\item[(ii)] \textbf{Linear Rate:} Convergence is linear:
\begin{equation}
\lim_{t\to \infty} \frac{x_{t+1}}{x_t}=\beta_2^{\frac{1}{2(k-2)}}
\end{equation}
\end{enumerate}
See Appendix~\ref{sec:proof of RMSProp} for Proof.
\end{theorem}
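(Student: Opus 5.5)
The plan is to reduce RMSProp to a perturbed version of the one-dimensional sharpness map \eqref{eq:sharpness_map}. Define $u_t \coloneqq \eta x_t^{k-2}/\sqrt{v_t}$; this is $\eta\lambda_t$ with $\omega_t\equiv 1$ since $\beta_1=0$. Then $x_{t+1}=(1-u_t)x_t$. Using $x_t^{2k-2}/v_t=u_t^2x_t^2/\eta^2$, the $v$-recursion gives
\begin{equation*}
u_{t+1}=\frac{u_t(1-u_t)^{k-2}}{\sqrt{\beta_2+(1-\beta_2)(1-u_t)^{2k-2}u_t^2x_t^2/\eta^2}} .
\end{equation*}
This is the $\lambda$-equation of \eqref{eq:dynamical_system} with $\omega\equiv1$. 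Since the denominator is at least $\sqrt{\beta_2}$, we get the comparison $0< u_{t+1}\le \psi(u_t)$ with $\gamma=\beta_2^{-1/2}$, valid whenever $u_t\in(0,1)$.

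\textbf{Step 1 (invariance of $(0,1)$).} On $[0,1]$, $\psi$ is maximized at $u=1/(k-1)$ with value $\gamma (k-2)^{k-2}/(k-1)^{k-1}$. Under $\beta_2>((k-2)/k)^{2(k-2)}$, i.e.\ $\gamma<(k/(k-2))^{k-2}$, this maximum is below $k^{k-2}/(k-1)^{k-1}<1$. By induction from $u_0<1$, every $u_t\in(0,1)$. Hence $|x_t|$ is strictly decreasing, $x_t$ keeps its sign, and $x_t\to c$ for some $c$.

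\textbf{Step 2 ($c=0$).} Suppose $c\neq 0$. Then $g_t\to c^{k-1}$, so $v_t\to c^{2k-2}$ and $u_t\to \eta|c|^{-1}>0$. Then $\prod_t(1-u_t)=0$, which contradicts $x_t\to c\ne0$. Hence $x_t\to 0$, proving (i). Lemma~\ref{lemma:v_limit} then applies, and the perturbation term $x_t^2u_t^2(1-u_t)^{2k-2}/\eta^2$ tends to $0$. So $u_{t+1}=\psi(u_t)+\delta_t$ with $\delta_t\to0$ uniformly in $u_t\in(0,1)$.

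\textbf{Step 3 ($u_t\to u^*$), the main obstacle.} Proposition~\ref{prop:discrete_stability} only gives local stability of $u^*=1-\gamma^{-1/(k-2)}$, but we need attraction of the whole orbit.

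First, bound $u_t$ away from $0$. Near $0$ we have $u_{t+1}\approx\gamma u_t$ with $\gamma>1$ for large $t$, so $\liminf u_t>0$. Combined with Step 1, the $\omega$-limit set $\Omega$ of $(u_t)$ is a compact subset of $(0,1)$.

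Second, since the perturbation is asymptotically autonomous, $\Omega$ is invariant under $\psi$ and internally chain transitive. So it suffices to show that $u^*$ attracts every point of $(0,1)$ under the unperturbed map $\psi$, and has no other chain-recurrent set there.

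For this I would use that $\psi$ is unimodal on $(0,1)$, with repelling fixed point $0$ and $|\psi'(u^*)|<1$ by Proposition~\ref{prop:discrete_stability}. Its Schwarzian derivative should be negative, since $\log\psi$ is a sum of logs of linear factors. Singer's theorem then implies that every attracting cycle attracts a critical point or boundary point. One must check directly that the critical point $1/(k-1)$ is attracted to $u^*$, for example by showing $\psi^2(u)-u$ has no zeros in $(0,1)$ other than $u^*$. That rules out $2$-cycles, and for unimodal maps the absence of period-$2$ orbits forces all orbits to converge to fixed points. If the Schwarzian route gets messy, the fallback is the elementary check that $\psi^2(u)>u$ for $u<u^*$ and $\psi^2(u)<u$ for $u>u^*$, which yields global convergence directly.

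\textbf{Step 4 (rate).} With $u_t\to u^*$, we get $x_{t+1}/x_t=1-u_t\to\gamma^{-1/(k-2)}=\beta_2^{\frac{1}{2(k-2)}}$, which proves (ii).
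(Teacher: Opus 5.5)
Your proposal is correct and follows the paper's proof essentially step for step. The shared steps are: the comparison $u_{t+1}\le\psi(u_t)$ with the maximum of $\psi$ at $1/(k-1)$, which keeps $u_t\in(0,1)$ and forces $x_t\to 0$; global attraction of $u^*$ under the unperturbed map via absence of $2$-cycles plus Coppel's theorem; and then absorption of the vanishing perturbation. The one difference is that the paper does the last step by an explicit limit-point argument, whereas you invoke internal chain transitivity of $\omega$-limit sets of asymptotically autonomous maps.

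Like the paper, which calls it ``readily verified'', you leave the no-$2$-cycle check undone, and your fallback sign condition on $\psi^2-\mathrm{id}$ is only a restatement of it that still needs Coppel. The check does hold. Write a putative $2$-cycle $\{a,b\}$ as $1-a=\mu s$, $1-b=\mu/s$ with $\mu=\gamma^{-1/(k-2)}$ and $s=e^{x}$. After cancelling the factor $s-1$ belonging to $u^*$, the cycle equation reduces to $2\mu\cosh\frac{(k-1)x}{2}=(1-\mu)\sum_{j=1}^{k-2}\cosh\bigl((j-\tfrac{k-1}{2})x\bigr)$. Its right side is at most $(1-\mu)(k-2)\cosh\frac{(k-1)x}{2}$, so there is no solution once $\mu>\frac{k-2}{k}$, which is exactly $\beta_2>(\frac{k-2}{k})^{2(k-2)}$.
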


\begin{remark}[\textbf{Fundamental Complexity Separation}]
Thm.~\ref{thm:rmsprop_global_convergence} reveals a fundamental separation: GD requires iteration complexity $T_\epsilon \sim \epsilon^{-(k-2)}$ (exponential, exploding with $k$), while adaptive methods achieves \textbf{linear complexity} $T_\epsilon \sim (k-2) \ln(1/\epsilon)$. Adaptivity flattens the complexity class, reducing dependence on $k$ from exponential to linear.
\end{remark}

\section{Phase Diagram of Adam Training Behaviors}
Adam's non-convergence has been empirically observed to exhibit oscillations or significant spikes~\cite{ma2022qualitative}. We classify behavior under different hyperparameters by examining the fixed-point properties. Following Sec.~\ref{sec:discretization_constraints}, we present the phase diagram characterizing Adam's fixed-point properties under $\beta_2 > \left( \frac{k-2}{k} \right)^{2(k-2)}$. 

\begin{theorem}[\textbf{Phase Transitions with Fixed-Point Behavior}]\label{thm:fixed_point_property}
Consider the three-dimensional dynamical system governing Adam in Eq.~\eqref{eq:dynamical_system}. Let $\beta_2 > \left( \frac{k-2}{k} \right)^{2(k-2)}$. The following properties hold:
\begin{enumerate}
\item[(i)] If $\beta_1>\beta_2^{\frac{k-1}{2(k-2)}}$, no non-trivial fixed point exists; the only fixed points are the trivial solutions $(\omega^*, \lambda^*, x^*) = (1, 0, c)$ for any constant $c$.
\item[(ii)] If $\beta_2^{\frac{k}{2(k-2)}}<\beta_1<\beta_2^{\frac{k-1}{2(k-2)}}$, the non-trivial fixed point~\eqref{eq:stable_fixed_point} exists but is asymptotically unstable.
\item[(iii)] If $\beta_1<\beta_2^{\frac{k}{2(k-2)}}$, the non-trivial fixed point~\eqref{eq:stable_fixed_point} exists and is asymptotically stable.

See Appendix~\ref{thm:adam radius} for Proof.
\end{enumerate}
\end{theorem}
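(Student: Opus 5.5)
The plan is to treat this as a corollary of Thm.~\ref{thm:adam_stability}. Parts (i)--(iii) follow from the existence and stability criteria proved there, once we show that the hypothesis $\beta_2 > \left(\frac{k-2}{k}\right)^{2(k-2)}$ makes the second stability inequality \eqref{eq:stability_condition_2} hold automatically. Throughout, write $s \coloneqq \beta_2^{\frac{1}{2(k-2)}} \in (0,1)$ and $\rho \coloneqq 1-\eta\omega\lambda$.

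\textbf{Fixed points (part (i)).} We classify the fixed points of \eqref{eq:dynamical_system} directly.
\begin{itemize}
\item The $x$-equation forces either $\rho = 1$ or $x^* = 0$.
\item If $\rho=1$, then $\omega\lambda=0$. The case $\omega=0$ is impossible, since the $\omega$-equation would give $0 = 1-\beta_1$. Hence $\lambda=0$, the $\omega$-equation reduces to $\omega = \beta_1\omega + 1-\beta_1$, so $\omega=1$, and $x^*=c$ is arbitrary. This is the trivial family.
\item If $x^*=0$ and $\lambda\neq 0$, the $\lambda$-equation gives $\rho^{k-2} = \sqrt{\beta_2}$. We take the branch $\rho = s \in (0,1)$, i.e.\ the monotone-decrease branch $0<\omega\lambda<2/\eta$ of \eqref{eq:loss_stability_condition} that defines \eqref{eq:stable_fixed_point}.
\item On this branch the $\omega$-equation reads $\omega\,(1-\beta_1 s^{-(k-1)}) = 1-\beta_1$. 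Since $\lambda^* = (1-s)/(\eta\omega^*)$ must be positive, we need $\omega^*>0$, which holds iff $\beta_1 < s^{k-1} = \beta_2^{\frac{k-1}{2(k-2)}}$.
\end{itemize}
So when $\beta_1 > \beta_2^{\frac{k-1}{2(k-2)}}$ only the trivial fixed points remain; this is exactly Thm.~\ref{thm:adam_stability}(i). One point needs care: $k-2$ is even, so $\rho=-s$ also solves $\rho^{k-2}=\sqrt{\beta_2}$. I would state explicitly that the theorem refers to the fixed point \eqref{eq:stable_fixed_point}, i.e.\ the branch $\rho>0$, as the paper does.

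\textbf{Removing the second stability condition.} We show that under the hypothesis on $\beta_2$ the right-hand side of \eqref{eq:stability_condition_2} is negative.
\begin{itemize}
\item Multiplying numerator and denominator by $s^k$, the right-hand side becomes
\[
\frac{\big((k-2)s^{-1}-k\big)s^{k}}{k-(k-2)s}.
\]
\item The denominator is positive because $s<1$.
\item The numerator is negative iff $s > \frac{k-2}{k}$, i.e.\ iff $\beta_2 > \left(\frac{k-2}{k}\right)^{2(k-2)}$, which is precisely the standing hypothesis.
\end{itemize}
Hence \eqref{eq:stability_condition_2} holds for every $\beta_1 \ge 0$. By Thm.~\ref{thm:adam_stability}(ii), asymptotic stability is then equivalent to \eqref{eq:stability_condition_1} alone, namely $\beta_1 < \beta_2^{\frac{k}{2(k-2)}}$.

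\textbf{Parts (ii) and (iii).} Since $s<1$ and $\frac{k}{2(k-2)} > \frac{k-1}{2(k-2)}$, we have $\beta_2^{\frac{k}{2(k-2)}} < \beta_2^{\frac{k-1}{2(k-2)}}$. So the regimes in (ii) and (iii) both lie inside the existence region.
\begin{itemize}
\item In (iii), existence holds and \eqref{eq:stability_condition_1} holds, so the fixed point is asymptotically stable.
\item In (ii), existence holds but \eqref{eq:stability_condition_1} fails, so $r(J)\ge 1$. To get instability rather than mere non-asymptotic stability, I would check that $r(J)>1$ strictly on the open interval. An eigenvalue crosses the unit circle only on the boundary curve $\beta_1=\beta_2^{\frac{k}{2(k-2)}}$: the other boundary \eqref{eq:stability_condition_2} is excluded by the previous step. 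A continuity argument in $\beta_1$ then gives $r(J)>1$ throughout the open interval.
\end{itemize}

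The main obstacle is this last point, together with the bookkeeping of the $\rho=-s$ branch. Both depend on how the proof of Thm.~\ref{thm:adam_stability} characterizes the boundary $r(J)=1$, presumably via the Jury conditions on $\det J$ and $\operatorname{tr} J$. I would reuse those conditions directly. I expect $\det J<1$ to be the condition that turns into \eqref{eq:stability_condition_1}, with strict violation giving $|\det J|>1$ and hence $r(J)>1$.
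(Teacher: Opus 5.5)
Your proposal is correct and is essentially the paper's own proof (Appendix, statements 3 and 6--8). It has the same three steps: existence from $\omega^*>0$; the $\beta_2$-hypothesis disposing of the Jury condition $1+\operatorname{tr}J+\det J>0$ (your check that the right-hand side of \eqref{eq:stability_condition_2} is negative is the same as the paper's $J_{22}=1-(k-2)(s^{-1}-1)>-1$); and instability in (ii) from $\det J=\beta_1 s^{-k}>1$. Use that determinant bound outright in place of your continuity step, which presupposes the description of $\{r(J)=1\}$ it is meant to supply.

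Your worry about the $\rho=-s$ branch is justified, and the paper overlooks it. The point $\omega^\dagger=\frac{1-\beta_1}{1+\beta_1 s^{-(k-1)}}$, $\lambda^\dagger=\frac{1+s}{\eta\omega^\dagger}$, $x^\dagger=0$ is a fixed point with $\omega^\dagger,\lambda^\dagger>0$ for every $\beta_1\in[0,1)$, so (i) is false as literally worded. However, the $(\omega,\lambda)$-block of the Jacobian there satisfies $1-\operatorname{tr}+\det=-(k-2)(1+s)(s^{-1}+\beta_1 s^{-k})<0$, which forces a real eigenvalue above $1$. That point is therefore always unstable, and restricting to the $\rho>0$ branch, as you propose, costs nothing.
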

Fig.~\ref{fig:behavior_phase} illustrates the phase diagram with three distinct regions. Notably, although the non-trivial fixed point is unstable in certain regimes, its existence can still attract parameters effectively during the early phase. This leads to a transient convergence followed by an escape, manifesting as the ``spike'' phenomenon discussed in Sec.~\ref{sec:typical_cases}.

\begin{figure}[!htb]
    \centering
    \includegraphics[width=0.7\linewidth]{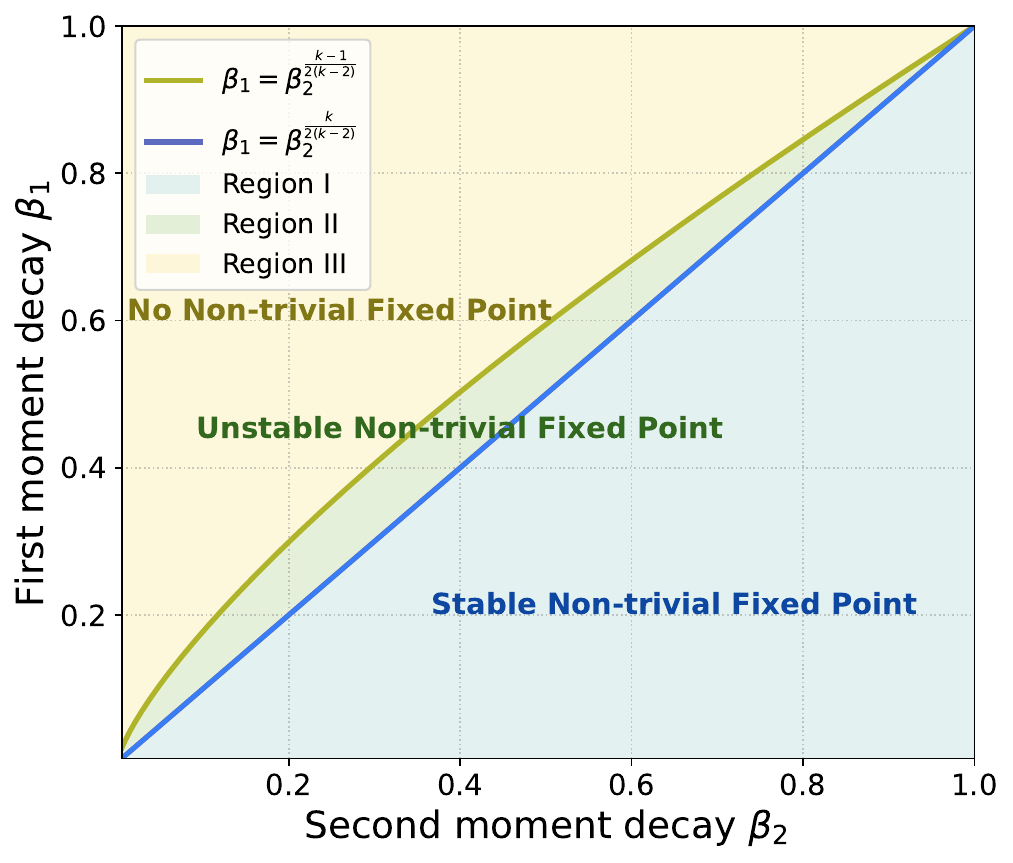}
    \caption{Phase diagram for $k=4$ according to Thm.~\ref{thm:fixed_point_property}.}
    \label{fig:behavior_phase}
\end{figure}

\subsection{Typical Cases over the Phase Diagram}
\label{sec:typical_cases}
\begin{figure*}[!t]
    \centering
    \subfloat[$\beta_1=0.9, \beta_2=0.91$]{\includegraphics[height=0.2\linewidth]{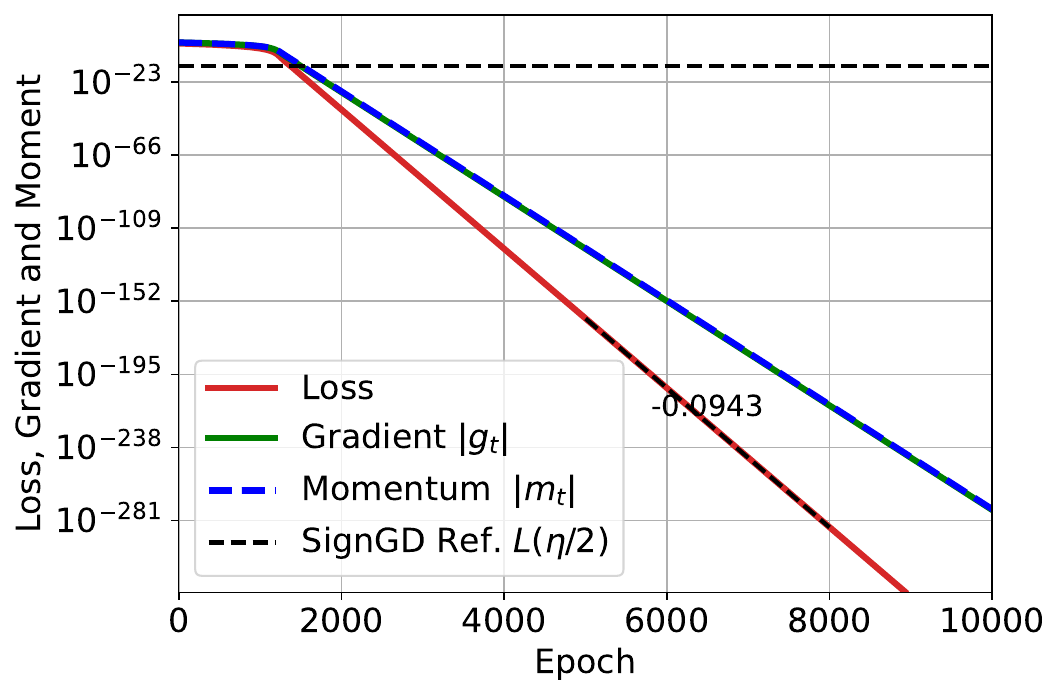}} \hfill
    \subfloat[$\beta_1=0.9, \beta_2=0.895$]{\includegraphics[height=0.2\linewidth]{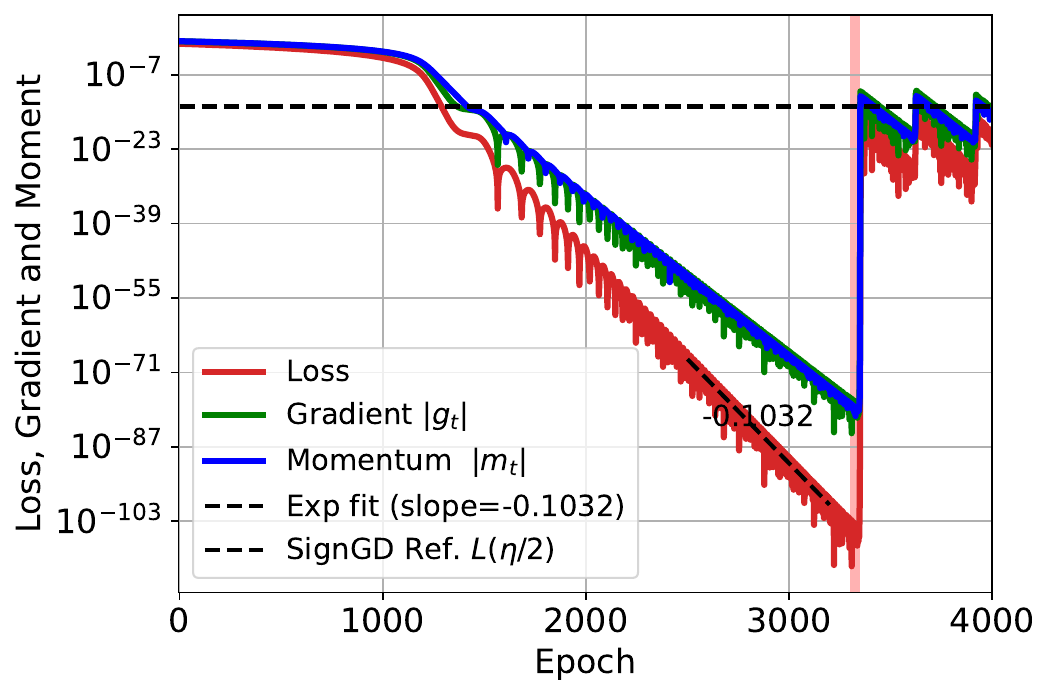}} \hfill   \subfloat[$\beta_1=0.9, \beta_2=0.8$]{\includegraphics[height=0.2\linewidth]{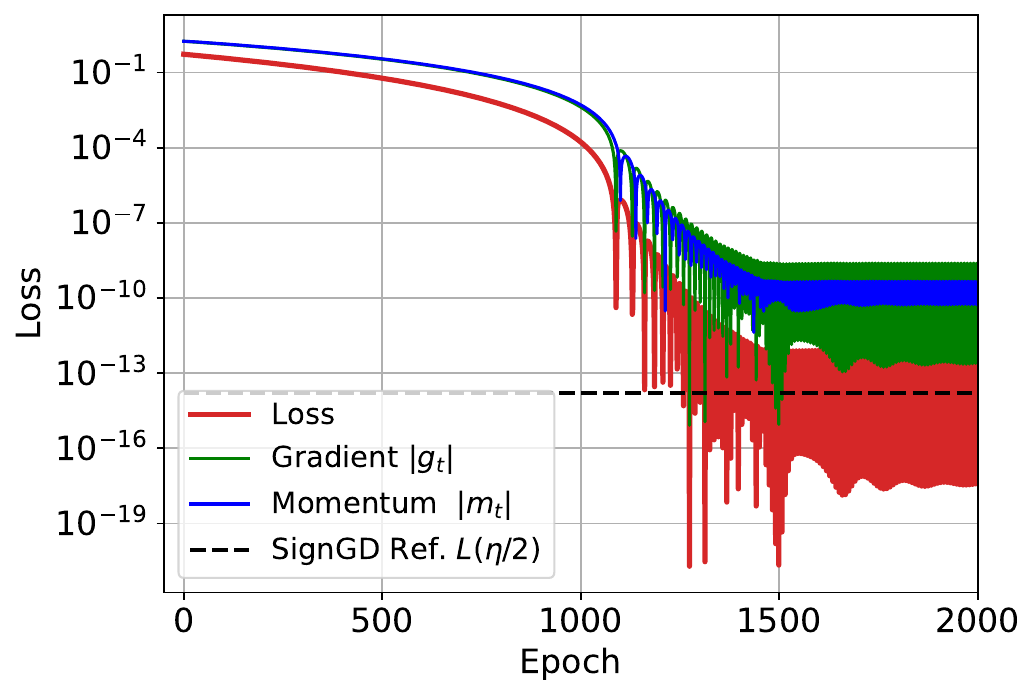}}\\
    \subfloat[$\beta_1=0.9, \beta_2=0.91$]{\includegraphics[height=0.2\linewidth]{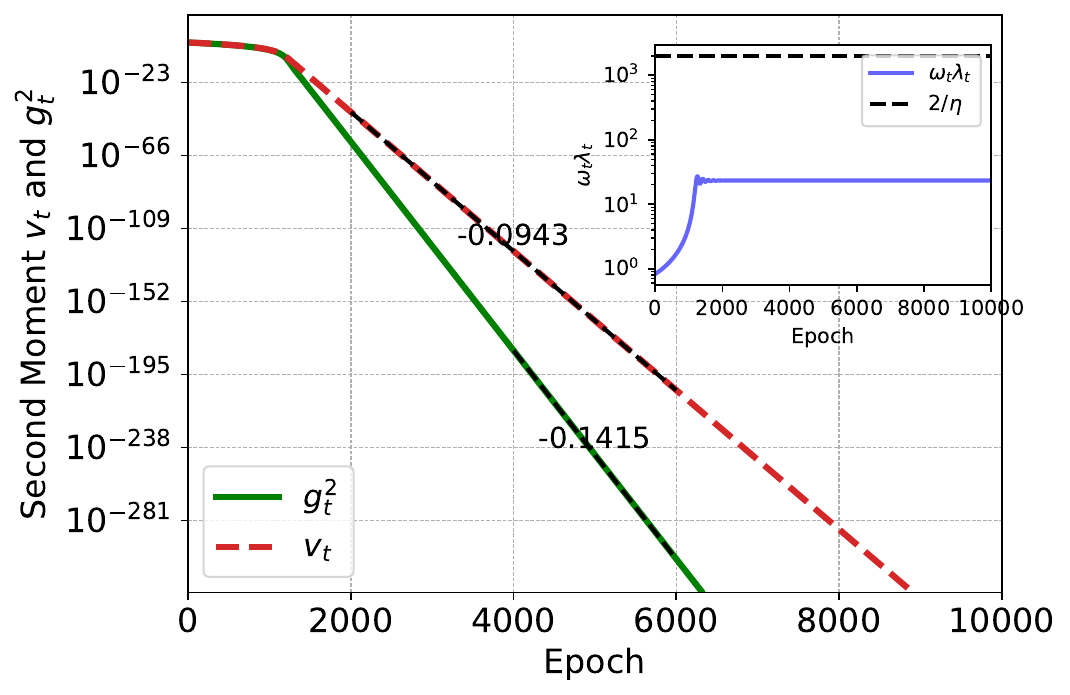}} \hfill  
    \subfloat[$\beta_1=0.9, \beta_2=0.895$]{\includegraphics[height=0.2\linewidth]{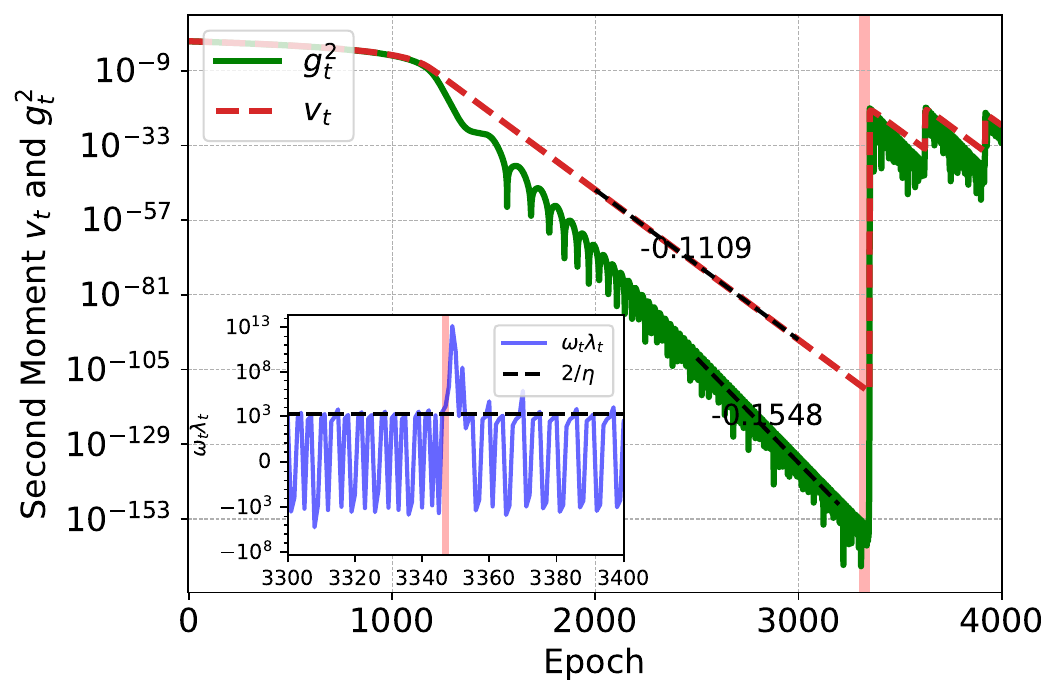}} \hfill   \subfloat[$\beta_1=0.9, \beta_2=0.8$]{\includegraphics[height=0.2\linewidth]{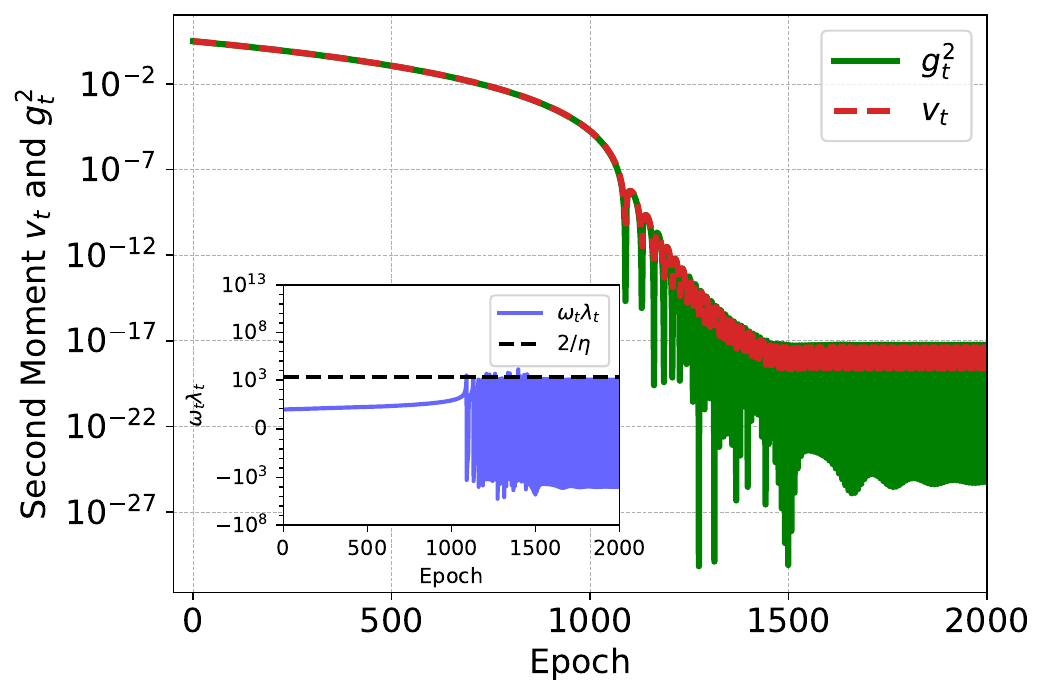}}\\
    \subfloat[$\beta_1=0.9, \beta_2=0.91$]{\includegraphics[height=0.21\linewidth]{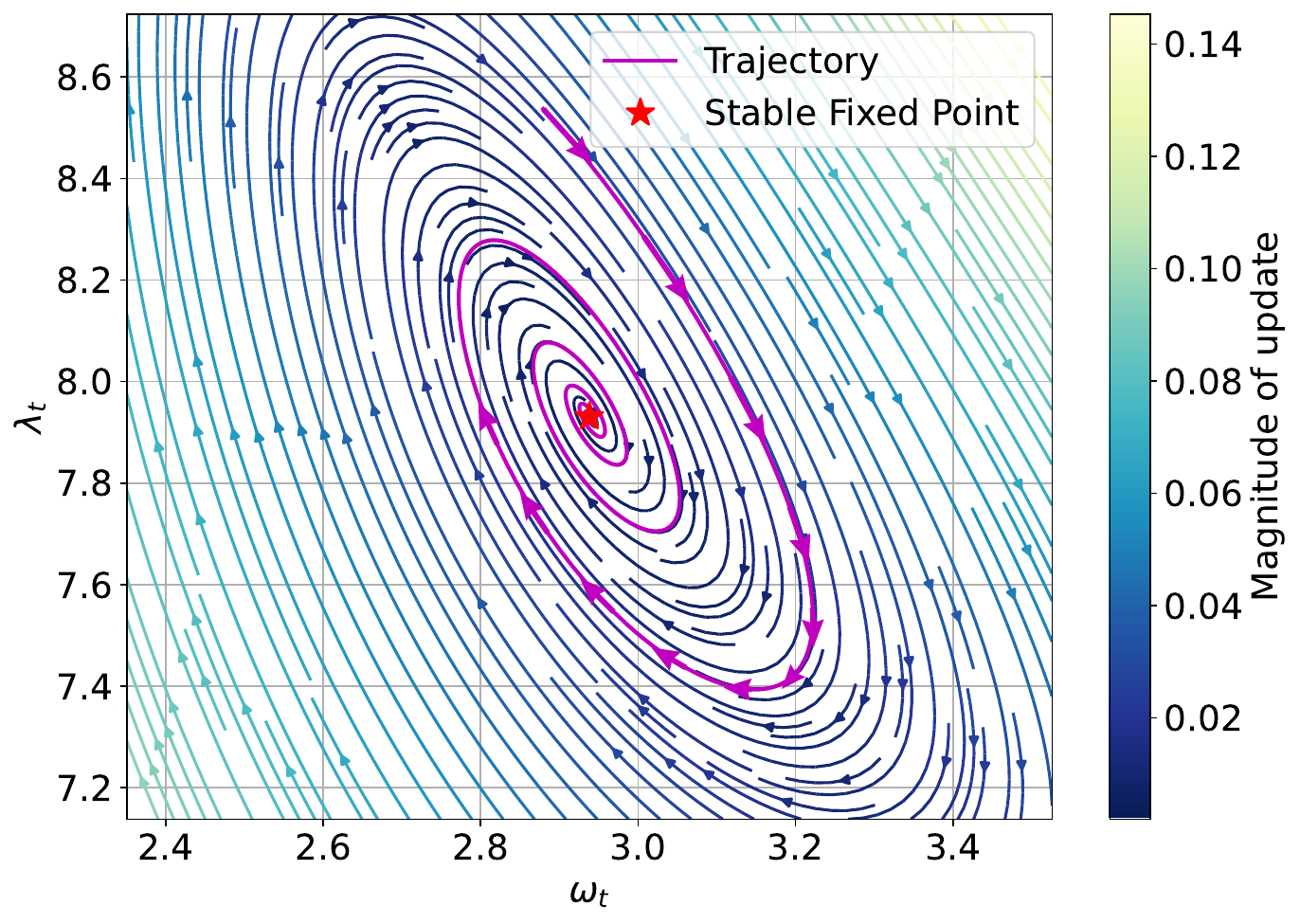}}   \hfill
    \subfloat[$\beta_1=0.9, \beta_2=0.895$]{\includegraphics[height=0.21\linewidth]{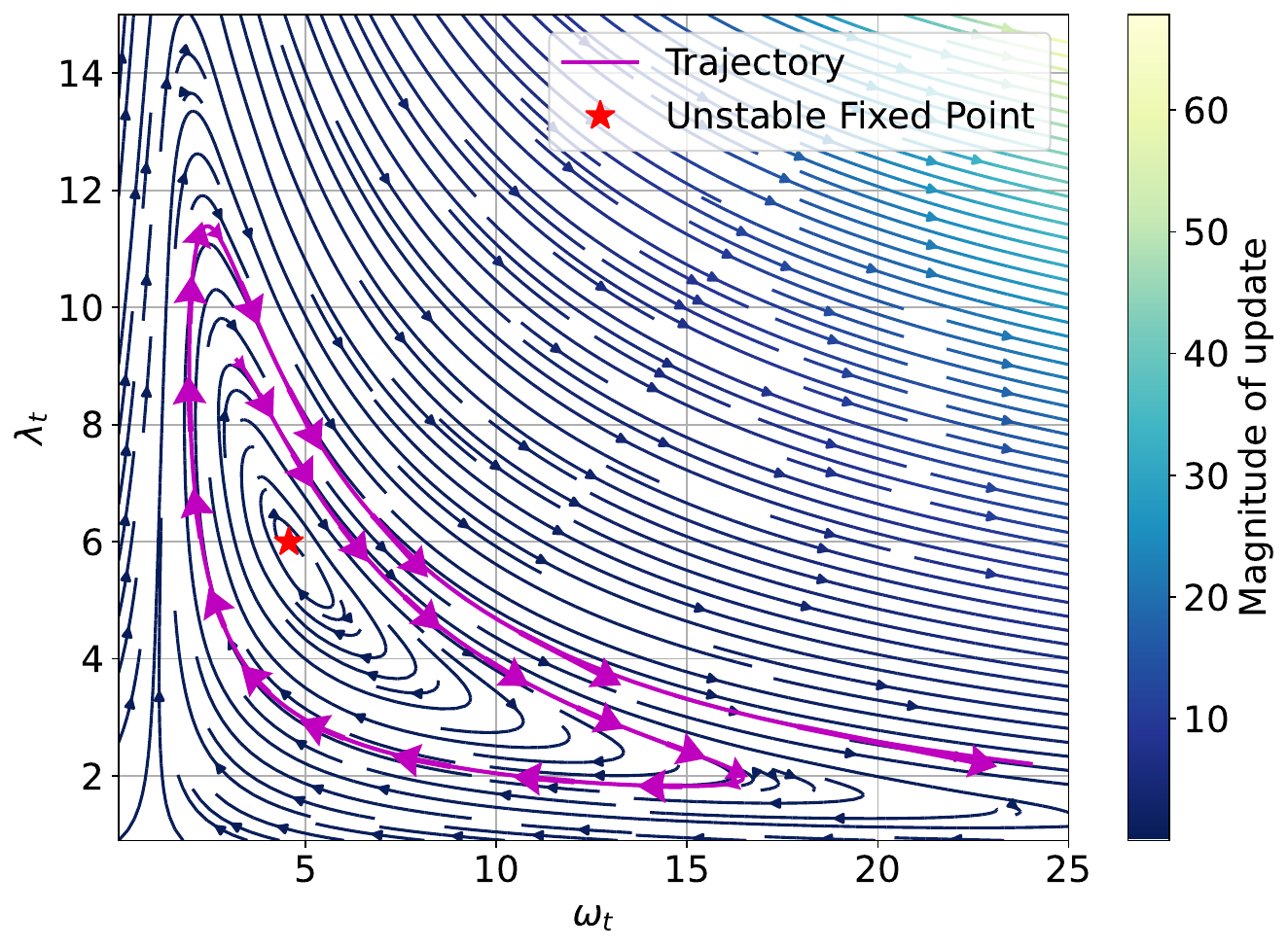}}    \hfill
    \subfloat[$\beta_1=0.9, \beta_2=0.8$]{\includegraphics[height=0.21\linewidth]{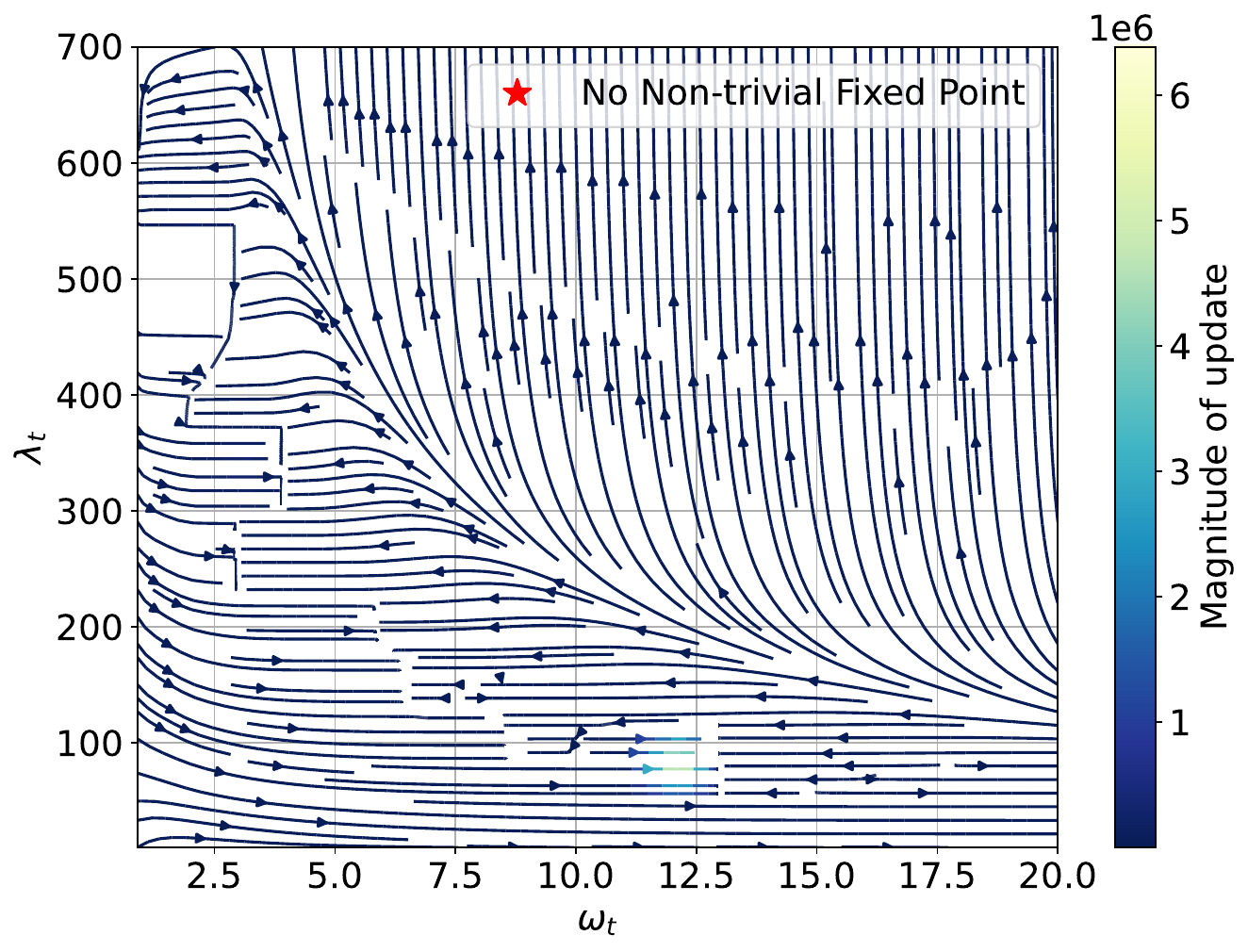}}
    \caption{Three dynamical behaviors of Adam on $L(x)=\frac{1}{4} x^4$ with $\eta=0.001$.
    \textbf{Top Row:} Evolution of training loss, gradient and first moment. The text online shows the slope of the exponential fit.
    \textbf{Middle Row:} Evolution of the second moment estimate $v_t$ versus the squared gradient $g_t^2$. Insets show the evolution of the stability metric $\omega_t\lambda_t$ relative to the theoretical threshold $2/\eta$.
    \textbf{Bottom Row:} Vector fields of $\omega_t$ and $\lambda_t$; purple lines represent evolutionary trajectories. 
    \textbf{(Left) Regime I: Stable Exponential Convergence.} $v_t$ decouples from $g_t^2$, enabling stable exponential acceleration.
    \textbf{(Middle) Regime II: Exponential Convergence with Spike.} $v_t$ initially decouples, driving acceleration toward the fixed point. However, drastic violation of the stability condition ($\omega_t\lambda_t > 2/\eta$) due to fixed-point instability triggers a loss spike.
    \textbf{(Right) Regime III: SignGD-like Oscillation.} $v_t$ tracks $g_t^2$ closely (tight coupling), preventing exponential convergence. Violations of the stability threshold trigger instantaneous corrections, resulting in oscillations.}
    \label{fig:three_regimes}
\end{figure*}

Fig.~\ref{fig:three_regimes} illustrates typical behaviors for $\beta_1=0.9$ with $\beta_2 \in \{0.91, 0.895, 0.8\}$ (see Fig.~\ref{fig:three_regimes2} for additional examples). While $m_t$ and $g_t$ remain tightly coupled across all cases, $v_t$ and $g_t^2$ exhibit distinct coupling patterns.

\textbf{Regime I: Stable Exponential Convergence.}
Occurs when $\beta_1<\beta_2^{\frac{k}{2(k-2)}}$ (stable non-trivial fixed point, Fig.~\ref{fig:three_regimes}(a, d, g)).
\begin{itemize}
    \item \textbf{Behavior:} The loss converges exponentially to machine precision.
    \item \textbf{Mechanism (Decoupling):} As $g_t$ rapidly vanishes, $v_t$ decouples from $g_t^2$ and follows autonomous decay: $v_t \approx \beta_2 v_{t-1}$. Fig.~\ref{fig:three_regimes} (d) shows $\log v_t$ slope matching $\ln(\beta_2)\approx -0.0943$. By Thm.~\ref{thm:rmsprop_global_convergence}, this geometric decay acts as an exponential learning rate schedule, predicting loss convergence rate $\frac{k\ln \beta_2}{2(k-2)}\approx -0.0943$, precisely matching Fig.~\ref{fig:three_regimes} (a).
\end{itemize}

\textbf{Regime II: Exponential Convergence with Late-Phase Spikes.}
Occurs when $\beta_2^{\frac{k-1}{2(k-2)}}<\beta_1<\beta_2^{\frac{k}{2(k-2)}}$ (unstable non-trivial fixed point, Fig.~\ref{fig:three_regimes} (b, e, h)).
\begin{itemize}
    \item \textbf{Behavior:} Initial exponential convergence below the SignGD threshold (dashed line $L(\eta/2)$), interrupted by a violent loss spike.
    \item \textbf{Mechanism (Decoupling and Instability):} Like Regime I, $v_t$ initially decouples from $g_t^2$ (Fig.~\ref{fig:three_regimes}(e) shows decay slope $\approx \ln \beta_2$), driving acceleration. Fig.~\ref{fig:three_regimes} (h) confirms that the parameter trajectory initially evolves around the nontrivial fixed point (see also Fig.~\ref{fig:three_regimes2}(h); intuitive explanation in Appendix~\ref{app:decoupling_mechanism}). However, fixed-point instability triggers $\omega_t \lambda_t > 2/\eta$ (Fig.~\ref{fig:three_regimes}(e) subplot), causing $|x_t|$ divergence. Since $\sqrt{v_t}$ continues autonomous decay, the scaling $\lambda_t \coloneqq \frac{x_t^{k-2}}{\sqrt{v_t}}$ grows rapidly. The pathology arises from \textbf{response delay}: $v_t$ requires several iterations to re-couple with the surging $g_t^2$, manifesting as a transient spike, consistent with the findings of~\citet{bai2025adaptive}
\end{itemize}

\textbf{Regime III: SignGD-like Oscillation.}
Occurs when $\beta_1>\beta_2^{\frac{k-1}{2(k-2)}}$ (no non-trivial fixed point, Fig.~\ref{fig:three_regimes} (c, f, i)).
\begin{itemize}
    \item \textbf{Behavior:} No exponential convergence; loss saturates around $L(\eta/2)$ with oscillatory behavior.
    \item \textbf{Mechanism (Tight Coupling):} For small $\beta_2$, fast $v_t$ adaptation keeps it coupled with $g_t^2$ (Fig.~\ref{fig:three_regimes}(f)). Since $v_t \sim g_t^2$, the adaptive step $\eta / \sqrt{v_t} \sim \eta / |g_t|$ cancels gradient magnitude, yielding $x_{t+1} \approx x_t - \eta \cdot \mathrm{sgn}(g_t)$. This provides \textbf{instantaneous negative feedback}: when instability triggers ($\omega_t \lambda_t > 2 /\eta$, Fig.~\ref{fig:three_regimes}(f) subplot) and $|g_t|$ increases, $v_t$ rises simultaneously, suppressing instability into low-amplitude oscillations.
\end{itemize}

\subsection{Experimental Demonstration of the Phase Diagram}
\label{sec:experimental_phase_diagram}

We verify theoretical phase diagram through grid search over $\beta_1, \beta_2$ on $L(x) = \frac{1}{4}x^4$, recording minimum loss $\min_t L(x_t)$ and final loss $L(x_T)$. Fig.~\ref{fig:Adam_empirical_phase}(a-b) shows three regions (yellow = low loss, blue = high loss) aligning with theoretical predictions:
\begin{itemize}
    \item \textbf{Regime I (Stable):} When $\beta_1 < \beta_2^{1.0}$ (for $k=4$), both minimum and final losses are low (yellow), confirming stable exponential convergence.
    \item \textbf{Regime II (Spikes):} For $\beta_2^{1.0} < \beta_1 < \beta_2^{0.75}$, Fig.~\ref{fig:Adam_empirical_phase}(a) shows low minimum loss (transient  convergence) while Fig.~\ref{fig:Adam_empirical_phase}(b) shows high final loss (spike mechanism).
    \item \textbf{Regime III (Oscillation):} When $\beta_1 > \beta_2^{0.75}$, both losses remain high, indicating no exponential phase.
\end{itemize}
To definitively attribute the acceleration in Regimes I and II to the decoupling mechanism, we also visualize the coupling ratio $R_t^{(v)} = v_t / g_t^2$. Fig.~\ref{fig:Adam_empirical_phase}(c-d) shows coupling ratio $R_t^{(v)}$. In Regimes I and II, $\max_t R_t^{(v)}$ reaches high values, confirming decoupling drives acceleration and low loss. In Regime III, the ratio stays low ($\approx 1$, blue), confirming tight coupling prevents exponential speedup.
\begin{figure}[!htb]
    \centering
    \subfloat[Minimum Loss $\min_t L(x_t)$]{
        \includegraphics[width=0.48\linewidth]{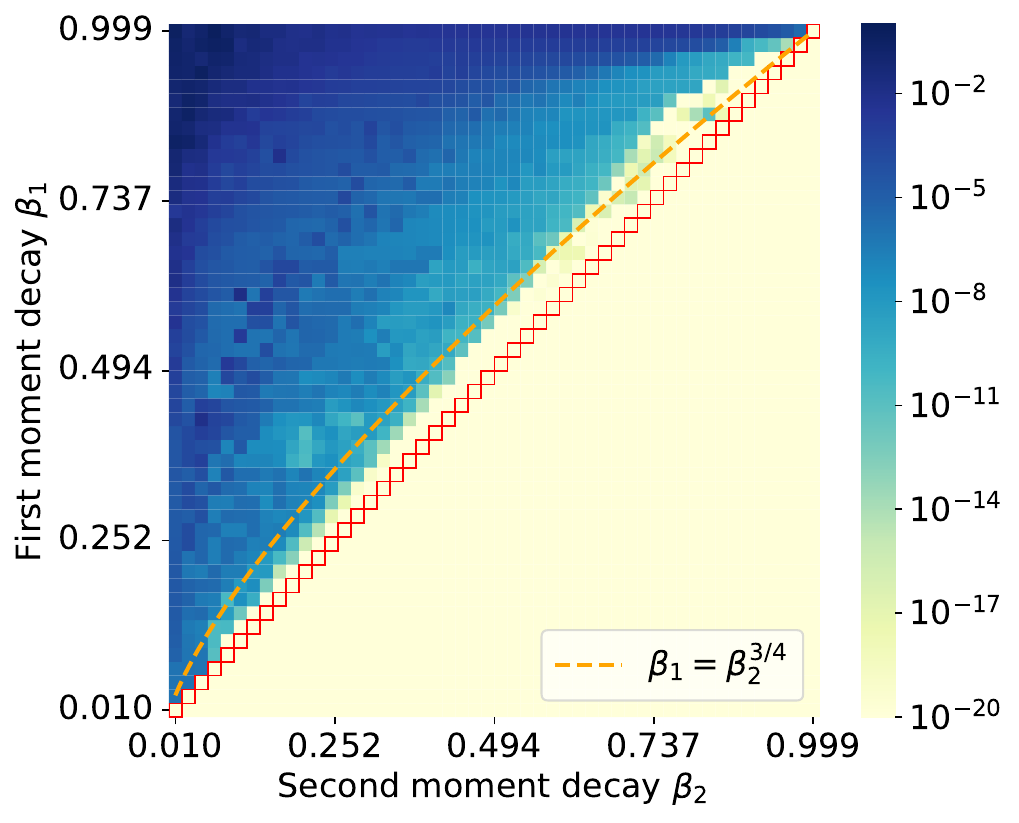}}
    \hfill
    \subfloat[Final Loss $L(x_T)$]{
        \includegraphics[width=0.48\linewidth]{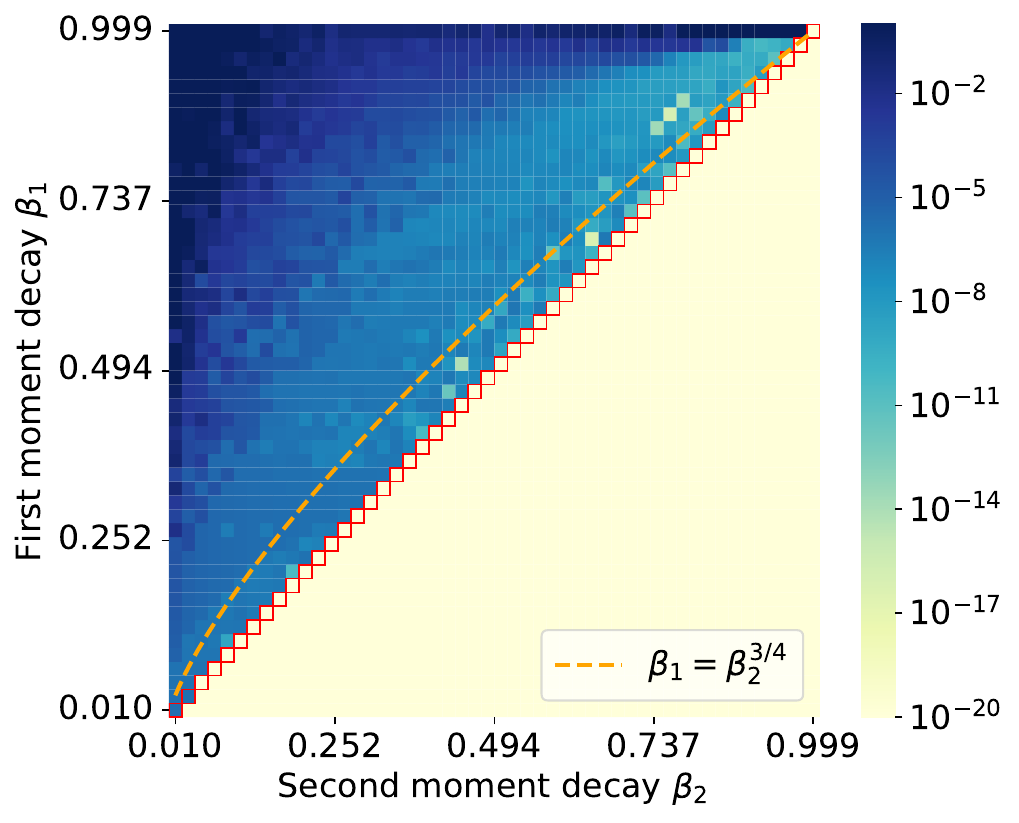}}
    \\
    \subfloat[Maximum Ratio $\max_t \frac{v_t}{g_t^2}$]{
        \includegraphics[width=0.48\linewidth]{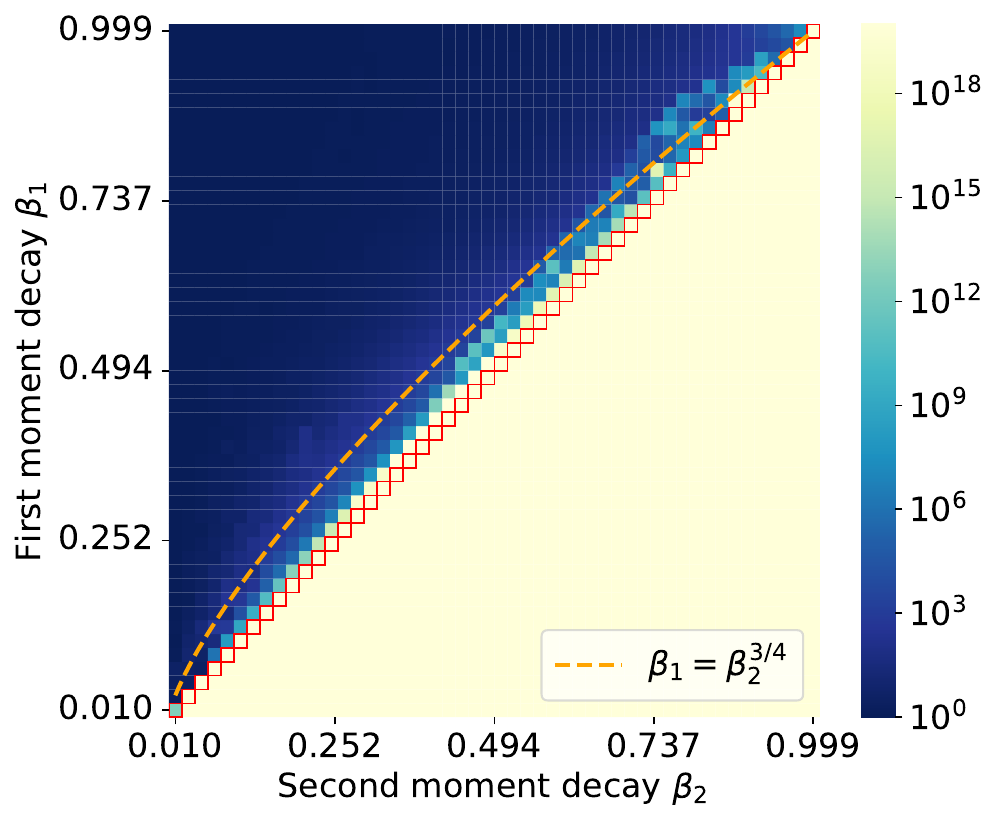}}
    \hfill
    \subfloat[Final Ratio $\frac{v_T}{g_T^2}$]{
        \includegraphics[width=0.48\linewidth]{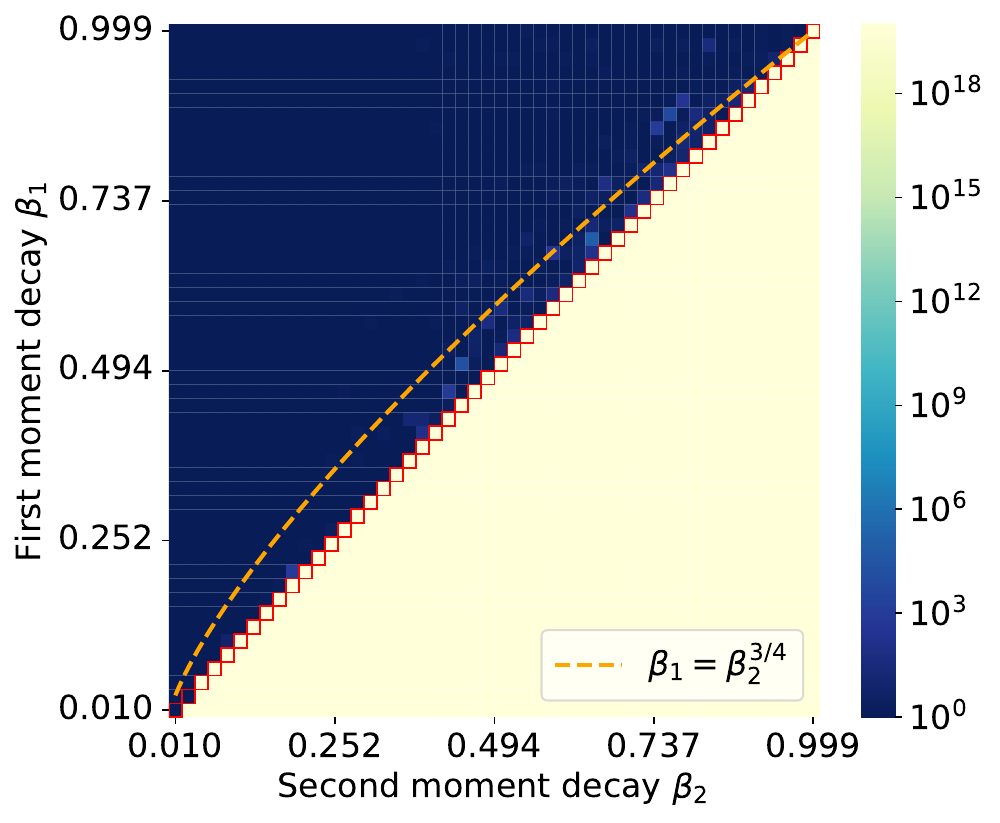}}
    \caption{\textbf{Empirical phase diagram on $L(x) = \frac{1}{4}x^4$.} Sliding average filter (window size $10$) applied for smoothing.
    \textbf{(a)} Minimum loss during training.
    \textbf{(b)} Final loss shows Regime II instability (blue sub-region of (a)).
    \textbf{(c-d)} Coupling ratios confirm exponential regions coincide with $v_t$-$g_t^2$ decoupling.}
    \label{fig:Adam_empirical_phase}
\end{figure}

\section{Discussion and Conclusion} 

\begin{figure}[!thb]
    \centering
    \subfloat[\scriptsize $\frac{1}{4}(x-y)^2+\frac{1}{4}(x+y)^2$]{
    \includegraphics[width=0.4\linewidth]{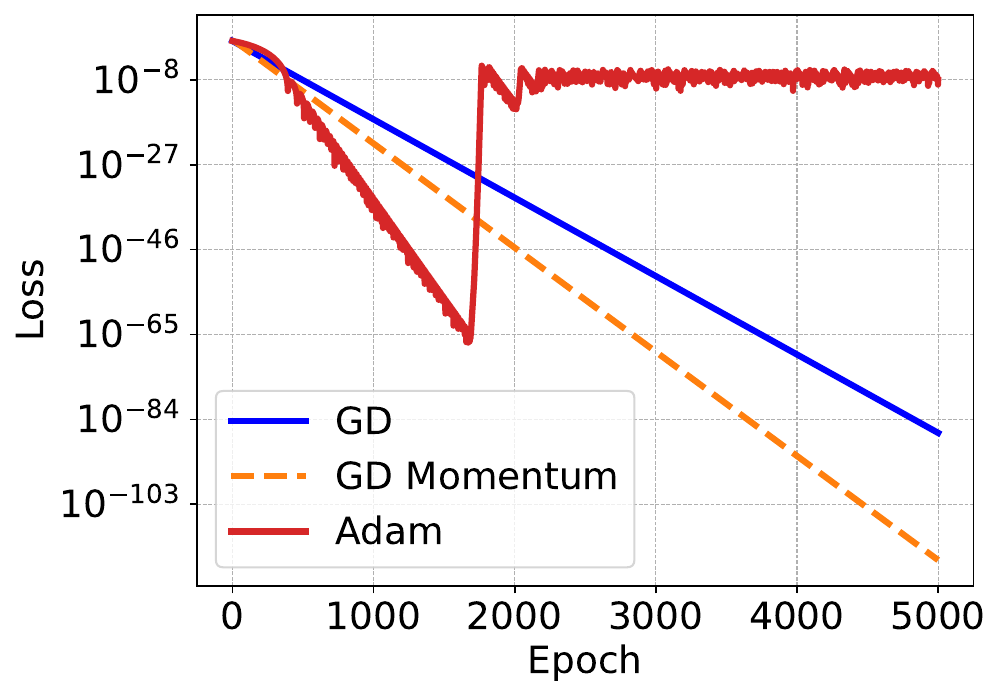}}\hspace{0.4cm}
    \subfloat[\scriptsize $\frac{1}{4}x^2+\frac{1}{16}y^4$]{
    \includegraphics[width=0.4\linewidth]{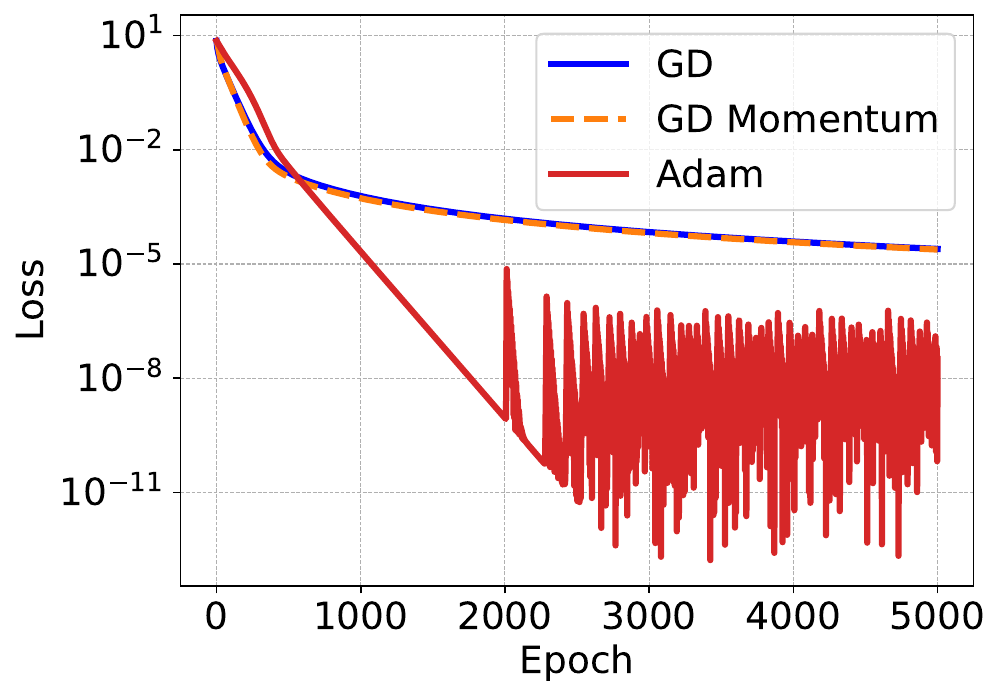}}\\
    \subfloat[\scriptsize  $\frac{1}{4}(x-y)^2+\frac{1}{16}(x+y)^4$]{
    \includegraphics[width=0.4\linewidth]{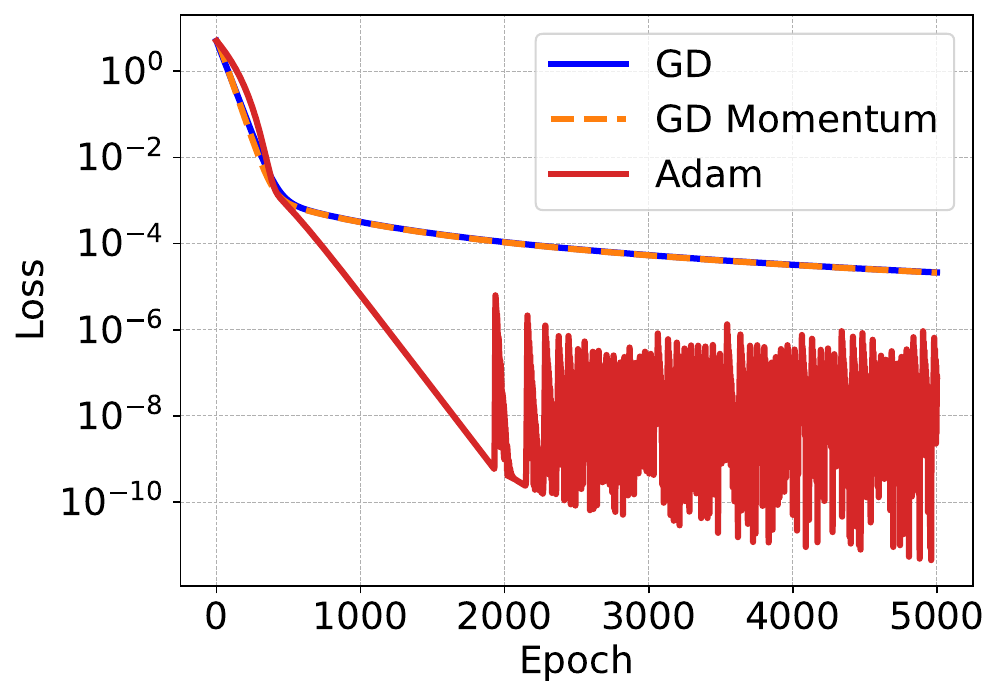}}\hspace{0.4cm}
    \subfloat[\scriptsize  $\frac{1}{16}(x-y)^4+\frac{1}{16}(x+y)^4$]{
    \includegraphics[width=0.4\linewidth]{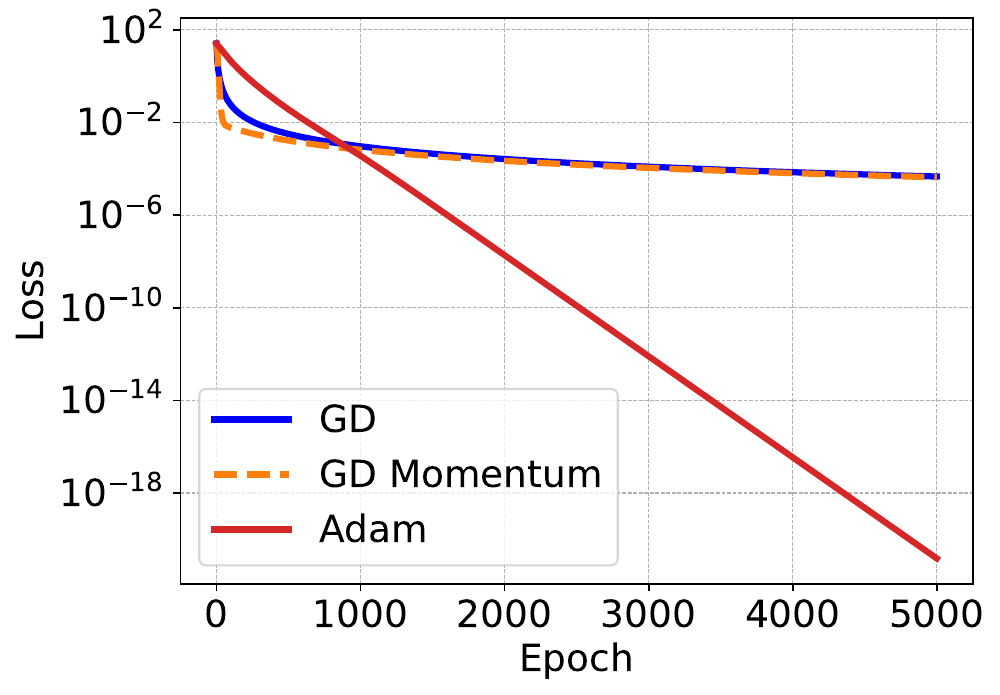}} 
\caption{Optimization trajectories under different couplings of quadratic and quartic loss functions.}
\label{fig:coupling}
\end{figure}

\textbf{Coupling of Degenerate and Non-Degenerate Modes.} 
High-dimensional optimization landscapes often couple quadratic and higher-order (degenerate) terms. Fig.~\ref{fig:coupling} demonstrates Adam's distinct advantages over GD and Momentum in such mixed-curvature scenarios. When degenerate directions are present, GD and Momentum are bottlenecked by slow, sublinear convergence, whereas Adam maintains rapid optimization. However, quadratic components can induce instability, manifesting as loss spikes (see the detailed analysis of quadratic case in Appendix~\ref{app:quadratic_analysis}). These spikes, while disrupting monotonicity, stem from the same aggressive scaling that enables Adam to escape degenerate plateaus efficiently. In practice, learning rate decay readily mitigates this instability in quadratic directions while preserving Adam's speed advantage in degenerate regions.

\paragraph{Relevance to Deep Learning Optimization.}  Adam's advantages over GD vary significantly across architectures: more pronounced in Transformers for language tasks, less so in CNNs for vision tasks~\cite{zhang2024transformers}. Our work suggests this may relate to differences in landscape degeneracy. Our preliminary experiments show that replacing ReLU with softmax in MLPs significantly slows well-tuned GD compared to Adam (Fig.~\ref{fig:spectrum}(a-b)), with softmax increasing degeneracy as evidenced by more concentrated spectral density at the origin (Fig.~\ref{fig:spectrum}(c)). Furthermore, Fig.~\ref{fig:spectrum}(d-f) shows Transformers exhibit higher degeneracy than CNNs on real data, correlating with Adam's larger advantage. This provides valuable directions for future investigation into architecture-dependent optimizer performance.

\paragraph{Conclusion.} 
In this work, we investigate Adam's convergence on highly degenerate polynomials, identifying a class where Adam converges automatically without decaying learning rate schedulers. We prove Adam achieves local linear convergence on these degenerated functions, significantly outperforming the sublinear rates of GD and Momentum. We characterize the decoupling mechanism between $v_t$ and $g_t^2$ that exponentially amplifies the effective learning rate, and map Adam's hyperparameter phase diagram across three distinct regimes: stable convergence, spikes, and SignGD-like oscillation. Given the prevalence of degenerate directions in deep learning landscapes, our findings provide valuable insights into Adam's advantages. Future work includes investigating stochastic batch settings and connections to high-dimensional coupled optimization in real-world scenarios.

\section*{Acknowledgements}

This work was supported by the National Key R\&D Program of China (Grant No.~2022YFA1008200), the National Natural Science Foundation of China (Grant Nos.~92270001, 12371511, 12422119, and 12571567), the Natural Science Foundation of Shanghai (Grant No.~25ZR1402280), and the 2025 Key Technology R\&D Program ``New Generation Information Technology'' Project of the Shanghai Municipal Science and Technology Commission (Grant No.~25511103100).

% \section*{Acknowledgements}
% the National Key R\&D Program of China Grant No. 2022YFA1008200, the National Natural Science Foundation of China Grant No. 92270001, 12371511, 12422119, 2025 Key Technology R\&D Program ``New Generation Information Technology'' Project 25511103100 of Shanghai Municipal Science and Technology Commission

% 我的主要是这两个Natural Science Foundation of China No. 12571567，Natural Science Foundation of Shanghai No. 25ZR1402280，你们更新的时候也注意下

% \clearpage

\section*{Impact Statement}
This paper presents work whose goal is to advance the field of Machine Learning. There are many potential societal consequences of our work, none which we feel must be specifically highlighted here.

% \bibliography{references}
\bibliographystyle{icml2026}

\newpage

\appendix
\onecolumn

% \tableofcontents

\section{Proof of Adam Local Stability Theorem~\ref{thm:adam_stability}}
\label{sec:proof of Adam}

\subsection{Prerequisites: Local Stability Theory}
\label{sec:stability}

In this section, we briefly review the local stability theory for dynamical systems. The analysis of local stability is primarily based on the linearization of the system near its equilibrium states, utilizing the Jacobian matrix.

Let $\Omega \subseteq \mathbb{R}^n$ be the state space and $\boldsymbol{x} \in \Omega$ be the state vector. We denote the Jacobian matrix of a vector-valued function $\boldsymbol{f}: \mathbb{R}^n \to \mathbb{R}^n$ evaluated at a point $\boldsymbol{x}^*$ as $J(\boldsymbol{x}^*) = D\boldsymbol{f}(\boldsymbol{x}^*)$. The eigenvalues of $J(\boldsymbol{x}^*)$ are denoted by $\lambda_i$ ($i=1, \dots, n$).

\subsubsection{Discrete-time Systems}
Consider a discrete dynamical system governed by the difference equation:
\begin{equation}
    \boldsymbol{x}_{k+1} = \boldsymbol{f}(\boldsymbol{x}_k).
\end{equation}
A point $\boldsymbol{x}^*$ is called a \textit{fixed point} if it satisfies $\boldsymbol{f}(\boldsymbol{x}^*) = \boldsymbol{x}^*$. According to linear stability theory, the fixed point $\boldsymbol{x}^*$ is locally asymptotically stable if all eigenvalues of the Jacobian matrix lie inside the unit circle in the complex plane. That is, the spectral radius $\rho(J)$ must satisfy:
\begin{equation}
    \rho(J(\boldsymbol{x}^*)) = \max_{1 \leq i \leq n} |\lambda_i| < 1.
\end{equation}

\subsubsection{Continuous-time Systems}
Consider a continuous dynamical system described by the ordinary differential equation:
\begin{equation}
    \dot{\boldsymbol{x}}(t) = \boldsymbol{f}(\boldsymbol{x}(t)).
\end{equation}
A point $\boldsymbol{x}^*$ is an \textit{equilibrium point} if $\boldsymbol{f}(\boldsymbol{x}^*) = \boldsymbol{0}$. The equilibrium $\boldsymbol{x}^*$ is locally asymptotically stable if all eigenvalues of the Jacobian matrix have strictly negative real parts:
\begin{equation}
    \text{Re}(\lambda_i) < 0, \quad \forall i = 1, \dots, n.
\end{equation}
Essentially, the stability criteria require the eigenvalues to be bounded by the unit circle for discrete maps, and by the imaginary axis for continuous flows.

\subsection{Local Stability Analysis}
\begin{definition}[Adam]
 \[
\begin{aligned}
&m_t = \beta_1 m_{t-1} + (1 - \beta_1) g_t \\
&v_t = \beta_2 v_{t-1} + (1 - \beta_2) g_t^2 \\
&x_{t+1} = x_t - \eta \cdot \frac{m_t}{\sqrt{v_t}}
\end{aligned}
\]
Here, $g_t=\nabla L(x_t)$, where $L(x)$ is the loss function. $\beta_1,\beta_2$ are hyperparameters in $[0,1)$. $\eta>0$ is the learning rate.
\label{def:Adam}
\end{definition}

\begin{theorem}
Consider using Adam in Definition~\ref{def:Adam} to learn the loss function $L(x)=\frac{1}{k}x^{k}$, where $k$ is an even number no less than four. Change variables by defining  $\omega_t=\frac{m_t}{g_t}=\frac{m_t}{x_t^{k-1}}, \lambda_t=\frac{x_t^{k-2}}{\sqrt{v_t}}.$ Then the following statements hold:
\begin{enumerate}

\item The new variables obey the following dynamics:
\begin{equation}
\begin{aligned}
&\omega_{t+1}=\frac{\beta_1\omega_t}{(1-\eta \omega_t\lambda_t)^{k-1}}+1-\beta_1 \\
&\lambda_{t+1}=\frac{(1-\eta\omega_t\lambda_t)^{k-2}\lambda_t}{\sqrt{\beta_2+(1-\beta_2) (1-\eta\omega_t\lambda_t)^{2k-2} \lambda_t^2x_t^2}} \\
&x_{t+1} = (1-\eta \omega_t\lambda_t)x_t
\end{aligned}
\label{eq:adam change of variables}
\end{equation}
\item Dynamics~\eqref{eq:adam change of variables} has two fixed points:
$$
\begin{aligned}
&\omega^*=1,\lambda^*=0,x^*\in \sR  \text{\quad and \quad } \\ & \omega^* = \frac{1 - \beta_1}{1 - \beta_1 \beta_2^{-\frac{k-1}{2(k-2)}}} ,
\lambda^* = \frac{1 - \beta_2^{\frac{1}{2(k-2)}}}{\eta \omega^*},x^*=0.
% (\text{if }  \beta_1 < \beta_2^{\frac{k-1}{2(k-2)}.})   
\end{aligned}
$$

\item  The fixed point $\omega^* = \frac{1 - \beta_1}{1 - \beta_1 \beta_2^{-\frac{k-1}{2(k-2)}}} ,
\lambda^* = \frac{1 - \beta_2^{\frac{1}{2(k-2)}}}{\eta \omega^*},x^*=0$ is meaningful (i.e., $\lambda^*\geq0$) if and only if $\beta_1 < \beta_2^{\frac{k-1}{2(k-2)}}.$

\item The fix point $\omega^*=1,\lambda^*=0,x^*=s$ is unstable for all $s\in \sR$.

\item Define the matrix $J$ to  be $$
\begin{bmatrix}
% \displaystyle
\beta_1 \left(1 - \beta_2^{\frac{1}{2(k-2)}} \right)(k - 1)\beta_2^{\frac{-k}{2(k-2)}} + \beta_1 \beta_2^{\frac{1-k}{2(k-2)}} 
&
% \displaystyle
\frac{ \beta_1 \eta (1 - \beta_1)^2 (k - 1)\beta_2^{\frac{-k}{2(k-2)}} }{ \left( -\beta_1 \beta_2^{-\frac{k-1}{2(k-2)}} + 1 \right)^2 } 
\\ 
% \displaystyle
\frac{ \left(1 - \beta_2^{\frac{1}{2(k-2)}} \right)^2 (2 - k) \left( -\beta_1 \beta_2^{-\frac{k-1}{2(k-2)}} + 1 \right)^2 \beta_2^{\frac{k-3}{2(k-2)}}  }{ \sqrt{\beta_2} \eta (1 - \beta_1)^2 }
& 
% \displaystyle 
\frac{ \left(1 - \beta_2^{\frac{1}{2(k-2)}} \right)(2 - k)\beta_2^{\frac{k-3}{2(k-2)}} + \sqrt{\beta_2} }{ \sqrt{\beta_2} }
\end{bmatrix}
$$
Then the matrix $\begin{bmatrix}
    J&0\\0&\beta_2^{\frac{1}{2(k-2)}}
\end{bmatrix}$
is the Jacobian matrix at 
the fixed point $\omega^* = \frac{1 - \beta_1}{1 - \beta_1 \beta_2^{-\frac{k-1}{2(k-2)}}} ,
\lambda^* = \frac{1 - \beta_2^{\frac{1}{2(k-2)}}}{\eta \omega^*},x^*=0$. Let $r(J)$ be the spectral radius of $J$.  \begin{itemize}
    \item If $r(J)<1$, then the fixed point is asymptotically stable. 
    \item If $r(J)>1$, then the fixed point is unstable. 
\end{itemize}

\item $r(J)<1$ if and only if both of the two inequalities hold:
$$\begin{aligned}
&\beta_1<\beta_2^{\frac{k}{2(k-2)}}\\
&\beta_1>\frac{(k-2)\beta_2^{-\frac{1}{2(k-2)}}-k}{(k-(k-2)\beta_2^{\frac{1}{2(k-2)}})\beta_2^{-\frac{k}{2(k-2)}}}.
\end{aligned}$$
Besides, $r(J)=1$ if and only if at least one of these two  inequalities becomes an equality. As a consequence, when $\beta_2>(\frac{k-2}{k})^{2(k-2)}$, $r(J)<1$ if and only if $\beta_1<\beta_2^{\frac{k}{2(k-2)}}$.

\item When $\beta_1<\beta_2^{\frac{k}{2(k-2)}}$ and $
\beta_1>\frac{(k-2)\beta_2^{-\frac{1}{2(k-2)}}-k}{(k-(k-2)\beta_2^{\frac{1}{2(k-2)}})\beta_2^{-\frac{k}{2(k-2)}}}
,$ the fixed point $\omega^* = \frac{1 - \beta_1}{1 - \beta_1 \beta_2^{-\frac{k-1}{2(k-2)}}} ,
\lambda^* = \frac{1 - \beta_2^{\frac{1}{2(k-2)}}}{\eta \omega^*},x^*=0$ is  asymptotically stable, and 
        $\lim_{t\to \infty}\frac{x_{t+1}}{x_t} =  \beta_2^{\frac{1}{2(k-2)}}.$

\item When  $\beta_1>\beta_2^{\frac{k}{2(k-2)}}$ or $
\beta_1<\frac{(k-2)\beta_2^{-\frac{1}{2(k-2)}}-k}{(k-(k-2)\beta_2^{\frac{1}{2(k-2)}})\beta_2^{-\frac{k}{2(k-2)}}}
,$ the fixed point $\omega^* = \frac{1 - \beta_1}{1 - \beta_1 \beta_2^{-\frac{k-1}{2(k-2)}}} ,
\lambda^* = \frac{1 - \beta_2^{\frac{1}{2(k-2)}}}{\eta \omega^*},x^*=0$ is  unstable,

% \item If $\beta_2>(\frac{k-2}{k})^{2(k-2)}$, then (i): $r(J)>1$ is equivalent to $\beta_1>\beta_2^{\frac{k}{2(k-2)}}$; (ii): $r(J)<1$ is equivalent to $\beta_1<\beta_2^{\frac{k}{2(k-2)}}$.

% \item When $k=4$ and $\beta_1,\beta_2\in (0,1)$, we have the follow:
% \begin{itemize}
%     \item If $\beta_1>\beta_2$, we have  $r(J)>1$.
%     \item If $\beta_1< \beta_2:$
%     \begin{itemize}
%         \item If $\beta_1<\frac{\beta_2^{-\frac{1}{4}}-2}{2\beta_2^{-1}-\beta_2^{-\frac{3}{4}}}$, we have $r(J)>1$.
%         \item If  $\beta_1<\frac{\beta_2^{-\frac{1}{4}}-2}{2\beta_2^{-1}-\beta_2^{-\frac{3}{4}}}$, we have $r(J)<1$.
%         \item If $\beta_1=\frac{\beta_2^{-\frac{1}{4}}-2}{2\beta_2^{-1}-\beta_2^{-\frac{3}{4}}}$,  we have $r(J)=1$.
%     \end{itemize}

% \end{itemize}

\end{enumerate}
\label{thm:adam radius}
\end{theorem}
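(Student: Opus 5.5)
Set $q \coloneqq \beta_2^{\frac{1}{2(k-2)}}\in(0,1)$, so $\Gamma=1-q$, $\sqrt{\beta_2}=q^{k-2}$, and all exponents of $\beta_2$ in the statement become integer powers of $q$. I will prove the items in the order listed.

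\textbf{Items 1--3: dynamics and fixed points.} For item 1, I substitute $x_{t+1}=(1-\eta\omega_t\lambda_t)x_t$ into $m_{t+1}=\beta_1 m_t+(1-\beta_1)x_{t+1}^{k-1}$ and divide by $x_{t+1}^{k-1}$, which gives the $\omega$-recursion. For the $\lambda$-recursion I write $v_{t+1}=v_t\bigl(\beta_2+(1-\beta_2)x_{t+1}^{2k-2}/v_t\bigr)$ and use $x_{t+1}^{2k-2}/v_t=(1-\eta\omega_t\lambda_t)^{2k-2}\lambda_t^2x_t^2$. For item 2, set $s\coloneqq 1-\eta\omega\lambda$. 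If $\lambda=0$, the $\omega$-equation forces $\omega=1$ and $x$ is arbitrary. If $\lambda\neq 0$, then $s\neq 1$, so the $x$-equation forces $x=0$. The $\lambda$-equation then reads $s^{k-2}=\sqrt{\beta_2}$, so $s=q$, taking the positive root so that loss decreases. The $\omega$-equation becomes $\omega=\beta_1\omega q^{-(k-1)}+1-\beta_1$, which yields $\omega^*$ and $\lambda^*=(1-q)/(\eta\omega^*)$. For item 3, the numerator $1-q$ and the factor $1-\beta_1$ are positive, so $\lambda^*>0$ exactly when $1-\beta_1q^{-(k-1)}>0$, i.e. when $\beta_1<\beta_2^{\frac{k-1}{2(k-2)}}$.

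\textbf{Items 4--5: linearization.} For item 4, I differentiate at $(1,0,c)$. The $\partial\lambda_{t+1}/\partial\lambda$ entry equals $1/\sqrt{\beta_2+0}>1$, and the row for $\lambda$ has zero $\omega$- and $x$-derivatives there, so $1/\sqrt{\beta_2}$ is an eigenvalue exceeding one. This gives instability, by the standard fact that an eigenvalue of modulus greater than one implies instability. For item 5, at $x^*=0$ the $\omega$- and $\lambda$-maps depend on $x$ only through $x^2$, so their $x$-derivatives vanish there, while $\partial x_{t+1}/\partial x=s=q$. The Jacobian is therefore block triangular with the claimed blocks, and the block $J$ is obtained by routine differentiation using $s^*=q$ and $\eta\omega^*\lambda^*=1-q$. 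The stability and instability claims follow from \S\ref{sec:stability} together with the standard linearization instability theorem.

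\textbf{Item 6: the main obstacle.} The key step is reducing $r(J)<1$ to the two explicit inequalities. I will use the Jury (Schur--Cohn) criterion for a real $2\times2$ matrix: $r(J)<1$ if and only if
\[
|\det J|<1,\qquad 1-\tr J+\det J>0,\qquad 1+\tr J+\det J>0 .
\]
I expect the $\eta$ factors to cancel in the off-diagonal product, so that $\tr J$ and $\det J$ are rational expressions in $\beta_1$ and $q$. First I would try to show that $\det J$ takes a simple form, e.g. something like $\beta_1 q^{-k}$ times a factor. The condition $\det J<1$ should then be exactly $\beta_1<q^{k}=\beta_2^{\frac{k}{2(k-2)}}$. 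Next I check that $1-\tr J+\det J$ factors as a positive multiple of $(1-q)^2$ (or similar), so it is automatically positive inside the existence region. Finally, $1+\tr J+\det J>0$ should be linear in $\beta_1$, and solving it gives the second inequality. Some care is needed to confirm that $\det J>-1$ is implied by the others, and to track the sign of the denominator $k-(k-2)q>0$. The equality cases, where $r(J)=1$, correspond to the boundary of each Jury inequality. For the consequence, I note that the lower bound is positive only if $(k-2)q^{-1}>k$, i.e. $q<(k-2)/k$, which is equivalent to $\beta_2<\left(\frac{k-2}{k}\right)^{2(k-2)}$. So above that threshold the second inequality holds trivially for all $\beta_1\ge0$.

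\textbf{Items 7--8.} Item 7 combines items 5 and 6. Asymptotic stability gives $(\omega_t,\lambda_t)\to(\omega^*,\lambda^*)$, hence
\[
\frac{x_{t+1}}{x_t}=1-\eta\omega_t\lambda_t\to 1-\eta\omega^*\lambda^*=q .
\]
Item 8 uses the strict reverse inequalities, which by item 6 give $r(J)>1$, and then item 5.
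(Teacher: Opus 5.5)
Your route matches the paper's: the same reduction, the block-diagonal Jacobian with $x$-eigenvalue $q$, and the Jury test. Your predicted quantities are right. They are $\det J=\beta_1q^{-k}$ and $1-\tr J+\det J=(k-2)\frac{1-q}{q}\bigl(1-\beta_1q^{-(k-1)}\bigr)$; the latter is a multiple of $1-q$, not $(1-q)^2$, but it is positive exactly on the existence region and so is implied by $\det J<1$. The third quantity is $q\,(1+\tr J+\det J)=kq-(k-2)+\beta_1q^{-(k-1)}\bigl(k-(k-2)q\bigr)$. Your sign-of-the-lower-bound argument for the $\beta_2>\bigl(\frac{k-2}{k}\bigr)^{2(k-2)}$ consequence is also fine. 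Two steps, however, are wrong, and the paper's proof glosses over both in the same way.

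First, in item 2 the exponent $k-2$ is even, so $s^{k-2}=\sqrt{\beta_2}$ gives $s=\pm q$. Saying ``so that loss decreases'' does not exclude $s=-q$: there $\eta\omega\lambda=1+q\in(0,2)$ and $|x_{t+1}|=q|x_t|$. Solving the $\omega$-equation with $s=-q$ gives a genuine third fixed point $\omega=\frac{1-\beta_1}{1+\beta_1q^{-(k-1)}}$, $\lambda=\frac{1+q}{\eta\omega}$, $x=0$, with $\omega,\lambda>0$ for every $\beta_1$. So item 2 cannot be proved as literally stated. The repair is to record this point and show it is always unstable. Its Jacobian is again block-diagonal, and its $(\omega,\lambda)$-block has determinant $\beta_1q^{-k}$ and $1-\tr+\det=-(k-2)\frac{1+q}{q}\bigl(1+\beta_1q^{-(k-1)}\bigr)<0$, hence a real eigenvalue above $1$. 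None of the other items are affected.

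Second, your claim that $r(J)=1$ exactly on the boundary of each Jury inequality (the statement's ``Besides'' clause) is false. An equality puts one eigenvalue on the unit circle, but the other may lie outside. Take $k=4$ and $\beta_1=\beta_2=10^{-3}$; the fixed point exists there because $\beta_1<q^3$. Then $\det J=1$ but $\tr J\approx-5.60$, so the eigenvalues are about $-0.18$ and $-5.42$ and $r(J)>1$. In general this happens whenever one inequality is an equality while the other fails strictly, which for $k=4$ occurs once $\beta_2<(2-\sqrt3)^4$. The correct characterization is that $r(J)=1$ iff both inequalities hold in non-strict form with at least one equality.

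For the same reason, item 8 should not be derived ``by item 6'' but directly. If $\beta_1>q^k$, then $\det J>1$ and hence $r(J)>1$. If the second inequality fails, then $p(-1)=1+\tr J+\det J<0$ for $p(z)=z^2-(\tr J)z+\det J$, so $J$ has a real eigenvalue below $-1$.
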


% \begin{remark}
% Assume $L(x)=x^k(1+h(x))$, where $k\geq 4$ is an even number, $h(x)$  is smooth, and $h(0)=0$. Then Theorem~\ref{thm:adam radius} extends for Adam learning $L(x)$.  
% \end{remark}

\begin{proof}
\textbf{Proof of Statements $1$-$4$:}   

Consider Adam:
\[
\begin{aligned}
&m_t = \beta_1 m_{t-1} + (1 - \beta_1) x_t^{k-1} \\
&v_t = \beta_2 v_{t-1} + (1 - \beta_2) x_t^{2k-2} \\
&x_{t+1} = x_t - \eta \cdot \frac{m_t}{\sqrt{v_t}}
\end{aligned}
\]
Then we have:
$$x_{t+1}=(1-\eta\frac{m_t}{x_t\sqrt{v_t}})x_t=(1-\eta \frac{m_t}{x_t^{k-1}}\times\frac{x_t^{k-2}}{\sqrt{v_t}})x_t$$

Define $$\omega_t=\frac{m_t}{x_t^{k-1}}, \text{ \quad } \lambda_t=\frac{x_t^{k-2}}{\sqrt{v_t}}.$$
Then it holds that:

$$x_{t+1} = (1-\eta \omega_t\lambda_t)x_t$$

$$\omega_{t+1}=\frac{m_{t+1}}{x_{t+1}^{k-1}}=\frac{\beta_1m_t+(1-\beta_1)x_{t+1}^{k-1}}{x_{t+1}^{k-1}}=\frac{\beta_1m_t}{x_{t+1}^{k-1}}+1-\beta_1=\frac{\beta_1\omega_t}{(1-\eta \omega_t\lambda_t)^{k-1}}+1-\beta_1.$$

$$\lambda_{t+1}=\frac{x_{t+1}^{k-2}}{\sqrt{v_{t+1}}}=\frac{(1-\eta\omega_t\lambda_t)^{k-2}x_t^{k-2}}{\sqrt{\beta_2v_t+(1-\beta_2)(1-\eta\omega_t\lambda_t)^{2k-2}x_t^{2k-2}}}
=
\frac{(1-\eta\omega_t\lambda_t)^{k-2}\lambda_t}{\sqrt{\beta_2+(1-\beta_2) (1-\eta\omega_t\lambda_t)^{2k-2} \lambda_t^2x_t^2}}.$$

So we have the following dynamics:
% \[
% \begin{aligned}
% &\omega_{t+1}=\frac{\beta_1\omega_t}{(1-\eta \omega_t\lambda_t)^{k-1}}+1-\beta_1 \\
% &\lambda_{t+1}=\frac{(1-\eta\omega_t\lambda_t)^{k-2}\lambda_t}{\sqrt{\beta_2+(1-\beta_2) (1-\eta\omega_t\lambda_t)^{2k-2} \lambda_t^2x_t^2}} \\
% &x_{t+1} = (1-\eta \omega_t\lambda_t)x_t
% \end{aligned}
% \]
\begin{align}
\omega_{t+1} &= \frac{\beta_1\omega_t}{(1-\eta \omega_t\lambda_t)^{k-1}} + 1 - \beta_1 \notag \\
\lambda_{t+1} &= \frac{(1-\eta\omega_t\lambda_t)^{k-2}\lambda_t}{\sqrt{\beta_2+(1-\beta_2)(1-\eta\omega_t\lambda_t)^{2k-2} \lambda_t^2 x_t^2}} \notag \\
x_{t+1} &= (1-\eta \omega_t\lambda_t)x_t \label{eq:update_rules}
\end{align}

Let $(\omega^*,\lambda^*,x^*)$ a fixed point of the dynamics~\eqref{eq:update_rules}. Then we have 
$$x^*=(1-\eta\omega^*\lambda^*x^*).$$
So $x^*\omega^*\lambda^*=0$. If $x^*=0$, by Equation~\eqref{eq:update_rules}, we have 

\[
\begin{aligned}
&\omega^*=\frac{\beta_1\omega^*}{(1-\eta \omega^*\lambda^*)^{k-1}}+1-\beta_1 \\
&\lambda^*=\frac{(1-\eta\omega^*\lambda^*)^{k-2}\lambda^*}{\sqrt{\beta_2}} \\
\end{aligned}
\]

Solve this equations, and we get 
$$\omega^*=1,\lambda^*=0  \text{\quad or \quad } \omega^* = \frac{1 - \beta_1}{1 - \beta_1 \beta_2^{-\frac{k-1}{2(k-2)}}} ,
\lambda^* = \frac{1 - \beta_2^{\frac{1}{2(k-2)}}}{\eta \omega^*}$$

If $x^*\neq 0$, then $\omega^*\lambda^*=0$. So 
$$\omega^*=\beta_1\omega^*+(1-\beta_1).$$
So we have 
$$\omega^*=1.$$
Since $\omega^*\lambda^*=0$, so we have $\lambda^*=0$. It is readily to verify that
if $\omega^*=1,\lambda^*=0$, then $(\omega^*,\lambda^*,x)$  is fixed point of~\eqref{eq:update_rules} for any $x\in \sR$.

Therefore, the fixed point $(\omega^*,\lambda^*,x^*)$ of~\eqref{eq:update_rules} has two cases:
\begin{itemize}
    \item $ \omega^* = \frac{1 - \beta_1}{1 - \beta_1 \beta_2^{-\frac{k-1}{2(k-2)}}} ,
\lambda^* = \frac{1 - \beta_2^{\frac{1}{2(k-2)}}}{\eta \omega^*},x^*=0$
\item $\omega^*=1,\lambda^*=0,x^*\in \sR$
\end{itemize}

Besides, since $\lambda_t=\frac{x_t^{k-2}}{\sqrt{v_t}}$, we have $\lambda_t\geq 0$.
Therefore, if $\lambda^*$ is a limit point of $\lambda_t$, then $\lambda^*\geq 0$. Under this condition, the fix point $ \omega^* = \frac{1 - \beta_1}{1 - \beta_1 \beta_2^{-\frac{k-1}{2(k-2)}}} ,
\lambda^* = \frac{1 - \beta_2^{\frac{1}{2(k-2)}}}{\eta \omega^*},x^*=0$ exists if and only if $\omega^*>0$, which is equivalent to
$$\beta_2^{\frac{k-1}{2(k-2)}}>\beta_1.$$

By calculation, the Jacobian matrix at the fixed point $\omega^*=1,\lambda^*=0,x^*\in \sR$ is 
$$\vM=\begin{pmatrix}
    1& (k-1)\eta\beta_1&0\\
    0&\frac{1}{\sqrt{\beta_2}}&0\\
    0&-\eta x^*&1
\end{pmatrix}.$$

The eigenvalues of $\vM$
are $1,1,\frac{1}{\sqrt{\beta_2}}$. Since $\frac{1}{\sqrt{\beta_2}}>1$, the fixed point $\omega^*=1,\lambda^*=0,x^*$ is unstable for each $x^*\in \sR.$ So far, we have proved the first four statements of the theorem.

\textbf{Proof of Statements $5$-$6$:}   

By calculation, the Jacobian matrix at this fixed point is of the form 
$$J'=\begin{bmatrix}
    J&0\\0&\beta_2^{\frac{1}{2(k-2)}}
\end{bmatrix},$$
where 
$$
J=\begin{bmatrix}
\displaystyle
\beta_1 \left(1 - \beta_2^{\frac{1}{2(k-2)}} \right)(k - 1)\beta_2^{\frac{-k}{2(k-2)}} + \beta_1 \beta_2^{\frac{1-k}{2(k-2)}} 
& \displaystyle \frac{ \beta_1 \eta (1 - \beta_1)^2 (k - 1)\beta_2^{\frac{-k}{2(k-2)}} }{ \left( -\beta_1 \beta_2^{-\frac{k-1}{2(k-2)}} + 1 \right)^2 } 
\\ \displaystyle\frac{ \left(1 - \beta_2^{\frac{1}{2(k-2)}} \right)^2 (2 - k) \left( -\beta_1 \beta_2^{-\frac{k-1}{2(k-2)}} + 1 \right)^2 \beta_2^{\frac{k-3}{2(k-2)}}  }{ \sqrt{\beta_2} \eta (1 - \beta_1)^2 }
& \displaystyle \frac{ \left(1 - \beta_2^{\frac{1}{2(k-2)}} \right)(2 - k)\beta_2^{\frac{k-3}{2(k-2)}} + \sqrt{\beta_2} }{ \sqrt{\beta_2} }
\end{bmatrix}
$$

Since $\beta_2^{\frac{1}{2(k-2)}}\in (0,1)$, then we have 
$$r(J')<1\Longleftrightarrow r(J)<1.$$

Let $r(J)$ be the spectral radius of $J$.
Note that $r(J)<1$ if and only if all of the following satisfy:
\begin{itemize}
    \item $|\det(J)|<1$.
    \item $1-\tr(J)+\det(J)>0$
    \item $1+\tr(J)+\det(J)>0$
\end{itemize}
Besides, $r(J)=1$ if and only if at least one of the inequalities becomes an equality.

By calculation, we have 
$$
\begin{aligned}
&\det(J)=\beta_1\beta_2^{-\frac{k}{2(k-2)}}    \\
&1-\tr(J)+\det(J)=(2-k)(1-\beta_2^{\frac{1}{2(k-2)}})(\beta_1\beta_2^{\frac{-k}{2(k-2)}}-\beta_2^{-\frac{1}{2(k-2)}}).\\
\end{aligned}
$$
Therefore, we have 
$$|\det(J)|<1 \Longrightarrow 1-\tr(J)+\det(J)>0.$$
By calculation, the condition
$1+\tr(J)+\det(J)>0$ is simplified to 
$$\beta_1>\frac{(k-2)\beta_2^{-\frac{1}{2(k-2)}}-k}{(k-(k-2)\beta_2^{\frac{1}{2(k-2)}})\beta_2^{-\frac{k}{2(k-2)}}}.$$

 So $r(J)<1$ if and only if  both of the two inequalities hold:
$$\begin{aligned}
&\beta_1<\beta_2^{\frac{k}{2(k-2)}}\\
&\beta_1>\frac{(k-2)\beta_2^{-\frac{1}{2(k-2)}}-k}{(k-(k-2)\beta_2^{\frac{1}{2(k-2)}})\beta_2^{-\frac{k}{2(k-2)}}}.
\end{aligned}$$
Besides, $r(J)=1$ if and only if at least one of these two  inequalities becomes an equality.

Moreover, one may verify that when $\beta_2>(\frac{k-2}{k})^{2(k-2)}$, we have $J_{11}>0$, $J_{22}>-1$. So we have 
$$1+\tr(J)+\det(J)>0.$$
Therefore, when $\beta_2>(\frac{k-2}{k})^{2(k-2)}$,  $r(J)<1$ if and only if $\beta_1<\beta_2^{\frac{k}{2(k-2)}}$. So far, we have proved the first six statements of the theorem.

\textbf{Proof of Statements $7$:}   

Assume that $\beta_1<\beta_2^{\frac{k}{2(k-2)}}$ and $
\beta_1>\frac{(k-2)\beta_2^{-\frac{1}{2(k-2)}}-k}{(k-(k-2)\beta_2^{\frac{1}{2(k-2)}})\beta_2^{-\frac{k}{2(k-2)}}}
,$ and consider the fixed point $\omega^* = \frac{1 - \beta_1}{1 - \beta_1 \beta_2^{-\frac{k-1}{2(k-2)}}} ,
\lambda^* = \frac{1 - \beta_2^{\frac{1}{2(k-2)}}}{\eta \omega^*},x^*=0.$ The Jacobian matrix at this fixed point is $$J'=\begin{bmatrix}
    J&0\\0&\beta_2^{\frac{1}{2(k-2)}}
\end{bmatrix}.$$
By statements $6$, we have $r(J)<1.$ Since $\beta_2^{\frac{1}{2(k-2)}}\in (0,1)$, so we have 
$$r(J')<1.$$
This indicates that this fixed point is asymptotically stable. By the definition of asymptotical stability, for the 
initialization $(\omega_0,\lambda_0,x_0)$ sufficiently close to the fixed point, the iterations 
$(\omega_t,\lambda_t,x_t)$ will converge to $(\omega^*,\lambda^*,x^*).$ 

By definition, we have 
$$\frac{x_{t+1}}{x_t}=1-\eta \omega_t\lambda_t.$$
So we have 
$$\lim_{t\to \infty}\frac{x_{t+1}}{x_t}=1-\eta \omega^*\lambda^*=\beta_2^{\frac{1}{2(k-2)}}.$$

\textbf{Proof of Statements $8$:}  When  $\beta_1>\beta_2^{\frac{k}{2(k-2)}}$ or $
\beta_1<\frac{(k-2)\beta_2^{-\frac{1}{2(k-2)}}-k}{(k-(k-2)\beta_2^{\frac{1}{2(k-2)}})\beta_2^{-\frac{k}{2(k-2)}}}
,$ the Jacobian matrix at the fixed point $\omega^* = \frac{1 - \beta_1}{1 - \beta_1 \beta_2^{-\frac{k-1}{2(k-2)}}} ,
\lambda^* = \frac{1 - \beta_2^{\frac{1}{2(k-2)}}}{\eta \omega^*},x^*=0$ has a  spectral radius larger than one. So the fixed point is unstable.

\end{proof}

\section{Proof of Global Convergence of RMSProp Theorem~\ref{thm:rmsprop_global_convergence}}
\label{sec:proof of RMSProp}
$L(x)=\frac{1}{k}x^k$, $k$ is an even number and $k\geq 4$. 
\begin{equation}
\begin{aligned}
&x_{t+1} = x_t - \eta \cdot \frac{x_t^{k-1}}{\sqrt{v_t}}\\
&v_{t+1} = \beta_2 v_{t} + (1 - \beta_2) x_{t+1}^{2k-2} 
\end{aligned}
\label{eq:d-adam,no-m}
\end{equation}

\begin{lemma}
If $\eta\frac{x_0^{k-2}}{\sqrt{v_0}}<1$ and  $\frac{(k-2)^{k-2}}{(k-1)^{k-1}}<\sqrt{\beta_2}<1$, then there exists $T>0$ such that $|x_t|$ of Equation~\ref{eq:d-adam,no-m} decreases monotonically for $t>T$. Moreover, $x_t$ converges to $0$.
\label{thm:convergence of x_t}
\end{lemma}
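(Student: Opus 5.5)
The plan is to reduce everything to the one-dimensional \emph{effective sharpness} $u_t \coloneqq \eta x_t^{k-2}/\sqrt{v_t}$, in the spirit of the map \eqref{eq:sharpness_map}. In these variables the $x$-update of \eqref{eq:d-adam,no-m} reads $x_{t+1}=(1-u_t)x_t$. So $|x_{t+1}|<|x_t|$ holds exactly when $u_t\in(0,2)$, and the sign of $x_t$ is preserved when $u_t\in(0,1)$. The argument therefore has two parts: first show that $u_t$ stays in $(0,1)$ for all $t$, then show that the monotone limit of $|x_t|$ must be $0$. I expect the first part to give monotonicity already from $T=0$, which is stronger than the statement requires.

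\emph{Step 1 (invariance of $(0,1)$).} Since $v_{t+1}=\beta_2 v_t+(1-\beta_2)x_{t+1}^{2k-2}\ge \beta_2 v_t$, we get
\[
u_{t+1}=\frac{\eta (1-u_t)^{k-2}x_t^{k-2}}{\sqrt{v_{t+1}}}\le \frac{u_t(1-u_t)^{k-2}}{\sqrt{\beta_2}} .
\]
The function $u\mapsto u(1-u)^{k-2}$ on $[0,1]$ attains its maximum at $u=1/(k-1)$, with value $(k-2)^{k-2}/(k-1)^{k-1}$. The hypothesis $\sqrt{\beta_2}>(k-2)^{k-2}/(k-1)^{k-1}$ therefore gives $u_{t+1}<1$ whenever $u_t\in[0,1)$. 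If $x_t\neq 0$, then $u_t>0$, and $u_t<1$ forces $x_{t+1}\ne 0$ with the same sign, so $u_{t+1}>0$ as well. (If some $x_t=0$, the iteration stays at $0$ and there is nothing to prove.) Induction from $u_0<1$ then yields $u_t\in(0,1)$ for all $t$, and hence $|x_{t+1}|=(1-u_t)|x_t|<|x_t|$, so $|x_t|$ is strictly decreasing.

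\emph{Step 2 (the limit is zero).} By monotonicity, $|x_t|\downarrow c\ge 0$. Suppose for contradiction that $c>0$. The recursion for $v_t$ is a convex combination, so it gives the upper bound $v_t\le V:=\max(v_0,x_0^{2k-2})$. Combined with $|x_t|\ge c$, this yields the lower bound $u_t\ge \delta:=\eta c^{k-2}/\sqrt{V}>0$. Then $|x_{t+1}|\le(1-\delta)|x_t|$, so $|x_t|\to 0$, which contradicts $c>0$. Hence $x_t\to 0$.

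The main obstacle is Step 1. The key point is to see that only the lower bound $v_{t+1}\ge\beta_2 v_t$ is needed, and that the hypothesis on $\beta_2$ is exactly the condition that the peak of $u(1-u)^{k-2}/\sqrt{\beta_2}$ lies below $1$. Once the invariance of $(0,1)$ is in hand, Step 2 is a routine compactness-style contradiction. The refined rate $\beta_2^{1/(2(k-2))}$ of Theorem~\ref{thm:rmsprop_global_convergence} can then be obtained afterwards from Lemma~\ref{lemma:v_limit}.
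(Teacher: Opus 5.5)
Your proof is correct and follows the paper's argument. Your $u_t$ is the paper's $\eta\lambda_t$, and invariance of $(0,1)$ comes from the same bound $u_{t+1}\le u_t(1-u_t)^{k-2}/\sqrt{\beta_2}$, with peak value $(k-2)^{k-2}/(k-1)^{k-1}$ at $u=1/(k-1)$. Ruling out a positive limit uses the same boundedness of $v_t$: your contraction $|x_{t+1}|\le(1-\delta)|x_t|$ is a cleaner version of the paper's limsup/liminf argument that a positive limit would force $v_t\to\infty$.

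One caveat on your closing aside: Lemma~\ref{lemma:v_limit} alone does not make $\lambda_t$ converge, so it does not give the rate. The paper needs Lemma~\ref{thm:convergence of lambda_t}, under the stronger condition $\sqrt{\beta_2}>\left(\frac{k-2}{k}\right)^{k-2}$, for that.
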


\begin{proof}
Without loss of generality assume $x_0>0$.
 We define $$\lambda_t=\frac{x_t^{k-2}}{\sqrt{v_t}
},$$
and then we have the following dynamics:
\begin{equation}
\begin{aligned}
&x_{t+1} = (1-\eta\lambda_t)x_t\\
&\lambda_{t+1} = \frac{(1-\eta\lambda_t)^{k-2}\lambda_t}{\sqrt{\beta_2+(1-\beta_2)(1-\eta\lambda_t)^{2k-2}x_t^2\lambda_t^2}}
\end{aligned}
\label{eq:chaned,d-adam,no-m}
\end{equation}

So we have the following inequality:
$$0<\lambda_{t+1} <  \frac{(1-\eta\lambda_t)^{k-2}\lambda_t}{\sqrt{\beta_2}}.$$

By assumption, we have  $\lambda_0<\frac{1}{\eta}$. 
% For all $\frac{1}{\eta}<\lambda_t<\frac{1+\beta_2^{\frac{1}{2(k-2)}}}{\eta}$, we have $$\frac{\lambda_{t+1}}{\lambda_t}<1.$$
% So $\lambda_t$ decreases until it is in the interval $(0,\frac{1}{\eta})$. 
Define $g(\lambda)=\frac{1}{\sqrt{\beta_2}}(1-\eta \lambda)^{k-2}\lambda, \lambda\in (0,\frac{1}{\eta})$. By calculation, the maximum of $g(\lambda)$ is 
$$g(\frac{1}{(k-1)\eta})=\frac{(k-2)^{k-2}}{(k-1)^{k-1}\eta\sqrt{\beta_2}}.$$

% So we have 
% $$\lambda_{t+1}<\frac{1}{(k-1)\eta\sqrt{\beta_2}}.$$

% Solve $$\frac{1}{(k-1)\eta\sqrt{\beta_2}}=\frac{1}{\eta},$$
% we get $$\beta_2=\frac{1}{(k-1)^2}.$$

If $\sqrt{\beta_2}>\frac{(k-2)^{k-2}}{(k-1)^{k-1}}$, then $0<1-\eta\lambda_{t+1}<1$. So $x_t$ decreases monotonically.
Since $x_t>0$, so $x_t$ converges to some $x^*\geq 0$.
In the equation 
$$x_{t+1} = x_t - \eta \cdot \frac{x_t^{k-1}}{\sqrt{v_t}},$$
take limsup and liminf and  we get 
$$\frac{x^*}{\sqrt{\liminf_{t\to \infty}v_t}}=\frac{x^*}{\sqrt{\limsup_{t\to \infty}v_t}}=0.$$

If $x^*\neq 0$, then $\lim_{t\to \infty}v_t=+\infty.$ This is impossible since  
$$v_{t+1} = \beta_2 v_{t} + (1 - \beta_2) x_{t+1}^{2k-2}$$
and $\beta_2<1.$
So $x^*$ must be zero.

\end{proof}

\begin{lemma}
Assume $0<\lambda_0<\frac{1}{\eta}$. Assume
$\frac{(k-2)^{k-2}}{k^{k-2}}<\sqrt{\beta_2}<1$. Then $\lambda_t$ of Equation~\eqref{eq:chaned,d-adam,no-m} converges to $\frac{1-\beta_2^{\frac{1}{2(k-2)}}}{\eta}$.    
\label{thm:convergence of lambda_t}
\end{lemma}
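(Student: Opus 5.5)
The plan is to reduce the $\lambda$-dynamics of Eq.~\eqref{eq:chaned,d-adam,no-m} to an asymptotically autonomous perturbation of the one-dimensional map $\psi$ of Eq.~\eqref{eq:sharpness_map}. Substituting $u_t=\eta\lambda_t$ and $\gamma=\beta_2^{-1/2}$ gives
\[
u_{t+1}=\frac{\psi(u_t)}{\sqrt{1+\epsilon_t}},\qquad \epsilon_t=\frac{1-\beta_2}{\beta_2}(1-u_t)^{2k-2}x_t^2\lambda_t^2 .
\]
The target is then $u_t\to u^*=1-\gamma^{-1/(k-2)}=1-\beta_2^{\frac{1}{2(k-2)}}$, which is exactly $\eta$ times the claimed limit of $\lambda_t$.

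\textbf{Step 1: reduce to the map $\psi$.} I first check that the hypothesis $\sqrt{\beta_2}>((k-2)/k)^{k-2}$ implies the hypothesis of Lemma~\ref{thm:convergence of x_t}. This amounts to $k^{k-2}\le (k-1)^{k-1}$, which is elementary. Lemma~\ref{thm:convergence of x_t} then gives $u_t\in(0,1)$ for all $t$ and $x_t\to 0$. Since $\lambda_t<1/\eta$ is bounded, it follows that $\epsilon_t\to0$. Moreover $\epsilon_t\ge 0$, so $u_{t+1}\le\psi(u_t)\le\max\psi<1$. The hypothesis on $\beta_2$ is precisely $\gamma<\gamma_{\rm crit}$, so by Proposition~\ref{prop:discrete_stability} the fixed point $u^*$ is locally attracting for $\psi$, with $\psi'(u^*)=1-(k-2)\frac{u^*}{1-u^*}\in(-1,1)$.

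\textbf{Step 2: keep $u_t$ away from $0$ and squeeze the limit set.} Near $0$ we have $\psi(u)\approx\gamma u$ with $\gamma>1$. Together with $\sqrt{1+\epsilon_t}\to1$, this shows $u_t$ cannot accumulate at $0$: once $\epsilon_t$ is small, any small $u_t$ grows by a factor close to $\gamma$ until it leaves a neighbourhood of $0$. Hence $m:=\liminf u_t>0$ and $M:=\limsup u_t<1$. Passing to limits along subsequences, and using continuity of $\psi$ with $\epsilon_t\to0$, the interval $[m,M]$ satisfies $[m,M]\subseteq\psi([m,M])$. A sharper version of the same argument uses the unimodal shape of $\psi$, which increases on $(0,\tfrac1{k-1})$ and decreases on $(\tfrac1{k-1},1)$. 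On the increasing branch, $\psi(u)>u$ for $u<u^*$, which forces $m\ge u^*$ whenever the relevant indices lie on that branch. When $u^*$ lies on the decreasing branch, it yields the pair of inequalities $M\le\psi(m)$ and $m\ge\psi(M)$.

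\textbf{Step 3: rule out 2-cycles (main obstacle).} The inequalities $M\le\psi(m)$ and $m\ge\psi(M)$ force $m=M=u^*$ provided $\psi^2(u)-u$ has no zero on $(0,1)$ other than $u^*$ in the relevant invariant interval. Equivalently, $\psi$ must have no period-2 orbit. This is the step I expect to be hardest. I would first try the Schwarzian derivative: $\psi(u)=\gamma u(1-u)^{k-2}$ is a polynomial all of whose critical points are real, so $S\psi<0$ on $(0,1)$. By Singer's theorem, every attracting cycle then attracts a critical point, and the unique interior critical point $\tfrac1{k-1}$ is already captured by the attracting fixed point $u^*$. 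This leaves only neutral or repelling 2-cycles, and I would exclude those by showing that $\psi^2$ is monotone with slope $<1$ between consecutive candidate zeros. Combined with Coppel's theorem (no 2-cycle implies every orbit converges for interval maps), this handles the unperturbed map. For the perturbed iteration, I would make the $\limsup/\liminf$ argument robust by using strict inequalities on compact subintervals of $(0,1)$ and letting $\epsilon_t\to0$. If the Schwarzian route becomes messy, the fallback is a direct computation: write $\psi(a)=b$ and $\psi(b)=a$, take logarithms, and show that the resulting equation in $(a,b)$ has only the diagonal solution when $\gamma<(k/(k-2))^{k-2}$.
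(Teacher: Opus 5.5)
Your reduction to $u_{t+1}=\psi(u_t)/\sqrt{1+\epsilon_t}$, your repulsion argument at $0$, and your reliance on ``no 2-cycle'' all match the paper. The gap is in the step meant to carry the conclusion over to the \emph{perturbed} sequence: it is not justified.

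In Step 2 you claim that, when $u^*$ is on the decreasing branch, unimodality gives $M\le\psi(m)$ and $m\ge\psi(M)$. Write $c=\tfrac{1}{k-1}$, and let $\Omega\subseteq[m,M]$ be the limit set; it satisfies $\psi(\Omega)\subseteq\Omega\subseteq\psi(\Omega)$. The inequality $m\ge\psi(M)$ does follow: if $m<c$, a preimage $z'\in\Omega$ of $m$ cannot lie in $[m,c]$, since there $\psi(z')\ge\psi(m)>m$. But $M\le\psi(m)$ requires $m\ge c$. If $\Omega$ straddles $c$, all you get is $M\le\psi(c)$, and the squeeze does not close. Nothing in your proposal excludes $m<c$. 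Using ``strict inequalities on compact subintervals'' does not help either, because the issue is where the limit set sits, not how large $\epsilon_t$ is. Finally, Coppel's theorem as you invoke it concerns only exact $\psi$-orbits and is never linked to $(u_t)$.

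There are two ways to close this.

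(a) Keep your squeeze and add the lemma $\psi^2(c)\ge c$. Set $\tau=\gamma^{1/(k-2)}\tfrac{k-2}{k-1}$; the range $1<\tau<\tfrac{k}{k-1}$ is exactly $c<u^*$ together with $\gamma<\gamma_{\mathrm{crit}}$. The lemma is equivalent to $F(\tau)=\tfrac{(k-1)\tau^2-\tau^k}{k-2}\ge 1$. This holds because $F(1)=1$, $F$ is concave for $\tau\ge1$, and $F(\tfrac{k}{k-1})\ge1$ reduces to $3+\tfrac{1}{k-1}\ge(\tfrac{k}{k-1})^k$, which is true for all $k\ge4$. Now if $m<c$, then $c<M\le\psi(c)$, so $m\ge\psi(M)\ge\psi^2(c)\ge c$, a contradiction. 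Hence $\Omega\subseteq[c,1)$. The two invariance relations then give $\psi(m)=M$ and $\psi(M)=m$, and no 2-cycle forces $m=M=u^*$. On this route you never need Coppel.

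(b) Alternatively, follow the paper. Coppel plus no 2-cycle makes every $\psi$-orbit in $(0,1)$ converge to $u^*$. Since $\Omega$ is closed, forward invariant, and bounded away from $0$, it contains $u^*$. Local contraction of $\psi$ near $u^*$, together with $\epsilon_t\to0$, then traps $(u_t)$. The paper phrases this via finite-horizon continuity of orbits in the initial point and the perturbation.

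For the no-2-cycle fact itself, your Singer route leaves repelling 2-cycles to an unspecified argument, but your fallback works cleanly. For a 2-cycle $\{a,b\}$, put $p=1-a$, $q=1-b$, $n=\tfrac{k-2}{2}$, and $\mu=\gamma^{-1/(k-2)}$. Then $pq=\mu^2$ and $(1-p)p^n=(1-q)q^n$. With $p=\mu e^{s}$, $q=\mu e^{-s}$ and $s\neq0$, this becomes $\mu=\sinh(ns)/\sinh((n+1)s)<\tfrac{n}{n+1}=\tfrac{k-2}{k}$, which contradicts $\gamma<\gamma_{\mathrm{crit}}$. The paper only asserts this fact as ``readily verified.''
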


\begin{proof}
Define $g(\lambda)=\frac{1}{\sqrt{\beta_2}}(1-\eta \lambda)^{k-2}\lambda, \lambda\in (0,\frac{1}{\eta})$.
The fixed point of $g$ in $[0,\frac{1}{\eta}]$ is $$\lambda^*=\frac{1-\beta_2^{\frac{1}{2(k-2)}}}{\eta} \quad \text{and \quad } \lambda'=0.$$

The fixed point $\lambda'=0$ is unstable. The derivative at $\lambda^*=\frac{1-\beta_2^{\frac{1}{2(k-2)}}}{\eta} $ is 

$$g'(\lambda^*)=1-(k-2)(\beta_2^{-\frac{1}{2(k-2)}}-1).$$

Then $g'(\lambda^*)<1$. 
It is readily to verity that 
$$g'(\lambda^*)>-1 \Longleftrightarrow \sqrt{\beta_2}>\frac{(k-2)^{k-2}}{k^{k-2}}.$$
By assumption, $\sqrt{\beta_2}>\frac{(k-2)^{k-2}}{k^{k-2}}$ holds. Therefore, we have 
$$g'(\lambda^*)\in (-1,1).$$
So $\lambda^*$ is asymptotically stable.

% \textcolor{red}{引用某个动力系统的定理（去查一下），证明$1>\sqrt{\beta_2}>\frac{(k-2)^{k-2}}{k^{k-2}}$时，动力系统$g(s_{t+1})=s_t, s_0\in (0,\frac{1}{\eta})$收敛到不动点$\frac{1-\beta_2^{\frac{1}{2(k-2)}}}{\eta}$处。}
We first present a lemma along with its proof:

\begin{lemma}
If $\frac{(k-2)^{k-2}}{k^{k-2}}<\sqrt{\beta_2}<1$, then the dynamical system defined by $g(s_{t+1}) = s_t$, with initial condition $s_0 \in \left(0, \frac{1}{\eta}\right)$, globally converges to the fixed point $s^*=\frac{1 - \beta_2^{\frac{1}{2k - 4}}}{\eta}$.
\label{lemma: no-two-period}
\end{lemma}

\begin{proof}
By definition, we have  $g(s)=\frac{1}{\sqrt{\beta_2}}(1-\eta s)^{k-2}s, s_0\in (0,\frac{1}{\eta}).$ Then we have 
$$g(\eta s)=\frac{1}{\sqrt{\beta_2}}(1-\eta s)^{k-2} \eta s, \eta s_0\in (0,1).$$
Therefore, we only need to study the case of $\eta=1.$ In this case, we have 
$g(s)=\frac{1}{\sqrt{\beta_2}}(1-s)^{k-2}s, s_0\in (0,1).$

Consider the roots of $g(g(s))=s.$ It is readily to verify its the roots in $[0,1]$ are all fixed points. 

By~\citet{coppel1955solution}, the dynamics $g(s_{t+1}) =s_t$ with $s_0\in (0,1)$ must converge to a fixed point of $g$. By calculation, there are two fixed points of $g$:
$$s^1=0,s^2=1-\beta_2^{\frac{1}{2k-4}}.$$
By calculation, $g'(0)=\frac{1}{\sqrt{\beta_2}}>1$. So 
the fixed point $s^1=0$ is repelling. So $s_t$ cannot converge to zero. So $s_t$ must converge to $s^2=1-\beta_2^{\frac{1}{2k-4}}.$ This proves the lemma.

\end{proof}

% \textcolor{red}{By invoking a certain theorem from dynamical systems theory (to be looked up), prove that when $1 > \sqrt{\beta_2} > \frac{(k-2)^{k-2}}{k^{k-2}}$, the dynamical system defined by $g(s_{t+1}) = s_t$, with initial condition $s_0 \in \left(0, \frac{1}{\eta}\right)$, converges to the fixed point $\frac{1 - \beta_2^{\frac{1}{2k - 4}}}{\eta}$.}

We continue our proof of Lemma~\ref{thm:convergence of lambda_t}. It is easy to verify that $k^{k-2}<(k-1)^{k-1}$. So the assumptions of Lemma~\ref{thm:convergence of x_t} holds.
By Lemma~\ref{thm:convergence of x_t}, we have 
$$\lim_{t\to \infty} x_t=0.$$
So
the dynamics of $\lambda_t$ is $$\lambda_{t+1}=g(\lambda_t)+\mu(t)\text{ , where } \lim_{t\to \infty} \mu(t)=0.  $$

We now consider the limit point of $\lambda_t$. Let $\lambda^{l}$ be a limit point of $\lambda_t$. Define $g_{\max}=\max_{\lambda\in (0,\frac{1}{\eta}) }g(\lambda).$ It is readily to verify that $0<g_{\max} <\frac{1}{\eta}.$ Since$\lambda^{l}$ be a limit point of $\lambda_t$, 
we have 
$$0\leq \lambda^l\leq  g_{\max}<\frac{1}{\eta}.$$
\begin{itemize}
    \item  We first prove that $\lambda^l\neq 0$. Assume by contradiction that $\lambda_l=0$. Since $g'(0)>1$, $g(\lambda)<g_{\max}<\frac{1}{\eta}$ and $\lim_{t\to \infty}\epsilon(t)=0$, the following holds: 
$$\exists T_0>0,\delta>0, 0<c<1 \quad  s.t.\quad \text{ for all }t>T_0,  \text{ if } \lambda_{t}<\delta, \text{ then } \lambda_{t-1}<c\lambda_t.$$

Since $\lambda^l=0$,  there exists $T_n>T_0$, $\lim_{n\to \infty} T_n=+\infty$, such that $\lambda_{T_n}<\delta$. So $$
\lambda_{T_0}<\lambda_{T_n}c^{T_n-T_0}<\frac{1}{\eta}c^{T_n-T_0}.$$
Let $n\to \infty$, we get $\lambda_{T_0}=0$. This is impossible when $\lambda_0\in (0,\frac{1}{\eta})$.

\item Then we prove that the existence of $\lambda^l$ such that $\lambda^l\neq0$ and $\lambda^l\neq \lambda^*$ will lead to a contradiction. Pick any $\epsilon>0$. 
Consider the  reference dynamics given by $$s_{t+1}=g(s_t), s_0=s_0$$
and denotes its solution by $\phi(t,s_0).$ By Lemma~\ref{lemma: no-two-period}, $\phi(t,\lambda^l)$ converges to $\lambda^*.$
Since $\phi(t,\lambda^l)$ converges to $\lambda^*$, there exists $T>0$ such that $$\phi(T,\lambda^l)\in (\lambda^*-\frac{1}{4}\epsilon,\lambda^*+\frac{1}{4}\epsilon).$$
Moreover, since $\phi(t,s)$ is continuously over $s$, there exists $\delta>0$ such that 
$$\phi(T,s_0)\in (\lambda^*-\frac{1}{2}\epsilon,\lambda^*+\frac{1}{2}\epsilon)$$ for all $s_0\in (\lambda^l-\delta,\lambda^l+\delta).$
Since $\lambda^l$ is a limit point of $\lambda_t$, there exists $T_k\to \infty$ such that $$\lambda_{T_k}\in (\lambda^l-\delta,\lambda^l+\delta),\forall k\in \sN^+.$$

Since $\lambda_{T_k}\in (\lambda^l-\delta,\lambda^l+\delta)$, we have $$\phi(T,\lambda_{T_k})\in(\lambda^*-\frac{1}{2}\epsilon,\lambda^*+\frac{1}{2}\epsilon).$$ 

Denote the solution of 
$$w_{t+1}=g(w_t)+h(t), w_0=w_0$$
by $\Phi_{h}(t,w_0).$ By continuity of $\Phi$ with respect to $h(\cdot)$ and the initialization,  there exists
$\delta'>0$ such that
if $|\mu(t)|<\delta'$ for all $t\in [0,T]$  and 
$|\lambda-\lambda^l|<\delta'$ ,  then 
\begin{equation}
|\Phi_h(T,\lambda)-\phi(T,\lambda^l)|<\epsilon.  \label{eq:perturb}  
\end{equation}

We may pick $\delta<\delta'.$ Then we have 
$$|\lambda_{T_k}-\lambda^l|<\delta'.$$
Since $\lim_{t\to \infty}\mu(t)=0$ and $T_k\to \infty$, we may assume that 
$|\mu(t+T_k)|<\epsilon.$ In inequality~\eqref{eq:perturb}, let $h(t)=\mu(t+T_k)$ and $\lambda=\lambda_{T_k}$, we get 
$$|\lambda_{T+T_k}-\phi(T,\lambda^l)|<\epsilon.$$
Since
$$\phi(T,\lambda^l)\in (\lambda^*-\frac{1}{4}\epsilon,\lambda^*+\frac{1}{4}\epsilon),$$
we have 
\begin{equation}
 \lambda_{T+T_k}\in (\lambda^*-2\epsilon,\lambda^*+2\epsilon),\forall k\in \sN^+.
 \label{eq:contradiction}
\end{equation}

% Recall that 
% the dynamics of $\lambda_t$ is $$\lambda_{t+1}=g(\lambda_t)(1-\mu(t))\text{ , where } \lim_{t\to \infty} \mu(t)=0.  $$

% By continuity of the solution with respect to $\epsilon(t)$ and the initialization, for any $\epsilon>0$, there exists
% $\delta'>0$ such that
% if $|\mu(t)|<\delta'$ and 
% $|\lambda_t-\lambda^*|<\delta'$ for all then 
% $$\lambda_{t+T}$$

% Since $\lim_{t\to \infty}\epsilon(t)=0$, there exists $T'$ such that 
% $\epsilon(t)<\epsilon$ for all $t>T'$. 

% By continuity, for sufficiently small

% of the solution with respect to 
% the initialization and the 

% , we can pick $\epsilon$ sufficiently small  such that $$|\lambda_{T+T_{k}}-\phi(T,\lambda_{T_k})|<\frac{\delta}{2}, \forall t>T'.$$

% Since $\lim_{t\to \infty}\epsilon(t)=0$, there exists $T'$ such that 
% $\epsilon(t)<\epsilon$ for all $t>T'$. 

% \textcolor{red}{$\epsilon$ only depends on $\delta,T$. So $T'$ depends only on $\delta,T$. So $T'$ is independent of $k$. }

% Delete $\{T_k\}_{k=1}^\infty$'s elements that are no larger  than $T'$. Then $T_k>T',\forall k$. So 
% $$|\lambda_{T_k+T}-s_{T_k+T}|<\frac{\delta}{2}, \forall k.$$

% Since $$s_{T_k+T}\in(\lambda^*-\frac{1}{2}\delta,\lambda^*+\frac{1}{2}\delta),$$ 
% we have 
% $$\lambda_{T_k+T}\in (\lambda^*-\delta,\lambda^*+\delta),\forall k.$$

% \textbf{Conclusion: If $\lambda^l\neq0$ and $\lambda^l\neq \lambda^*$, then for any $\delta>0$, there exists $T_k\to \infty$ such that 
% $$\lambda_{T_k}\in (\lambda^*-\delta,\lambda^*+\delta),\forall k.$$}

In the next we prove~\eqref{eq:contradiction} leads to a contradiction. 

Since $|g'(\lambda^*)|<1$,
there exists  $\delta>0$, such that $$|g'(\lambda)|<d<1 \text{ for all } \lambda\in (\lambda^*-\delta,\lambda^*+\delta)$$

% When $\lambda_t\in (\lambda^*-\delta,\lambda^*+\delta)$, we have 
% $$|\lambda_{t+1}-\lambda^*|=|g(\lambda_t)-g(\lambda^*)+\mu(t)|\leq d|\lambda_t-\lambda^*|+|\mu(t)|.$$

% Since $\lim_{t\to \infty} \mu(t)=0$ and $T+T_k\to \infty$, we may assume 

% First of all, there exists $T_k\to \infty$ such that 
% $$\lambda_{T_k}\in (\lambda^*-\frac{1}{2}\delta,\lambda^*+\frac{1}{2}\delta),\forall k.$$

% Consider the dynamics 
% $$\lambda_{t+1}=g(\lambda_t)-\epsilon(t).  $$

Since $\lim_{t\to \infty}\epsilon(t)=0$ and Equation~\eqref{eq:contradiction}, for any 
$\epsilon>0$,
there exists $T$ such that 

$$\lambda_{T}\in (\lambda^*-\frac{1}{2}\delta,\lambda^*+\frac{1}{2}\delta) \text{ and } \epsilon(t)<\epsilon,\forall t>T.$$

When $\lambda_t\in (\lambda^*-\delta,\lambda^*+\delta)$, we have 
\begin{equation}
|\lambda_{t+1}-\lambda^*|=|g(\lambda_t)-g(\lambda^*)-\epsilon(t)|\leq d|\lambda_t-\lambda^*|+\epsilon. 
\label{eq:shuaijian}
\end{equation}

Since $\lambda_{T}\in (\lambda^*-\frac{1}{2}\delta,\lambda^*+\frac{1}{2}\delta)$,
we may pick $\epsilon$ sufficiently small such that 
$$\lambda_t\in (\lambda^*-\delta,\lambda^*+\delta),\forall t>T.$$

Then Equation~\eqref{eq:shuaijian} holds for all $t>T$.

Moreover, since $d<1$, there exists $T'>0$ such that 
$$|\lambda_t-\lambda^*|<2\epsilon,\forall t>T'.$$

This is the definition of the limit 
$$\lim_{t\to\infty} \lambda_t=\lambda^*.$$

So $\lambda^l\neq \lambda^*$ leads to a contradiction.

\end{itemize}

\noindent Therefore $\lambda_t$ cannot has any limit point except than $\lambda^*$. So $\lambda_t$ converges to $\lambda^*$.

\end{proof}

\textbf{Proof of Theorem~\ref{thm:rmsprop_global_convergence}:}
\begin{proof}
 By  Lemma~\ref{thm:convergence of x_t}, $x_t$ converges to zero. By Lemma~\ref{thm:convergence of lambda_t}, $\lambda_t$ converges to $\frac{1-\beta_2^{\frac{1}{2(k-2)}}}{\eta}.$  By definition, we have
 $$\frac{x_{t+1}}{x_t}=1-\eta\lambda_t.$$
 Take the limit, one get
 $$\lim_{t\to \infty}\frac{x_{t+1}}{x_t}=\beta_2^{\frac{1}{2(k-2)}}.$$

\end{proof}

\section{Convergence Rate of GD and Momentum on Degenerated Polynomials}

\begin{theorem}[\textbf{Power-Law Convergence of Gradient Flow}]
\label{oldthm:gd_power_law}
Consider the gradient flow dynamics on $L(x) = \frac{1}{k}x^k$ with $k \ge 4$ and is even:
\begin{equation}
    \frac{dx}{dt} = -\eta \nabla L(x) = -\eta x^{k-1}, \quad x(0) = x_0 \ne 0.
\end{equation}
For any learning rate $\eta > 0$, the solution $x(t)$ exhibits an asymptotic power-law decay:
\begin{equation}
    x(t) = \Theta\left(t^{-\frac{1}{k-2}}\right) \quad \text{as } t \to \infty.
\end{equation}
\end{theorem}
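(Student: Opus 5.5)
The plan is to solve the separable ODE $\dot{x} = -\eta x^{k-1}$ in closed form and read off the asymptotics. Since $k$ is even, $k-1$ is odd, so $x^{k-1}$ has the sign of $x$. The constant solution $x\equiv 0$ together with uniqueness (the right-hand side is locally Lipschitz) shows that the trajectory never crosses zero. Hence $x(t)$ keeps the sign of $x_0$, and by the symmetry $x\mapsto -x$ of the equation we may assume $x_0>0$. Then $\dot x<0$ and $x(t)$ decreases monotonically while staying positive.

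On this trajectory we separate variables. Divide by $x^{k-1}$ and use $\frac{d}{dt} x^{-(k-2)} = -(k-2)x^{-(k-1)}\dot x = (k-2)\eta$. Integrating from $0$ to $t$ gives the explicit identity
\begin{equation*}
x(t)^{-(k-2)} = x_0^{-(k-2)} + (k-2)\eta t .
\end{equation*}
The right-hand side is positive and finite for all $t\ge 0$. This confirms global existence, with no blow-up and no finite-time arrival at $0$, and it yields the closed form $x(t) = \bigl(x_0^{-(k-2)} + (k-2)\eta t\bigr)^{-1/(k-2)}$ (times $\mathrm{sgn}(x_0)$ in general).

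For the asymptotics, write the closed form as $x(t) = t^{-1/(k-2)}\bigl(x_0^{-(k-2)}/t + (k-2)\eta\bigr)^{-1/(k-2)}$. The bracket tends to $(k-2)\eta>0$, so $x(t)\, t^{1/(k-2)} \to ((k-2)\eta)^{-1/(k-2)}$. In particular, for $t\ge 1$ the ratio $|x(t)|/t^{-1/(k-2)}$ is bounded above by $((k-2)\eta)^{-1/(k-2)}$ and below by $(x_0^{-(k-2)}+(k-2)\eta)^{-1/(k-2)}$. This gives $x(t)=\Theta(t^{-1/(k-2)})$ for every fixed $\eta>0$, with constants depending on $\eta,k,x_0$.

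None of these steps is a real obstacle. The only point that needs care is justifying the division by $x^{k-1}$, that is, showing $x(t)\neq 0$ for all $t$, and the uniqueness argument above handles it.
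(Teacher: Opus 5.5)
Your proposal is correct and follows the paper's own proof essentially step for step. You reduce to $x_0>0$, separate variables to obtain $x(t)^{-(k-2)} = x_0^{-(k-2)} + (k-2)\eta t$, and read off the $t^{-1/(k-2)}$ asymptotics; your uniqueness argument for $x(t)\neq 0$ and your explicit two-sided bounds for $t\ge 1$ make rigorous two points the paper only asserts.
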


\begin{proof}
Without loss of generality, assume $x_0 > 0$. The dynamics remain in the positive domain, allowing us to write $\dot{x} = -\eta x^{k-1}$. We employ the separation of variables method:
\begin{equation}
    \frac{dx}{x^{k-1}} = -\eta \, dt.
\end{equation}
Integrating from time $0$ to $t$:
\begin{equation}
    \int_{x_0}^{x(t)} y^{-(k-1)} \, dy = -\eta \int_0^t ds.
\end{equation}
Since $k \ge 4$, we have $k-2 > 0$. Evaluating the integral yields:
\begin{equation}
    \left[ \frac{y^{-(k-2)}}{-(k-2)} \right]_{x_0}^{x(t)} = -\frac{1}{k-2} \left( x(t)^{-(k-2)} - x_0^{-(k-2)} \right) = -\eta t.
\end{equation}
Rearranging terms to solve for $x(t)$:
\begin{equation}
    x(t)^{-(k-2)} = x_0^{-(k-2)} + (k-2)\eta t.
\end{equation}
Taking the power of $-\frac{1}{k-2}$, we obtain the explicit solution:
\begin{equation}
    x(t) = \left[ x_0^{-(k-2)} + (k-2)\eta t \right]^{-\frac{1}{k-2}}.
\end{equation}
As $t \to \infty$, the term $(k-2)\eta t$ dominates the initial condition $x_0^{-(k-2)}$. Thus, the asymptotic behavior is given by:
\begin{equation}
    x(t) \sim \left[ (k-2)\eta t \right]^{-\frac{1}{k-2}} = \Theta(t^{-\frac{1}{k-2}}).
\end{equation}
This confirms that the convergence rate is algebraic dependent on the degeneracy order $k$.
\end{proof}

\begin{theorem}[\textbf{Power-Law Convergence of Momentum}]
\label{oldthm:momentum_power_law}
Consider the continuous-time Heavy-ball momentum dynamics with momentum factor $\beta_1 \in [0, 1)$:
\begin{align}
    \dot{m} &= -(1-\beta_1) m + (1-\beta_1) \nabla L(x), \quad m(0)=0, \label{eq:mom_m} \\
    \dot{x} &= -\eta m. \label{eq:mom_x}
\end{align}
For $L(x) = \frac{1}{k}x^k$ with $k \ge 4$ and is even, the trajectory $x(t)$ satisfies:
\begin{equation}
    x(t) = \Theta\left(t^{-\frac{1}{k-2}}\right) \quad \text{as } t \to \infty.
\end{equation}
Thus, momentum fails to improve the power-law exponent of Gradient Descent.
\end{theorem}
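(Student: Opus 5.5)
The plan is to reduce the system to a second-order damped nonlinear oscillator and then compare it with gradient flow (Theorem~\ref{oldthm:gd_power_law}) through the normalized momentum ratio. This is the continuous-time analogue of the variable $\omega_t$ from Eq.~\eqref{eq:change_of_vars}. Write $a\coloneqq 1-\beta_1\in(0,1]$. Differentiating Eq.~\eqref{eq:mom_x} and substituting Eq.~\eqref{eq:mom_m} gives
\begin{equation*}
\ddot x + a\dot x + a\eta\, x^{k-1}=0 .
\end{equation*}
This system has the Lyapunov function $E\coloneqq \tfrac12\dot x^2+\tfrac{a\eta}{k}x^k$, which satisfies $\dot E=-a\dot x^2\le 0$. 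Hence trajectories are bounded. Since $E$ is strictly decreasing off the set $\{\dot x=0\}$, and the only invariant subset of that set is the origin, LaSalle's invariance principle gives $(x,m)\to(0,0)$.

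Next I would introduce the ratio $r\coloneqq m/x^{k-1}$ wherever $x\neq 0$. A direct computation from Eqs.~\eqref{eq:mom_m}--\eqref{eq:mom_x} gives
\begin{equation*}
\dot r = a(1-r) + (k-1)\eta\, r^2 x^{k-2}.
\end{equation*}
The key step is to show that, after some finite time $T_0$, the trajectory stays in the region
\begin{equation*}
\mathcal R_c=\{x>0,\ 0\le m\le c\,x^{k-1}\}
\end{equation*}
with $c=2$, or in the mirror region with $x<0$.

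Invariance of $\mathcal R_c$ for small $|x|$ is checked on its boundaries. On $\{m=0\}$ we have $\dot m=a x^{k-1}>0$. On $\{r=c\}$ we have $\dot r=a(1-c)+(k-1)\eta c^2x^{k-2}<0$ as soon as $x^{k-2}<a(c-1)/((k-1)\eta c^2)$. Inside $\mathcal R_c$, $\dot x=-\eta m\le 0$ and $x>0$, so $x$ never leaves the small-$x$ zone.

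Once the trajectory is in $\mathcal R_c$ and $x\to 0$, the ODE for $r$ is a linear relaxation $\dot r=a(1-r)+o(1)$. A Gr\"onwall/comparison argument then gives $r(t)\to 1$.

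For the rate, set $y\coloneqq x^{-(k-2)}$. Then $\dot y=(k-2)\eta\, r(t)$, and $r$ is bounded in $[0,c]$ and tends to $1$, so $y(t)/t\to (k-2)\eta$. This yields $x(t)=\bigl((k-2)\eta t\,(1+o(1))\bigr)^{-1/(k-2)}=\Theta(t^{-1/(k-2)})$, exactly as in the proof of Theorem~\ref{oldthm:gd_power_law}. The sign-symmetric case is identical because $k$ is even.

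The main obstacle is the entry step: proving that the trajectory does not cross $x=0$ infinitely often, and that it eventually enters some $\mathcal R_c$ with the correct sign of $m$. For large $x_0$ the oscillator can genuinely overshoot, so this is not automatic.

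My first attempt is an overdamping argument near the origin. When the energy is $E$, the amplitude is $A\sim E^{1/k}$, so the effective restoring frequency is $\sqrt{a\eta}\,A^{(k-2)/2}$. Once $E$ is small this is much smaller than the damping rate $a$.

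Concretely, consider a half-oscillation that starts at a turning point $\dot x=0$ with $x=A>0$. Over it I would compare $\dot x$ with the quasi-static solution $-\eta x^{k-1}$ and show that $|\dot x|\le 2\eta x^{k-1}$ is maintained. This means the trajectory enters $\mathcal R_2$ and cannot reach $x=0$. If $\dot x$ never vanishes after some time, then $x$ is eventually monotone and of one sign, and the same invariance argument applies after possibly enlarging $c$.

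If this direct comparison becomes messy, the fallback is to count zeros of $x$ using the decrease of $E$ between consecutive zeros. Each crossing costs a definite fraction of the energy only while the system is underdamped, which bounds the number of crossings.
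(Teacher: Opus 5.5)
\textbf{Gap: the monotone tail after the last turning point.}

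Most of your argument is correct and rigorous: the reduction, the LaSalle step, the ratio equation $\dot r=a(1-r)+(k-1)\eta r^2x^{k-2}$, the invariance of $\mathcal{R}_2\cap\{x^{k-2}<a/(4(k-1)\eta)\}$, and the identity $\dot y=(k-2)\eta r$. This is more than the paper's dominant-balance heuristic gives. Your worry about infinitely many crossings is settled by your own barrier. A turning point $(A,0)$ with $A^{k-2}<a/(4(k-1)\eta)$ lies on the edge $m=0$ of that invariant set. Turning-point amplitudes would tend to $0$, so there are only finitely many turning points.

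The gap is your last step: ``if $\dot x$ never vanishes after some time \dots\ the same invariance argument applies after possibly enlarging $c$.'' The barrier on $\{r=c\}$ needs $x^{k-2}<a(c-1)/((k-1)\eta c^2)\approx a/((k-1)\eta c)$. So enlarging $c$ shrinks the admissible window. What you actually need is a late time at which $r\,x^{k-2}=m/x$ is below about $a/((k-1)\eta)$, and nothing in your argument produces one. The last turning point may have large amplitude, so this monotone tail is exactly the case that matters.

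\textbf{Why this cannot be repaired: the statement fails for exceptional initial data.} In the $(x,m)$ plane the origin is a degenerate node, with linearization eigenvalues $0$ and $-a$. Besides the center-manifold approach $m=x^{k-1}+O(x^{2k-3})$, where $r\to1$ and your argument works, there is a strong-stable orbit $m=\frac{a}{\eta}x+O(x^{k-1})$. On it $\dot x=-ax+O(x^{k-1})$, so $x(t)\sim Ce^{-at}$ and $m/x\to a/\eta$, which lies above every barrier window.

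This orbit is reachable with $m(0)=0$. Trace it backward in time inside $\{m>x^{k-1}>0\}$. There $x$ increases and $dm/dx=\frac{a}{\eta}(1-x^{k-1}/m)\le a/\eta$, so $m$ grows at most linearly in $x$ while $x^{k-1}$ grows superlinearly. The orbit therefore crosses the nullcline $m=x^{k-1}$, after which $m$ decreases to $0$ at some finite $x^*>0$. Equivalently: small $x_0$ converge without overshoot by your barrier, and large $x_0$ overshoot (after rescaling the damping becomes negligible). Both conditions are open, so by connectedness some $x_0$ does neither.

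For $x_0=\pm x^*$ the decay is exponential, not $\Theta(t^{-1/(k-2)})$. The paper's own proof has the same hole: this is precisely its ``inertia $\sim$ friction'' balance, which it dismisses with a heuristic that does not actually exclude it.

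\textbf{Fix.} Add a hypothesis such as $|x_0|^{k-2}<a/(4(k-1)\eta)$. Then $m(0)=0$ places the initial point in your invariant region at $t=0$, and your proof is complete as written. Alternatively, state the result only for initial conditions whose trajectory does not eventually lie on this strong-stable orbit.
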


\begin{proof}
We analyze the system by reducing it to a second-order differential equation. Differentiating \eqref{eq:mom_x} with respect to time gives $\ddot{x} = -\eta \dot{m}$. Substituting $\dot{m}$ from \eqref{eq:mom_m}:
\begin{equation}
    \ddot{x} = -\eta \left[ -(1-\beta_1)m + (1-\beta_1)\nabla L(x) \right].
\end{equation}
Using $m = -\frac{1}{\eta}\dot{x}$, we obtain the damped nonlinear oscillator equation:
\begin{equation} \label{eq:ode_mom}
    \underbrace{\ddot{x}}_{\text{Inertia}} + \underbrace{(1-\beta_1)\dot{x}}_{\text{Friction}} + \underbrace{\eta(1-\beta_1)x^{k-1}}_{\text{Gradient Force}} = 0.
\end{equation}

To rigorously determine the asymptotic behavior as $t \to \infty$, we employ the \textit{Method of Dominant Balance}. Since $\ddot{x}$ and $\dot{x}$ differ by a time derivative, they inherently scale differently for any asymptotically vanishing polynomial solution, precluding a simultaneous three-term balance. Thus, we systematically evaluate all three possible pairwise balances among the terms in Eq.~\eqref{eq:ode_mom} to identify the uniquely consistent asymptotic regime. Because the friction coefficient $(1-\beta_1)$ is always greater than zero, the system's energy will eventually decay to zero. Thus we analyze the asymptotic behavior as $x\to 0$.

\textbf{Balance 1: Inertia $\sim$ Friction $\gg$ Gradient.} \\
Assume the system is dominated by inertia and friction, such that $\ddot{x} \sim -(1-\beta_1)\dot{x}$. The solution to this linear ODE is $x(t) \sim C_1 + C_2 e^{-(1-\beta_1)t}$. For the trajectory to converge to the minimum $x=0$, we must have $C_1 = 0$, yielding $x(t) \sim C_2 e^{-(1-\beta_1)t}$. 
We must check if the neglected Gradient term is indeed smaller than the residual of the dominant terms. Substituting $x(t)$ into the full equation, the residual of the dominant terms is exactly $0$ (since it is the homogeneous solution). However, the neglected Gradient term scales as $x^{k-1} \sim e^{-(k-1)(1-\beta_1)t} \neq 0$. Since the non-zero gradient force continuously drives the system towards the minimum, it cannot be balanced by a zero residual. Physically, this exponential behavior only captures the transient ``coasting'' phase, not the global asymptotic convergence driven by the polynomial basin. Thus, this balance is invalid as $t \to \infty$.

\textbf{Balance 2: Inertia $\sim$ Gradient $\gg$ Friction.} \\
Assume the system is dominated by inertia and the gradient, i.e., $\ddot{x} \sim -\eta(1-\beta_1)x^{k-1}$. This represents a conservative, undamped oscillator. Multiplying by $\dot{x}$ and integrating gives the energy conservation $\frac{1}{2}\dot{x}^2 + \frac{\eta(1-\beta_1)}{k}x^k \approx E$. As $x \to 0$, the characteristic velocity scales as $\dot{x} \sim x^{k/2}$. 
We now check the neglected Friction term: Friction $\sim \dot{x} \sim x^{k/2}$. By our assumption, Friction must be strictly smaller than the Gradient: $x^{k/2} \ll x^{k-1}$. Since $x \to 0$ asymptotically, this requires $k/2 > k-1$, which implies $k < 2$. However, we are given $k \ge 4$. Thus, Friction actually dominates the Gradient, directly contradicting the assumption. This balance is invalid.

\textbf{Balance 3: Friction $\sim$ Gradient $\gg$ Inertia.} \\
Assume the system is heavily damped, where friction balances the gradient: 
\begin{equation}
    (1-\beta_1)\dot{x} \sim -\eta(1-\beta_1)x^{k-1} \implies \dot{x} \sim -\eta x^{k-1}.
\end{equation}
By solving this first-order ODE, we obtain the asymptotic decay rate $x(t) \sim [ (k-2)\eta t ]^{-\frac{1}{k-2}}$, yielding $x(t) = \Theta\left(t^{-\frac{1}{k-2}}\right)$.
We must verify that the neglected Inertia term is indeed asymptotically smaller than Friction. Differentiating the solution:
\begin{align}
    \dot{x}(t) &\sim t^{-\left(\frac{1}{k-2} + 1\right)} = t^{-\frac{k-1}{k-2}}, \\
    \ddot{x}(t) &\sim t^{-\left(\frac{k-1}{k-2} + 1\right)} = t^{-\frac{2k-3}{k-2}}.
\end{align}
We require $|\ddot{x}| \ll |\dot{x}|$ as $t \to \infty$. Comparing their powers:
\begin{equation}
    -\frac{2k-3}{k-2} < -\frac{k-1}{k-2} \iff 2k-3 > k-1 \iff k > 2.
\end{equation}
Since $k \ge 4$, the condition $k > 2$ is strictly satisfied. The inertia term decays as $O(t^{-1})$ relative to the friction and gradient terms:
\begin{equation}
    \frac{|\ddot{x}|}{|\dot{x}|} \sim \frac{t^{-\frac{2k-3}{k-2}}}{t^{-\frac{k-1}{k-2}}} = t^{-1} \to 0 \quad \text{as } t \to \infty.
\end{equation}

\textbf{Conclusion.} \\
Out of all possible asymptotic balances, only the over-damped regime (Friction $\sim$ Gradient) is mathematically self-consistent for $k \ge 4$. Therefore, the continuous-time momentum dynamics are asymptotically governed by the first-order gradient flow, and the trajectory strictly follows $x(t) = \Theta\left(t^{-\frac{1}{k-2}}\right)$.
\end{proof}

\begin{lemma}[\textbf{Exponential Convergence via Exponential Learning Rate Schedule}]
\label{oldlemma:exponential_lr_schedule}
Under the setup of the degenerate objective $L(x) = \frac{1}{k}x^k$ ($k \ge 4$, even), consider the gradient flow dynamics with a time-varying learning rate $\eta(t)$:
\begin{equation}
    \frac{dx}{dt} = -\eta(t) \nabla L(x) = -\eta(t) x^{k-1}.
\end{equation}
If the learning rate follows an exponential schedule $\eta(t) = \eta_0 e^{\alpha t}$ with $\alpha > 0$ and $\eta_0 > 0$, then the solution $x(t)$ exhibits asymptotic exponential convergence:
\begin{equation}
    x(t) \sim C \exp\left(-\frac{\alpha}{k-2} t\right) \quad \text{as } t \to \infty,
\end{equation}
where $C = \left[\frac{(k-2)\eta_0}{\alpha}\right]^{-\frac{1}{k-2}}$ is a constant determined by the schedule parameters.
\end{lemma}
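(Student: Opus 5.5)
The plan is to solve the ODE explicitly by separation of variables, as in the proof of Theorem~\ref{oldthm:gd_power_law}, and then read off the asymptotics. Without loss of generality take $x_0>0$. Since $\dot x = -\eta(t)x^{k-1}\le 0$ and $x\equiv 0$ is a solution, uniqueness of solutions to this locally Lipschitz ODE keeps $x(t)>0$ for all $t$. We may therefore divide and write
\begin{equation*}
\frac{dx}{x^{k-1}} = -\eta_0 e^{\alpha t}\,dt .
\end{equation*}

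Integrating from $0$ to $t$ gives
\begin{equation*}
\frac{1}{k-2}\left(x(t)^{-(k-2)}-x_0^{-(k-2)}\right)=\frac{\eta_0}{\alpha}\left(e^{\alpha t}-1\right).
\end{equation*}
Solving for $x(t)$ yields the closed form
\begin{equation*}
x(t)=\left[x_0^{-(k-2)}+\frac{(k-2)\eta_0}{\alpha}\left(e^{\alpha t}-1\right)\right]^{-\frac{1}{k-2}} .
\end{equation*}
The bracket is positive and increasing, so $x(t)$ is well defined for all $t\ge 0$. There is no blow-up, because the right-hand side only drives $x$ toward $0$.

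For the asymptotics, factor out $\frac{(k-2)\eta_0}{\alpha}e^{\alpha t}$ from the bracket. The remaining factor is
\begin{equation*}
1+e^{-\alpha t}\left(\frac{\alpha}{(k-2)\eta_0}x_0^{-(k-2)}-1\right),
\end{equation*}
which tends to $1$ as $t\to\infty$. Hence
\begin{equation*}
x(t)\,e^{\frac{\alpha}{k-2}t}\to\left[\frac{(k-2)\eta_0}{\alpha}\right]^{-\frac{1}{k-2}}=C,
\end{equation*}
which is exactly the claimed $x(t)\sim C\exp(-\frac{\alpha}{k-2}t)$. For negative $x_0$ the same argument applies by the symmetry $x\mapsto -x$, since $k-1$ is odd; this yields $-C$, so the statement should be read with $|x(t)|$ or with the sign of $x_0$ absorbed into $C$.

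None of the steps poses a real obstacle. The only point needing care is the sign and positivity argument that justifies the division by $x^{k-1}$, together with noting that $C$ is independent of $x_0$: the initial condition enters only through the exponentially small correction term.
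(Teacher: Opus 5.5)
Your proof is correct and follows the paper's argument essentially step for step: separation of variables gives the same closed form $x(t)^{-(k-2)} = x_0^{-(k-2)} + \frac{(k-2)\eta_0}{\alpha}(e^{\alpha t}-1)$, and the asymptotics come from extracting the dominant $e^{\alpha t}$ term. Your additions tighten the paper's argument without changing the route: the uniqueness argument justifies dividing by $x^{k-1}$, the paper's ``$\approx$'' becomes the genuine limit $x(t)e^{\alpha t/(k-2)}\to C$, and you add the sign caveat for $x_0<0$.
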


\begin{proof}
We employ the separation of variables method. Rewriting the differential equation:
\begin{equation}
    \frac{dx}{x^{k-1}} = -\eta_0 e^{\alpha t} dt.
\end{equation}
Integrating both sides from the initial state $0$ to time $t$:
\begin{equation}
    \int_{x_0}^{x(t)} y^{-(k-1)} dy = -\eta_0 \int_0^t e^{\alpha s} ds.
\end{equation}
Computing the definite integrals:
\begin{equation}
    \left[ \frac{y^{-(k-2)}}{-(k-2)} \right]_{x_0}^{x(t)} = -\frac{\eta_0}{\alpha} (e^{\alpha t} - 1).
\end{equation}
This simplifies to:
\begin{equation}
    \frac{x(t)^{-(k-2)} - x_0^{-(k-2)}}{-(k-2)} = -\frac{\eta_0}{\alpha} (e^{\alpha t} - 1).
\end{equation}
Multiplying by $-(k-2)$ and rearranging for the dominant term:
\begin{equation}
    x(t)^{-(k-2)} = x_0^{-(k-2)} + \frac{(k-2)\eta_0}{\alpha}(e^{\alpha t} - 1).
\end{equation}
As $t \to \infty$, the term $e^{\alpha t}$ dominates both the constant $-1$ and the initial condition term $x_0^{-(k-2)}$. Thus, we have the asymptotic relation:
\begin{equation}
    x(t)^{-(k-2)} \approx \frac{(k-2)\eta_0}{\alpha} e^{\alpha t}.
\end{equation}
Taking the power of $-\frac{1}{k-2}$ on both sides yields the solution:
\begin{equation}
    x(t) \approx \left[ \frac{(k-2)\eta_0}{\alpha} \right]^{-\frac{1}{k-2}} \exp\left( -\frac{\alpha}{k-2} t \right).
\end{equation}
This confirms that the convergence rate is linear (exponential in time) with rate constant $\lambda = \frac{\alpha}{k-2}$.
\end{proof}

\section{Proof of Proposition~\ref{prop:discrete_stability}}

\begin{proposition}[\textbf{Stability of the Acceleration Map}]
\label{oldprop:discrete_stability}
The map in Eq.~\eqref{eq:sharpness_map} has a non-zero fixed point $u^* = 1 - \gamma^{-\frac{1}{k-2}} \in (0, 1)$, locally asymptotically stable if and only if:
\begin{equation}
\gamma < \gamma_{\text{crit}} \coloneqq \left( \frac{k}{k-2} \right)^{k-2}.
% \label{eq:gamma_crit}
\end{equation}
\end{proposition}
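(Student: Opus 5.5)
The proof is a direct one-dimensional linearization of the map $\psi(u)=\gamma u(1-u)^{k-2}$ from Eq.~\eqref{eq:sharpness_map}. We proceed in three steps, assuming $\gamma>1$ as in Eq.~\eqref{eq:gamma_dynamics}.

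\textbf{Locate the fixed point.} Setting $\psi(u)=u$ with $u\neq 0$ gives $\gamma(1-u)^{k-2}=1$. Since $k-2$ is even, the real solutions satisfy $1-u=\pm\gamma^{-1/(k-2)}$. We select the branch $1-u=\gamma^{-1/(k-2)}$, i.e. $u^*=1-\gamma^{-1/(k-2)}$. Because $\gamma>1$, we have $\gamma^{-1/(k-2)}\in(0,1)$, and hence $u^*\in(0,1)$. The other root, $1+\gamma^{-1/(k-2)}>1$, corresponds to $1-u<0$, meaning $x_t$ changes sign. It lies outside the interval $(0,1)$ named in the statement, so we set it aside. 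This choice should be made explicit, since the statement refers to ``the'' non-zero fixed point in $(0,1)$.

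\textbf{Compute the multiplier.} Differentiating gives
\[
\psi'(u)=\gamma(1-u)^{k-3}\bigl[(1-u)-(k-2)u\bigr]=\gamma(1-u)^{k-3}\bigl[1-(k-1)u\bigr].
\]
At $u^*$ we use $\gamma(1-u^*)^{k-2}=1$, so $\gamma(1-u^*)^{k-3}=1/(1-u^*)$. Therefore
\[
\psi'(u^*)=\frac{1-(k-1)u^*}{1-u^*}=1-(k-2)\frac{u^*}{1-u^*}=1-(k-2)\bigl(\gamma^{1/(k-2)}-1\bigr).
\]
This matches the computation of $g'(\lambda^*)$ in the RMSProp lemma, with $\gamma=\beta_2^{-1/2}$.

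\textbf{Apply the one-dimensional criterion and check the boundary.} Since $\gamma>1$, we have $\psi'(u^*)<1$ automatically. The condition $\psi'(u^*)>-1$ is equivalent to $(k-2)(\gamma^{1/(k-2)}-1)<2$, i.e. $\gamma^{1/(k-2)}<k/(k-2)$, i.e. $\gamma<\left(\frac{k}{k-2}\right)^{k-2}=\gamma_{\text{crit}}$. So $|\psi'(u^*)|<1$ exactly when $\gamma<\gamma_{\text{crit}}$, which gives local asymptotic stability by the discrete-time criterion of Appendix~\ref{sec:stability}. If $\gamma>\gamma_{\text{crit}}$, then $|\psi'(u^*)|>1$ and the fixed point is unstable.

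The main obstacle is the ``only if'' direction at the boundary case $\gamma=\gamma_{\text{crit}}$. There $\psi'(u^*)=-1$ and linearization is inconclusive. To settle it, we would compute the flip-bifurcation coefficient of $\psi^2$ at $u^*$, i.e. the sign of the Schwarzian-type quantity $-\,2\psi'''(u^*)-3\psi''(u^*)^2$. If it is negative, the point is asymptotically stable but not linearly stable; if positive, it is unstable. Alternatively, we would adopt the paper's convention of identifying local asymptotic stability with spectral radius $<1$, as in Theorem~\ref{thm:adam_stability}. Under that convention the boundary case is excluded by definition, and the equivalence follows immediately.
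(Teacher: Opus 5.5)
Your argument is essentially the paper's proof: the same fixed point, the same multiplier $\psi'(u^*)=\frac{1-(k-1)u^*}{1-u^*}$ obtained via $\gamma(1-u^*)^{k-2}=1$, and the same observation that only $\psi'(u^*)>-1$ binds, with the paper simply taking ``locally asymptotically stable'' to mean $|\psi'(u^*)|<1$ (your remark on the second root $u=1+\gamma^{-1/(k-2)}$, which the paper ignores, is also correct). On the boundary case, that convention is in fact necessary rather than optional, and your flip-coefficient route would not rescue the literal ``only if'': since $\psi'(u)=\gamma(1-u)^{k-3}[1-(k-1)u]$ has only real roots, $\psi$ has negative Schwarzian, so $-2\psi'''(u^*)-3\psi''(u^*)^2<0$ at $\gamma=\gamma_{\text{crit}}$ (it equals $-96$ for $k=4$); hence $u^*$ is still asymptotically stable there, only sub-exponentially, and the equivalence holds only in the linearized sense that both you and the paper adopt.
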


\label{app:proof_prop_stability}

\begin{proof}
We provide the detailed derivation for the existence and linear stability of the fixed point for the effective sharpness map defined in Eq.~\eqref{eq:sharpness_map}.

Consider the gradient descent update rule for the degenerate loss function $L(x) = \frac{1}{k}x^k$ ($k \ge 4$ and even) with an exponentially increasing learning rate sequence:
\begin{align}
x_{t+1} & = x_t - \eta_t x_t^{k-1} =  (1 - \eta_t x_t^{k-2}) x_t, \label{eq:x_update} \\
\eta_{t+1} &= \gamma \eta_t, \quad (\gamma > 1). \label{eq:eta_update}
\end{align}

\textbf{Reduction to a One-Dimensional Map.} To analyze the long-term behavior of this coupled system, we introduce a dimensionless invariant, the \textit{effective sharpness}, denoted as $u_t$. This variable captures the interplay between the learning rate and the local Hessian geometry ($L''(x) \propto x^{k-2}$):
\begin{equation}
u_t \coloneqq \eta_t x_t^{k-2}.
\end{equation}
The dynamics of the optimization process are fully characterized by the evolution of $u_t$. In the continuous limit (Lem.~\ref{lemma:exponential_lr_schedule}), the product $\eta(t) x(t)^{k-2}$ remains $O(1)$, maintaining stable exponential convergence. However, in the discrete setting, stability depends on whether $u_t$ remains within the ``contractive domain'' $(0, 2)$. If $u_t > 2$, the step size exceeds the stability limit of the local curvature, leading to divergence.

Substituting \eqref{eq:x_update} and \eqref{eq:eta_update} into the definition of $u_t$, we derive the recurrence relation:
\begin{equation*}
\begin{aligned}
u_{t+1} &= \eta_{t+1} (x_{t+1})^{k-2} = (\gamma \eta_t) \cdot \left[ x_t (1 - \eta_t x_t^{k-2}) \right]^{k-2} \\
&= \gamma (\eta_t x_t^{k-2}) (1 - \eta_t x_t^{k-2})^{k-2}.
\end{aligned}
\end{equation*}
This transformation decouples the analysis from explicit time dependence, reducing the high-dimensional system $(x, \eta)$ to a one-dimensional fixed-point iteration map:
\begin{equation}
\boxed{u_{t+1} = f(u_t) = \gamma u_t (1 - u_t)^{k-2}.} \label{eq:map}
\end{equation}
We now analyze the dynamical properties of this discrete map. First, $0$ is a trivial fixed point, which can be easily shown unstable. 

\textbf{i. Existence of the non-trivial Fixed Point.} Let $u^*$ denote a non-zero fixed point of the map \eqref{eq:map}. Setting $u_{t+1} = u_t = u^*$, we obtain:
\begin{equation}
1 = \gamma (1 - u^*)^{k-2} \implies u^* = 1 - \gamma^{-\frac{1}{k-2}}.
\end{equation}
For $\gamma > 1$, we have $u^* \in (0, 1)$. This implies that at equilibrium, the contraction factor $(1-u^*)$ lies in $(0, 1)$, ensuring the monotonic convergence of the parameter $x_t$.

\textbf{ii. Linear Stability Analysis.} The fixed point $u^*$ is locally asymptotically stable if and only if the spectral radius of the map's Jacobian at $u^*$ is less than unity:
\begin{equation}
|f'(u^*)| < 1.
\end{equation}
Differentiating $f(u)$ with respect to $u$:
\begin{equation}
\begin{aligned}
f'(u) &= \gamma \left[ (1-u)^{k-2} - u(k-2)(1-u)^{k-3} \right] \\
&= \gamma (1-u)^{k-3} \left[ 1 - (k-1)u \right].
\end{aligned}
\end{equation}
Evaluating at $u^*$ and utilizing the identity $\gamma (1-u^*)^{k-2} = 1$, we simplify the expression:
\begin{equation}
f'(u^*) = \frac{1}{1-u^*} [ 1 - (k-1)u^* ].
\end{equation}
Let $\mu = \gamma^{-\frac{1}{k-2}} = 1 - u^*$. Substituting $\mu$ yields:
\begin{equation}
f'(u^*) = \frac{1}{\mu} [ 1 - (k-1)(1-\mu) ] = \frac{2-k}{\mu} + (k-1).
\end{equation}

\textbf{iii. The Stability Threshold.} Given $k \ge 4$ and $\mu \in (0, 1)$, the upper bound condition $f'(u^*) < 1$ simplifies to $\frac{2-k}{\mu} + k - 2 < 0$, which is automatically satisfied. The stability is thus determined by the lower bound condition $f'(u^*) > -1$, which leads to:
\begin{equation*}
\frac{2-k}{\mu} + k - 1 > -1 \implies \frac{k-2}{\mu} < k \implies \mu > \frac{k-2}{k}.
\end{equation*}
Substituting $\mu = \gamma^{-\frac{1}{k-2}}$, we obtain the critical upper bound for the growth factor:
\begin{equation}
\gamma < \gamma_{\text{crit}} \coloneqq \left(\frac{k}{k-2} \right)^{k-2} = \left( 1 + \frac{2}{k-2} \right)^{k-2}. 
% \label{eq:gamma_crit}
\end{equation}
\end{proof}

\section{Intuitive Understanding of Decoupling between Second Moment $v_t$ and Squared Gradients $g_t^2$}
\label{app:decoupling_mechanism}

Given that exponential learning rate scaling is a prerequisite for linear convergence on degenerate landscapes (as established in Lem.~\ref{lemma:exponential_lr_schedule}), we now investigate how adaptive algorithms naturally induce this scaling. The underlying mechanism relies on a dynamic \textit{decoupling} between the second moment estimate $v_t$ and the instantaneous squared gradient $g_t^2$. Specifically, if the gradient signal decays faster than the memory decay rate of the optimizer, $v_t$ ceases to track $g_t^2$ and instead follows its own inertial dynamics.

To formalize this, we define the \textit{coupling ratio}, $R_t^{(v)}$, which quantifies the relative magnitude of the optimizer's accumulated state to the instantaneous signal:
\begin{equation}
    R_t^{(v)} \coloneqq \frac{v_t}{g_t^2}.
\end{equation}
The evolution of this ratio determines whether the optimizer faithfully tracks the current landscape geometry (``Coupling'') or is dominated by historical inertia (``Decoupling'').

\paragraph{The Dynamics of Coupling Ratio.} By substituting the update rule $v_t = \beta_2 v_{t-1} + (1-\beta_2)g_t^2$ into the definition of $R_t^{(v)}$, we derive the following recurrence relation:
\begin{align*}
    R_t^{(v)} &= \beta_2 \frac{v_{t-1}}{g_t^2} + (1-\beta_2) \nonumber \\
    &= \beta_2 R_{t-1}^{(v)} \frac{g_{t-1}^2}{g_t^2} + (1-\beta_2) \coloneqq \left(\frac{\beta_2}{\rho_t}\right) R_{t-1}^{(v)} + (1-\beta_2).
\end{align*}
Here, $\rho_t \coloneqq g_t^2 / g_{t-1}^2$ represents the local decay rate of the squared gradient. The dynamical behavior of the optimizer is governed by the competition between the signal decay $\rho_t$ and the memory factor $\beta_2$:
\begin{itemize}
    \item \textbf{Coupled Phase ($\rho_t > \beta_2$):} The gradient signal decays slowly relative to the memory fading. The recurrence is dominated by the source term $(1-\beta_2)$, ensuring that the state estimate accurately tracks the instantaneous geometry ($v_t \approx g_t^2$).
    \item \textbf{Decoupled Phase ($\rho_t < \beta_2$):} The signal vanishes faster than the optimizer's forgetting rate. The inertial term $(\beta_2/\rho_t)R_{t-1}^{(v)}$ dominates (since $\beta_2/\rho_t > 1$), causing $v_t$ to disconnect from the gradient and enter a regime of autonomous exponential decay ($v_t \approx \beta_2 v_{t-1}$).
\end{itemize}

\paragraph{Triggering the Exponential Stage.}
Assuming standard initialization $v_{0}=g_0^2$, $v_t$ is initially tightly coupled with $g_t^2$, thus the early training dynamics approximate SignGD with a constant step size $\eta$. Consequently, the gradient decay rate $\rho_t$ evolves as:
\begin{equation}
\footnotesize
\rho_t = \left( \frac{x_t}{x_{t-1}} \right)^{2k-2} \approx \left( \frac{x_{t-1} - \eta}{x_{t-1}} \right)^{2k-2} = \left( 1 - \frac{\eta}{x_{t-1}} \right)^{2k-2}.
\label{eq:signGD_init}
\end{equation}
Initially, for a small learning rate relative to the parameter magnitude ($\eta \ll x_0$), we have $\rho_t \approx 1 > \beta_2$, keeping the system in the \textbf{Coupled Phase}. However, as $x_t$ gets smaller, $\rho_t$ decreases monotonically. Crucially, once the threshold is crossed such that $\rho_t < \beta_2$, the system undergoes a phase transition. $v_t$ enters the \textbf{free-decay regime}, scaling as $v_t \propto \beta_2^t$. According to Lem.~\ref{lemma:exponential_lr_schedule}, this geometric decay of the denominator acts as an implicit exponential learning rate schedule, driving the parameter convergence at a linear rate determined by $\beta_2$:
\begin{equation}
    |x_t| \propto \beta_2^{\frac{t}{2(k-2)}} \implies |g_t| = |x_t|^{k-1} \propto \beta_2^{\frac{t(k-1)}{2(k-2)}}.
\end{equation}
This explains why the parametric trajectory is initially attracted to the non-trivial fixed point corresponding to linear convergence, even though that fixed point is ultimately unstable.

\textbf{Condition for Coupling of $m_t$ and $g_t$.}
For this exponential convergence to be sustainable, a crucial asymmetry must be maintained: while $v_t$ must decouple to provide acceleration, the first moment $m_t$ must \textbf{remain coupled} to the gradient. This implies that $m_t$ must accurately track the direction and magnitude of the rapidly shrinking gradient $g_t$ rather than being dominated by stale historical momentum. Mathematically, this requires the decay factor of the gradient, denoted as $r_t \coloneqq g_t / g_{t-1}$, to be \textit{slower} (i.e., numerically larger) than the decay factor of the momentum history ($\beta_1$). The persistence condition is thus $r_t > \beta_1$.

Substituting the asymptotic decay rate of the gradient derived from the $v_t$-driven exponential stage:
\begin{equation}
    r_t \approx \frac{g_t}{g_{t-1}} \approx \beta_2^{\frac{k-1}{2(k-2)}}.
\end{equation}
Consequently, the boundary ensuring a persistent exponential convergence phase is given by:
\begin{equation} \label{eq:phase_boundary}
    \beta_1 < \beta_2^{\frac{k-1}{2(k-2)}}.
\end{equation}
If this condition is violated ($\beta_1 > r_t$), $m_t$ decays slower than $g_t$, causing the momentum term to dominate the update excessively. This prevents the formation of a stable exponential trajectory, pushing the system into Regime III (no exponential stage). 

However, sustained convergence necessitates local stability. Thus, while the condition in Eq.~\eqref{eq:phase_boundary} induces a transient phase of linear convergence, the inherent instability of the fixed point eventually precipitates a spike (Regime II). True convergence is guaranteed only under the stricter condition $\beta_1 < \beta_2^{\frac{k}{2(k-2)}}$ (Regime I).

\section{Quadratic Case Analysis}
\label{app:quadratic_analysis}
In this section, we provide the fixed point analysis of quadratic case: $L(x) = \frac{1}{2} x^2$.

\subsection{Fixed Point Analysis}
Similarly, consider Adam:
\[
\begin{aligned}
&m_t = \beta_1 m_{t-1} + (1 - \beta_1) x_t^{k-1} \\
&v_t = \beta_2 v_{t-1} + (1 - \beta_2) x_t^{2k-2} \\
&x_{t+1} = x_t - \eta \cdot \frac{m_t}{\sqrt{v_t}}
\end{aligned}
\]
Then we have:
$$x_{t+1}=(1-\eta\frac{m_t}{x_t\sqrt{v_t}})x_t=(1-\eta \frac{m_t}{x_t^{k-1}}\times\frac{x_t^{k-2}}{\sqrt{v_t}})x_t$$

For $k=2$, define $$\omega_t=\frac{m_t}{x_t^{k-1}}, \text{ \quad } \lambda_t=\frac{1}{\sqrt{v_t}}.$$

Similar to Eq.~\eqref{eq:dynamical_system},  for $k=2$, the iterative equation for the state variables is:
\begin{equation}
\left\{
\begin{aligned}
& \omega_{t+1} = \frac{\beta_1 \omega_t}{1-\eta \omega_t \lambda_t} + 1 - \beta_1 \\
& \lambda_{t+1} = \frac{\lambda_t}{\sqrt{\beta_2 + (1-\beta_2)(\lambda_t x_{t+1})^2}} \\
& x_{t+1} = (1-\eta \omega_t \lambda_t) x_t
\end{aligned}
\right.
\end{equation}

\textbf{1. Calculation of Fixed Points.}

Let the fixed point be denoted by $(\omega^*, \lambda^*, x^*)$. At a fixed point, the variables must satisfy $(z_{t+1} = z_t)$, yielding the following system:

\begin{align}\omega^* &= \frac{\beta_1 \omega^*}{1-\eta \omega^* \lambda^*} + 1 - \beta_1 \label{eq:fp1} \\
\lambda^* &= \frac{\lambda^*}{\sqrt{\beta_2 + (1-\beta_2)(\lambda^* x^*)^2}} \label{eq:fp2} \\
x^* &= (1-\eta \omega^* \lambda^*) x^* 
\label{eq:fp3}
\end{align}

Step 1: Analyze Eq. \eqref{eq:fp3}
$$x^* (1 - (1-\eta \omega^* \lambda^*)) = 0 \implies x^* (\eta \omega^* \lambda^*) = 0$$

Since we assume the learning rate $\eta > 0$, we have two cases: $x^* = 0$. $x^* \neq 0$, which implies $\omega^* \lambda^* = 0$.

Step 2: Analyze Eq. \eqref{eq:fp2}
$$\lambda^* \sqrt{\beta_2 + (1-\beta_2)(\lambda^* x^*)^2} = \lambda^*$$

This implies either $\lambda^* = 0$ or $\sqrt{\beta_2 + (1-\beta_2)(\lambda^* x^*)^2} = 1$. 

Assuming standard hyperparameters where $\beta_2 \in [0, 1)$: If $\lambda^* \neq 0$, then $\beta_2 + (1-\beta_2)(\lambda^* x^*)^2 = 1 \implies (1-\beta_2)(\lambda^* x^*)^2 = 1-\beta_2 \implies (\lambda^* x^*)^2 = 1$. If $\lambda^* = 0$, the equation holds trivially (denominator is $\sqrt{\beta_2} \neq 0$).

Step 3: Analyze Eq. \eqref{eq:fp1}
$$\omega^* (1 - \frac{\beta_1}{1-\eta \omega^* \lambda^*}) = 1 - \beta_1$$

Combining constraints: 

\textbf{Case A ($x^* = 0$):} From Eq. \eqref{eq:fp2}, if $x^*=0$, then $\lambda^* = \lambda^*/\sqrt{\beta_2}$. 
Since $\beta_2 < 1$, this requires $\lambda^* = 0$. Substituting $\lambda^*=0$ into Eq. \eqref{eq:fp1}:
$$\omega^* = \beta_1 \omega^* + 1 - \beta_1 \implies \omega^*(1-\beta_1) = 1-\beta_1 \implies \omega^* = 1$$
This yields the fixed point $P_0 = (1, 0, 0)$.

\textbf{Case B ($x^* \neq 0$):} From Step 1, we must have $\omega^* \lambda^* = 0$. If $\lambda^* = 0$: Eq. \eqref{eq:fp1} gives $\omega^* = 1$. Eq. \eqref{eq:fp2} is satisfied. If $\omega^* = 0$: Eq. \eqref{eq:fp1} gives $0 = 0 + 1-\beta_1 \implies \beta_1 = 1$, which contradicts $\beta_1 < 1$. 

This results in a line of fixed points $(1, 0, c)$ for any $c \in \mathbb{R}$. 

\textbf{2. Local Stability Analysis}

By calculation, the Jacobian matrix at the fixed point $\omega^*=1,\lambda^*=0,x^*\in \sR$ is 
$$\vM=\begin{pmatrix}
    1& \eta\beta_1&0\\
    0&\frac{1}{\sqrt{\beta_2}}&0\\
    0&-\eta x^*&1
\end{pmatrix}.$$

The eigenvalues of $\vM$
are $1,1,\frac{1}{\sqrt{\beta_2}}$. Since $\frac{1}{\sqrt{\beta_2}}>1$, the fixed point $\omega^*=1,\lambda^*=0,x^*$ is unstable for each $x^*\in \sR.$

Therefore, for a quadratic loss, the Adam system possesses an unstable, trivial fixed point. Since the corresponding exponentially convergent fixed point does not exist, \textbf{the system cannot converge and dose not have stable convergence Regime I.}

\subsection{Convergence Rate Estimation Before Loss Spikes}

For RMSProp $m_t = g_t$, if we give the learning rate an exponential scaling mechanism, that is, $\eta = \beta_2^{-t/2}$, we will see that it achieves initial superlinear convergence on quadratic functions. However, due to the stability constraints imposed by discreteness, a loss spike will eventually occur. The stability analysis for $k=2$ is:
\begin{equation}
\begin{aligned}
x_{t+1} & = x_t - \eta_t x_t =  (1 - \eta_t \cdot 1) x_t,\\
\eta_{t+1} &= \gamma \eta_t, \quad (\gamma > 1).
\end{aligned}
% \label{eq:gamma_dynamics}
\end{equation}
Thus this stability is necessarily unsatisfied and a spike necessarily occurs. The following continuous-time lemma describes the rate of superlinear convergence before the spike.
\begin{lemma}[\textbf{Super-Linear Convergence via Exponential Learning Rate Schedule for $k=2$}]
\label{lemma:super_exponential_lr_schedule_k2}
Under the setup of the strongly convex objective $L(x) = \frac{1}{k}x^k$ ($k = 2$), consider the gradient flow dynamics with a time-varying learning rate $\eta(t)$:
\begin{equation}\frac{dx}{dt} = -\eta(t) \nabla L(x) = -\eta(t) x.\end{equation}
If the learning rate follows an exponential schedule $\eta(t) = \eta_0 e^{\alpha t}$ with $\alpha > 0$ and $\eta_0 > 0$, then the solution $x(t)$ exhibits asymptotic super-exponential (double exponential) convergence:
\begin{equation}x(t) = x_0 \exp\left( \frac{\eta_0}{\alpha} \right) \exp\left(-\frac{\eta_0}{\alpha} e^{\alpha t}\right) \quad \text{as } t \to \infty.
\end{equation}
\end{lemma}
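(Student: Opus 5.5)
The plan is to solve the linear ODE explicitly by separation of variables, exactly as in the proof of Lemma~\ref{oldlemma:exponential_lr_schedule}. The only change is that for $k=2$ the integrand $1/x$ produces a logarithm instead of a power.

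First, fix $x_0 \ne 0$ and write the equation as
\[
\frac{dx}{x} = -\eta_0 e^{\alpha t}\,dt.
\]
By uniqueness for linear ODEs, the solution never crosses $0$, so it keeps the sign of $x_0$. We may therefore integrate $\ln|x|$ from $0$ to $t$, which gives
\[
\ln\frac{|x(t)|}{|x_0|} = -\frac{\eta_0}{\alpha}\left(e^{\alpha t}-1\right).
\]
Exponentiating and restoring the sign yields
\[
x(t) = x_0 \exp\!\left(\frac{\eta_0}{\alpha}\right)\exp\!\left(-\frac{\eta_0}{\alpha}e^{\alpha t}\right).
\]
This identity holds exactly for every $t \ge 0$, so the claimed relation ``as $t\to\infty$'' holds a fortiori.

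To justify calling this super-exponential convergence, I will check that $\ln|x(t)| = O(1) - \frac{\eta_0}{\alpha}e^{\alpha t}$. It follows that $|x(t)|e^{ct}\to 0$ for every $c>0$, so the convergence beats any exponential rate.

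There is no genuine obstacle here. The only point needing care is the sign and nonvanishing of $x(t)$, which is settled by uniqueness: $x\equiv 0$ is itself a solution, so no other trajectory can reach $0$. I will also remark, tying back to the discussion above the statement, that the effective sharpness $\eta(t)\cdot 1 = \eta_0 e^{\alpha t}$ grows without bound. So the discrete analogue must eventually violate $u_t<2$, which is why this continuous-time rate describes only the pre-spike phase.
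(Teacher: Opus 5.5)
Your proposal is correct and follows the paper's proof exactly: separate variables, integrate to get $\ln x(t) - \ln x_0 = -\frac{\eta_0}{\alpha}(e^{\alpha t}-1)$, and exponentiate. Your two additions are minor but tidy: you handle the sign through uniqueness, whereas the paper simply assumes $x_0, x(t) > 0$ without loss of generality, and you check explicitly that $|x(t)|e^{ct} \to 0$ for every $c > 0$, which justifies the ``super-exponential'' label.
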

\begin{proof}
We employ the separation of variables method. Rewriting the differential equation:
\begin{equation}\frac{dx}{x} = -\eta_0 e^{\alpha t} dt.
\end{equation}
Integrating both sides from the initial state $x_0$ to $x(t)$ (assuming $x_0, x(t) > 0$ for simplicity, without loss of generality):
\begin{equation}\int_{x_0}^{x(t)} \frac{1}{y} dy = -\eta_0 \int_0^t e^{\alpha s} ds.
\end{equation}
Computing the definite integrals. Note that for $k=2$, the integral of $1/y$ results in the natural logarithm, distinct from the power law form for $k > 2$:
\begin{equation}\left[ \ln y \right]_{x_0}^{x(t)} = -\frac{\eta_0}{\alpha} (e^{\alpha t} - 1).
\end{equation}
This simplifies to:
\begin{equation}\ln x(t) - \ln x_0 = -\frac{\eta_0}{\alpha} (e^{\alpha t} - 1).
\end{equation}
Rearranging terms to solve for $\ln x(t)$:
\begin{equation}\ln x(t) = \ln x_0 + \frac{\eta_0}{\alpha} - \frac{\eta_0}{\alpha} e^{\alpha t}.
\end{equation}

Exponentiating both sides to retrieve $x(t)$:
\begin{equation}x(t) = \exp\left( \ln x_0 + \frac{\eta_0}{\alpha} \right) \exp\left( - \frac{\eta_0}{\alpha} e^{\alpha t} \right).
\end{equation}
Defining the constant coefficient $C = x_0 e^{\eta_0/\alpha}$, we obtain:
\begin{equation}x(t) = C \exp\left( - \frac{\eta_0}{\alpha} e^{\alpha t} \right).
\end{equation}
As $t \to \infty$, the term inside the exponent grows exponentially, driving $x(t)$ to zero at a super-exponential rate, significantly faster than the linear convergence observed in the constant learning rate case or the $k \ge 4$ case.
\end{proof}

However, with the introduction of the Adam momentum mechanism, \textbf{if $g_t$ decays too quickly (exceeding the exponential rate $\beta_1$), $m_t$ will be bottlenecked by its own memory}, so the initial convergence before spike of the quadratic function is still linear.

\begin{lemma}[\textbf{Momentum-Limited Convergence under Exponential Schedule for $k=2$}]
\label{lemma:momentum_limited_convergence}Consider the quadratic objective $L(x) = \frac{1}{2}x^2$ optimized by continuous-time gradient descent with momentum. The dynamics are governed by the system:
\begin{align}
\dot{m} &= -(1-\beta_1) m + (1-\beta_1) x, \label{eq:mom_ode_m} \\
\dot{x} &= -\eta(t) m, \label{eq:mom_ode_x}
\end{align}
where $1-\beta_1 > 0$ represents the friction coefficient. Assume $1-\beta_1 > \alpha$. If the learning rate grows exponentially as $\eta(t) = \eta_0 e^{\alpha t}$ (with $\alpha > 0$), the parameter $x(t)$ does \textbf{not} achieve super-exponential convergence. Instead, the convergence is asymptotically bounded by an exponential envelope determined by the momentum decay and the schedule growth rate:
\begin{equation}|x(t)| \lesssim C \exp\left( - \frac{(1-\beta_1) - \alpha/2}{2} t \right).
\end{equation}
This indicates that the "memory" of the momentum term acts as a bottleneck, restricting the system to linear convergence (exponential in time) despite the exponentially growing learning rate.
\end{lemma}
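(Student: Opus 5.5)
The plan is to eliminate $m$ and reduce the system to a damped linear oscillator whose stiffness grows exponentially. Then I would bound the amplitude with an adiabatic-invariant (WKB-type) Lyapunov function. Write $a\coloneqq 1-\beta_1$. From \eqref{eq:mom_ode_x} we get $m=-\dot x/\eta(t)$. Differentiating, using $\dot\eta=\alpha\eta$, and substituting into \eqref{eq:mom_ode_m} gives
\begin{equation*}
\ddot x+(a-\alpha)\dot x+\Omega(t)^2x=0,\qquad \Omega(t)^2\coloneqq a\eta_0e^{\alpha t}.
\end{equation*}
The assumption $a>\alpha$ makes the friction coefficient $\gamma\coloneqq a-\alpha$ positive. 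The frequency satisfies $\dot\Omega=\tfrac{\alpha}{2}\Omega$ and $\Omega\to\infty$, so the dynamics become fast oscillations with slowly varying amplitude. The heuristic target is $x\approx A\,\Omega^{-1/2}e^{-\gamma t/2}\cos(\int\Omega)$. This gives $|x|\lesssim e^{-\alpha t/4}e^{-(a-\alpha)t/2}=e^{-\frac{(a)-\alpha/2}{2}t}$, which is exactly the claimed envelope.

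To make this rigorous I would use the action-type energy
\begin{equation*}
V(t)\coloneqq \frac{\dot x^2}{\Omega}+\Omega x^2 .
\end{equation*}
Its derivative has no $x\dot x$ term, because the $\pm 2\Omega x\dot x$ contributions cancel. A direct computation gives
\begin{equation*}
\dot V=-(2\gamma+\tfrac{\alpha}{2})\frac{\dot x^2}{\Omega}+\tfrac{\alpha}{2}\Omega x^2 .
\end{equation*}
This is not pointwise comparable to $-\gamma V$, because kinetic and potential energy only balance on average over one oscillation. To capture that averaging, I would add a cross term and consider
\begin{equation*}
W\coloneqq V+\frac{c}{\Omega}\,x\dot x,
\end{equation*}
with a constant $c$ chosen so that the coefficients of $\dot x^2/\Omega$ and $\Omega x^2$ in $\dot W$ are both $-\gamma$. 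The cross term contributes
\begin{equation*}
\frac{c}{\Omega}\bigl(\dot x^2-\gamma x\dot x-\Omega^2x^2\bigr)-\frac{c\alpha}{2\Omega}x\dot x .
\end{equation*}
Here $\frac{c}{\Omega}\dot x^2$ is $O(\Omega^{-2})\cdot\frac{\dot x^2}{\Omega}\cdot\Omega$, which needs care, while $-c\Omega x^2$ is the term that shifts the potential coefficient. Taking $c=\tfrac{\alpha}{2}+\gamma$ makes the potential coefficient $-\gamma$. The remaining discrepancies are all bounded by $\frac{C}{\Omega(t)}\,W$, and $W\asymp V$ once $\Omega$ is large.

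This yields a differential inequality
\begin{equation*}
\dot W\le\Bigl(-\gamma+\frac{C'}{\Omega(t)}\Bigr)W
\end{equation*}
for $t\ge t_0$. I expect this step to be the main obstacle: the constants must be tuned so that the kinetic coefficient does not come out worse than $-\gamma$. If a single constant $c$ does not work, I would use a time-dependent $c(t)$, or pass to the Liouville variable $y=\Omega^{1/2}e^{\gamma t/2}x$, for which the equation becomes $\ddot y+\Omega^2\bigl(1+O(\Omega^{-2})\bigr)y=0$ and a standard energy estimate applies.

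Finally, $\Omega^{-1}\propto e^{-\alpha t/2}$ is integrable on $[t_0,\infty)$. Gr\"onwall's inequality then gives $W(t)\le C''e^{-\gamma t}$ with a finite constant and no loss in the exponent. Since $|x|^2\le V/\Omega\lesssim W/\Omega$, we obtain
\begin{equation*}
|x(t)|^2\lesssim e^{-\gamma t}e^{-\alpha t/2},\qquad\text{i.e.}\qquad |x(t)|\lesssim C\exp\Bigl(-\frac{(1-\beta_1)-\alpha/2}{2}\,t\Bigr),
\end{equation*}
which is the stated envelope. The contrast with Lemma~\ref{lemma:super_exponential_lr_schedule_k2} is then clear: the envelope is a single exponential, not a double exponential, because the decay is set by the friction $\gamma$ and the growth of $\Omega$, not by $\int\eta$.
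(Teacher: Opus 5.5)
Your reduction to $\ddot x+(a-\alpha)\dot x+\Omega(t)^2x=0$ and your WKB heuristic are the same as in the paper's proof, but the paper stops there. It quotes the Liouville--Green form $x\approx C\,\Omega^{-1/2}e^{-\gamma t/2}\cos\bigl(\int\Omega+\phi\bigr)$ and reads off the envelope without justifying the approximation. Your energy argument is a genuinely different, rigorous replacement for that step, and it works more cleanly than you feared.

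One correction: $\frac{c}{\Omega}\dot x^2$ is not lower order. It is exactly $c\,\frac{\dot x^2}{\Omega}$, so it moves the kinetic coefficient from $-(2\gamma+\tfrac{\alpha}{2})$ to $-(2\gamma+\tfrac{\alpha}{2})+c$. With $c=\gamma+\tfrac{\alpha}{2}$, both coefficients equal $-\gamma$. No tuning or time-dependent $c(t)$ is needed, and you get the exact identity
\[
\dot W=-\gamma V-\frac{c^{2}}{\Omega}\,x\dot x=-\gamma W-\frac{c\alpha}{2\Omega}\,x\dot x .
\]
Combined with $|x\dot x|\le V/2$ and $|W-V|\le\frac{|c|}{2\Omega}V$, this gives $|\dot W+\gamma W|\le\frac{C}{\Omega}W$ for large $t$. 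Your Gr\"onwall step, using $\int^{\infty}\Omega^{-1}\,dt<\infty$, then yields the displayed bound. A side benefit over the paper: nothing here uses the sign of $\gamma$, so the hypothesis $1-\beta_1>\alpha$ is unnecessary, and decay only needs $1-\beta_1>\alpha/2$.

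What your argument does not yet deliver is the first half of the statement. An upper bound $|x|\lesssim Ce^{-\frac{(1-\beta_1)-\alpha/2}{2}t}$ cannot rule out super-exponential convergence. So your closing claim that the envelope is ``a single exponential, not a double exponential'' is unsupported as written. The same identity supplies the missing half.

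First, $\dot W\ge(-\gamma-\frac{C}{\Omega})W$ gives $V(t)\gtrsim e^{-\gamma t}$ for every nontrivial solution. Next, work in the normalized variables $p=\sqrt{\Omega}\,x$ and $q=\dot x/\sqrt{\Omega}$, so that $p^2+q^2=V$. There one computes $\dot p=\Omega q+\tfrac{\alpha}{4}p$ and $\dot q=-\Omega p-(\gamma+\tfrac{\alpha}{4})q$. The polar angle therefore satisfies $\dot\theta=\Omega+(\gamma+\tfrac{\alpha}{2})\,pq/V\to\infty$, so it keeps rotating and $|p|$ attains $\sqrt{V}$ infinitely often. Hence $\limsup_{t\to\infty}|x(t)|\,e^{\frac{(1-\beta_1)-\alpha/2}{2}t}>0$.

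The paper's WKB formula asserts both directions at once, but only heuristically. Your approach, with this addition, proves both.
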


\begin{proof}
We reduce the system of two first-order ODEs to a single second-order ODE governing $x(t)$.From Eq.~\eqref{eq:mom_ode_x}, we have $m = -\frac{\dot{x}}{\eta(t)}$. Differentiating this with respect to time yields:\begin{equation}\dot{m} = - \frac{\ddot{x}\eta(t) - \dot{x}\dot{\eta}(t)}{\eta(t)^2} = -\frac{\ddot{x}}{\eta(t)} + \frac{\dot{\eta}(t)}{\eta(t)} \frac{\dot{x}}{\eta(t)}.\end{equation}Let $\lambda = 1-\beta_1$. Substituting expressions for $m$ and $\dot{m}$ into Eq.~\eqref{eq:mom_ode_m}:\begin{equation}-\frac{\ddot{x}}{\eta} + \frac{\dot{\eta}}{\eta} \frac{\dot{x}}{\eta} = -\lambda \left( -\frac{\dot{x}}{\eta} \right) + \lambda x.\end{equation}Multiplying through by $-\eta(t)$ and rearranging terms:\begin{equation}\ddot{x} + \left(\lambda - \frac{\dot{\eta}}{\eta}\right)\dot{x} + \lambda \eta(t) x = 0.\end{equation}Substituting the schedule $\eta(t) = \eta_0 e^{\alpha t}$, we have $\frac{\dot{\eta}}{\eta} = \alpha$. 
The dynamics simplify to a damped harmonic oscillator with time-varying stiffness:\begin{equation} \label{eq:damped_oscillator}\ddot{x} + (\lambda - \alpha) \dot{x} + \omega^2(t) x = 0,\end{equation}where the effective stiffness is $\omega^2(t) = \lambda \eta_0 e^{\alpha t}$.
This equation describes an oscillator where the restoring force grows exponentially, but the damping coefficient $(\lambda - \alpha)$ is constant. For stability, we assume $\lambda > \alpha$ (momentum decay is faster than LR growth).To analyze the asymptotic behavior as $t \to \infty$, we employ the Liouville-Green (WKB) approximation. For an equation of the form $\ddot{x} + \gamma \dot{x} + \omega^2(t) x = 0$ with slowly varying or monotonic $\omega(t)$, the asymptotic solution is given by:
\begin{equation}x(t) \approx \frac{C}{\sqrt{\omega(t)}} \exp\left(-\frac{\gamma}{2} t\right) \cos\left(\int \omega(s) ds + \phi\right).
\end{equation}
In our case, $\gamma = \lambda - \alpha$ and $\omega(t) \propto e^{\alpha t / 2}$. Thus, the amplitude envelope $A(t)$ evolves as:
\begin{align}
A(t) &\propto (e^{\alpha t / 2})^{-1/2} \cdot \exp\left(-\frac{\lambda - \alpha}{2} t\right) \\
&= e^{-\alpha t / 4} \cdot e^{-\frac{\lambda}{2} t + \frac{\alpha}{2} t} \\
&= \exp\left( - \frac{\lambda - \frac{\alpha}{2}}{2} t \right).
\end{align}
Substituting $\lambda = 1-\beta_1$, the decay rate is governed by the exponent $\frac{(1-\beta_1) - \alpha/2}{2}$. Crucially, this expression is linear in $t$, implying standard exponential convergence. The momentum term prevents the trajectory from following the super-exponential acceleration that would occur in the memoryless setting.
\end{proof}
\subsection{Empirical Verification of Convergence Rates on Quadratic Loss}

Figure~\ref{fig:quadratic_RMSProp_and_Adam} illustrates the training dynamics of Adam and RMSProp on the simple quadratic objective $L(x)=\frac{1}{2}x^2$. As shown in Figure~\ref{fig:quadratic_RMSProp_and_Adam}(a), RMSProp exhibits superlinear convergence prior to the occurrence of the loss spike. In contrast, Adam maintains a stable linear convergence, with a decay rate that aligns precisely with the theoretical slope of $\frac{(1-\beta_1) - \alpha/2}{2}$, where $\alpha = -\frac{1}{2}\ln(\beta_2)$.

Figures~\ref{fig:quadratic_RMSProp_and_Adam}(b) and (c) analyze the relationship between the second moment estimate $v_t$ and the squared gradient $g_t^2$ during the training of RMSProp and Adam, respectively. It is evident that $g_t^2$ decays significantly faster than $v_t$. Consequently, $v_t$ decouples from the gradient and enters a regime of free decay, evolving as $v_{t} \approx \beta_2 v_{t-1}$. 

Figure~\ref{fig:quadratic_RMSProp_and_Adam}(d) depicts the evolution of the first moment (momentum) and the raw gradient during Adam training. Unlike the second moment, the first moment remains tightly coupled with the gradient throughout the optimization process.

\begin{figure}[!htb]
    \centering
    \subfloat[Loss \label{fig:loss_curve}]{\includegraphics[width=0.44\linewidth]{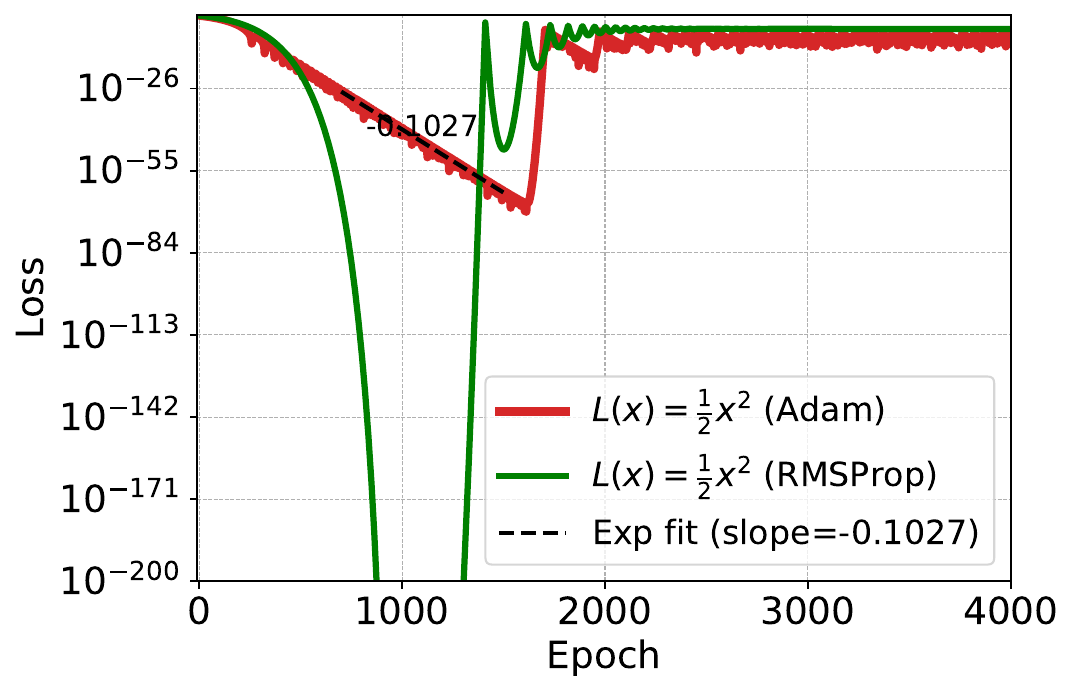}}
    \hspace{0.5cm}
    \subfloat[RMSProp $v_t$ and $g_t^2$ \label{fig:v_rmsprop}]{\includegraphics[width=0.44\linewidth]{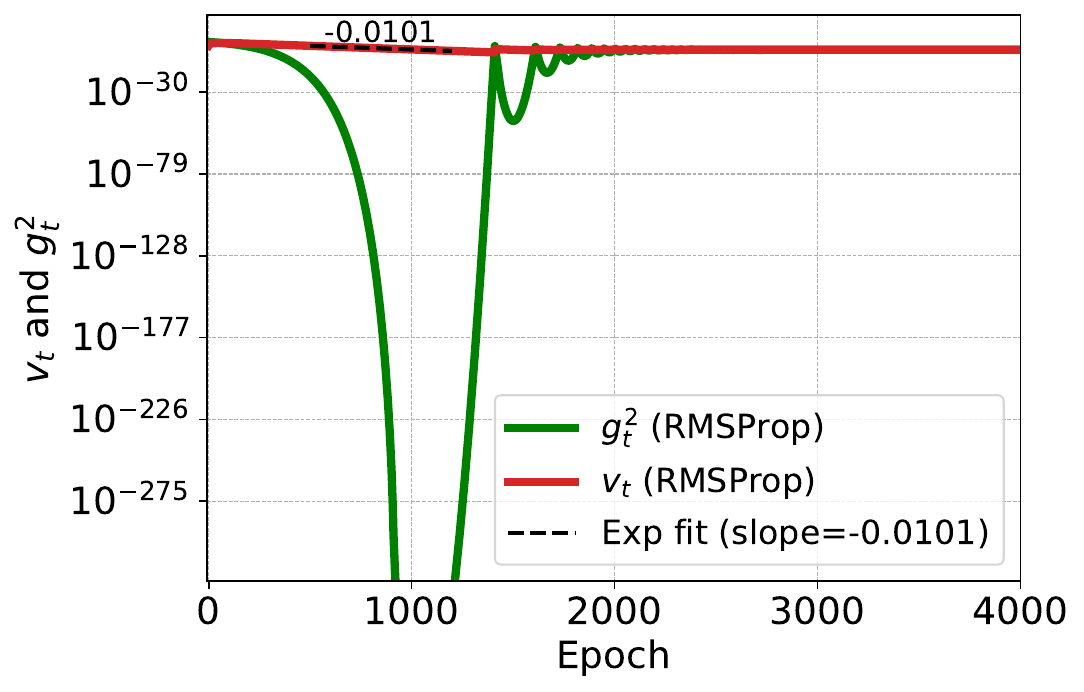}}\\
    \subfloat[Adam $v_t$ and $g_t^2$\label{fig:v_adam}]{\includegraphics[width=0.44\linewidth]{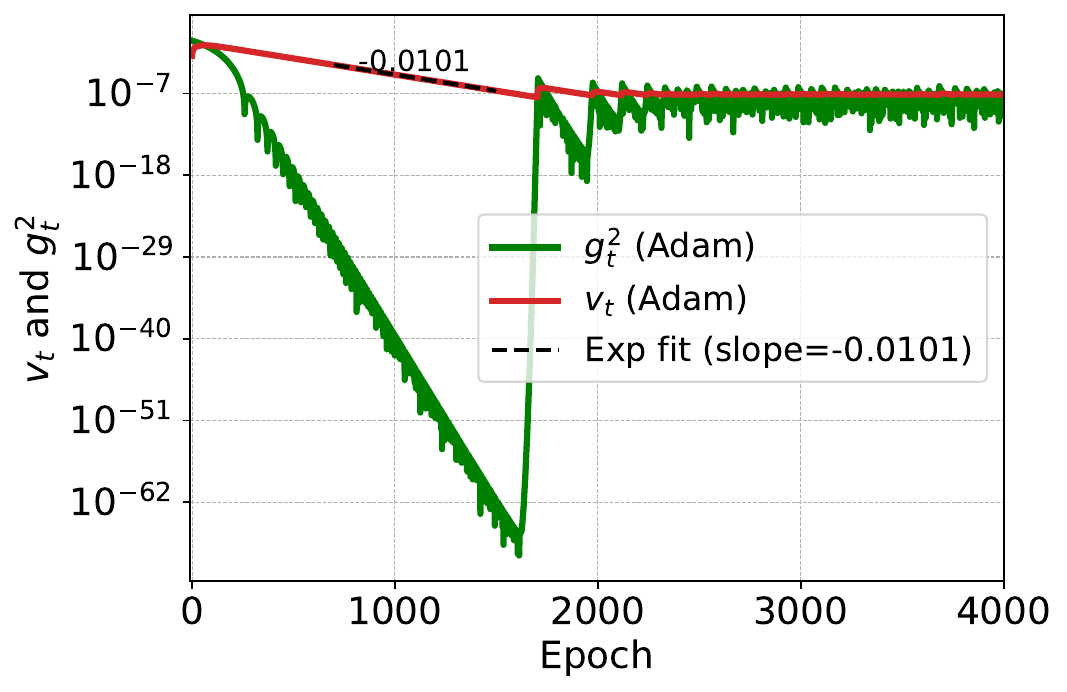}}
    \hspace{0.5cm}
    \subfloat[Adam $|g_t|$ and $|m_t|$\label{fig:m_adam}]{\includegraphics[width=0.44\linewidth]{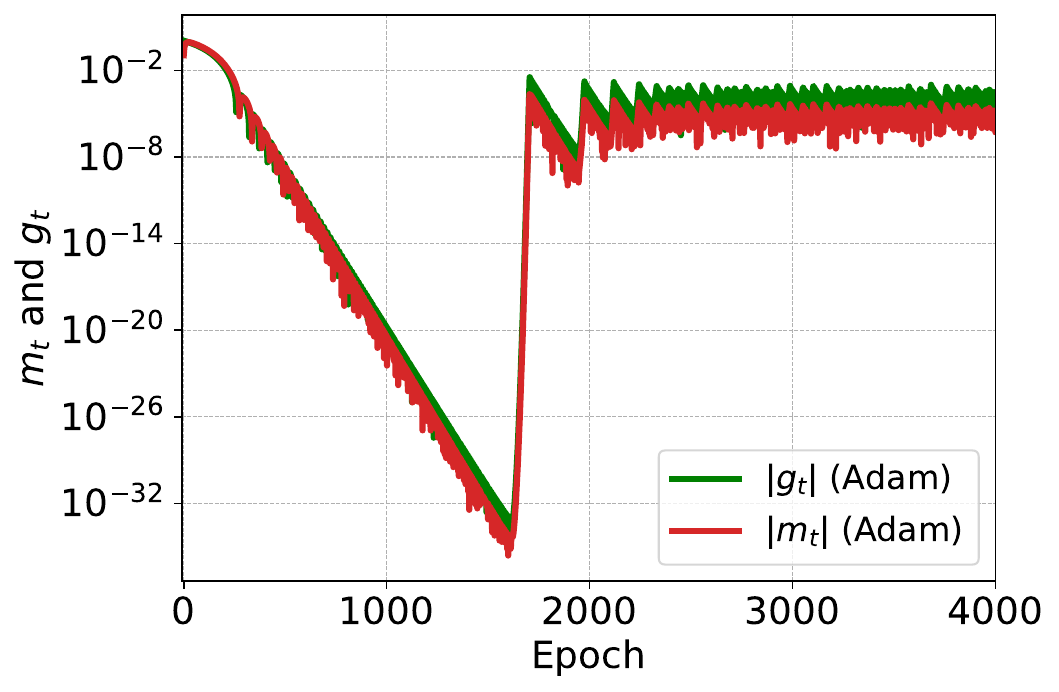}}
    \caption{Training dynamics on quadratic loss $L(x) = \frac{1}{2}x^2$. (a) Loss curves for Adam and RMSProp. (b-c) Comparison of second moment $v_t$ and squared gradient $g_t^2$ for RMSProp and Adam. (d) Comparison of first moment $m_t$ and gradient $g_t$ for Adam.}
    \label{fig:quadratic_RMSProp_and_Adam}
\end{figure}

Below, we present the experimental phase diagram of Adam hyperparameters for quadratic loss.

\begin{figure}[!htb]
    \centering
    \subfloat[Minimum Loss $\min_t L(x_t)$]{
    \includegraphics[width=0.44\linewidth]{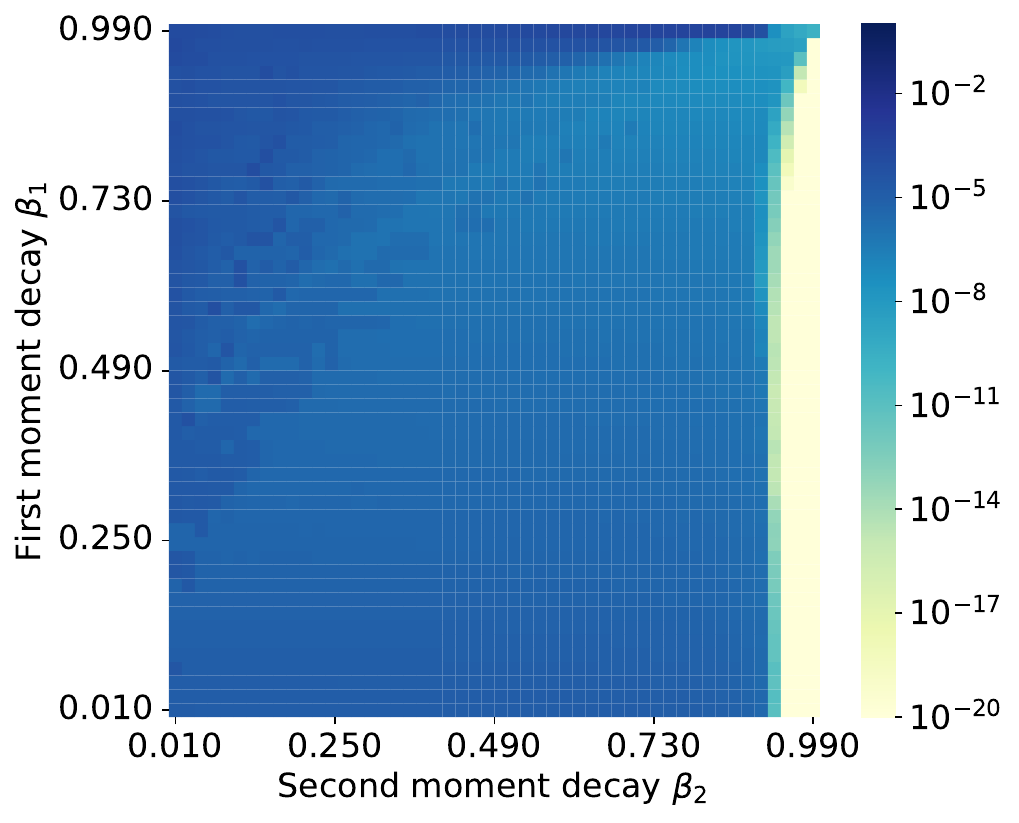}}\hspace{1.0cm}
    \subfloat[Final Loss $L(x_T)$]{
    \includegraphics[width=0.44\linewidth]{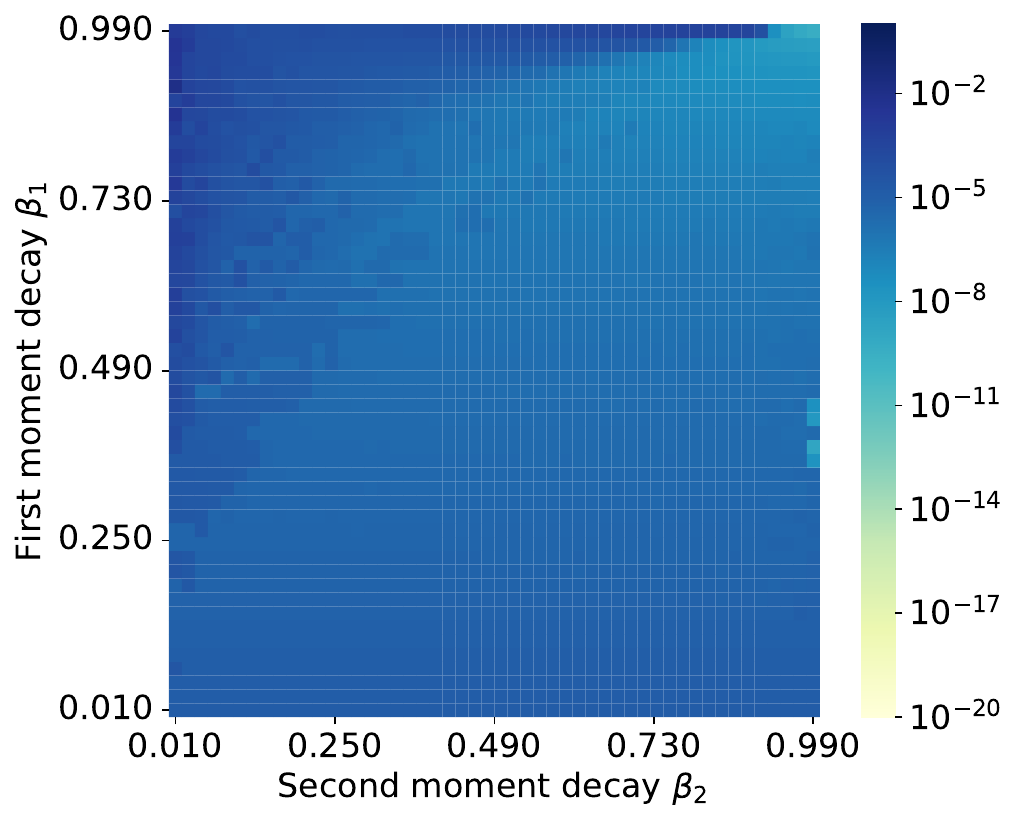}}\\
    \subfloat[Coupling Ratio $\max_t \frac{v_t}{g_t^2}$]{
    \includegraphics[width=0.44\linewidth]{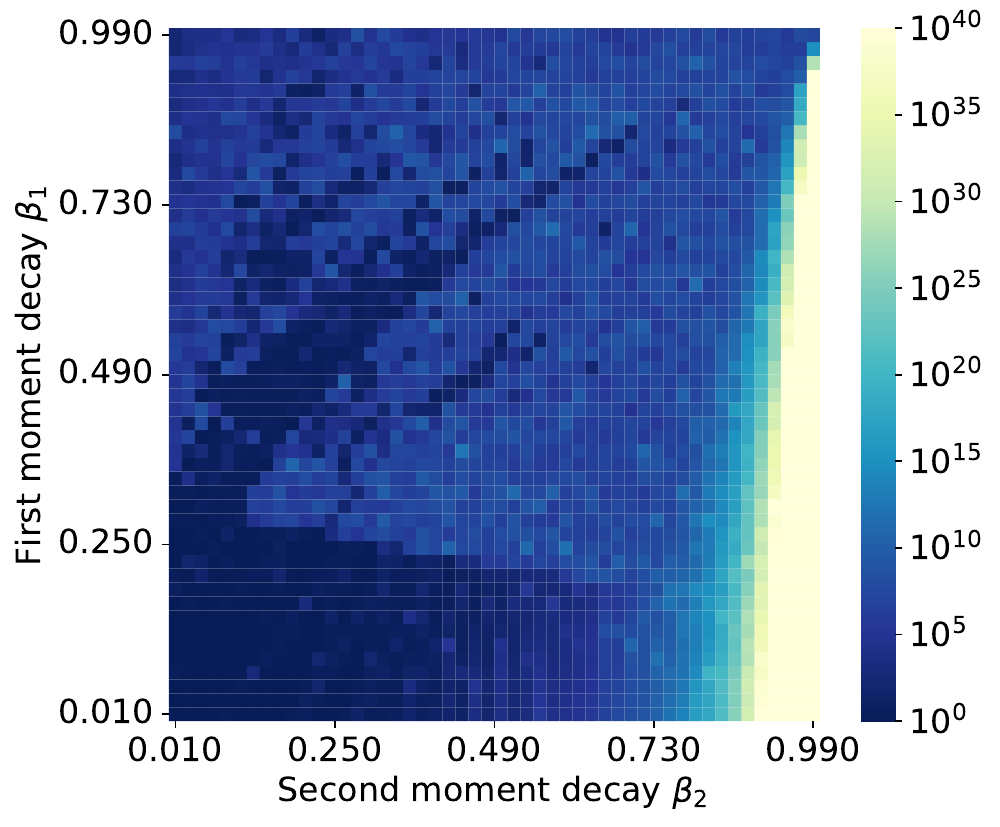}}\hspace{1.0cm}
    \subfloat[Final Coupling Ratio $\frac{v_T}{g_T^2}$]{
    \includegraphics[width=0.44\linewidth]{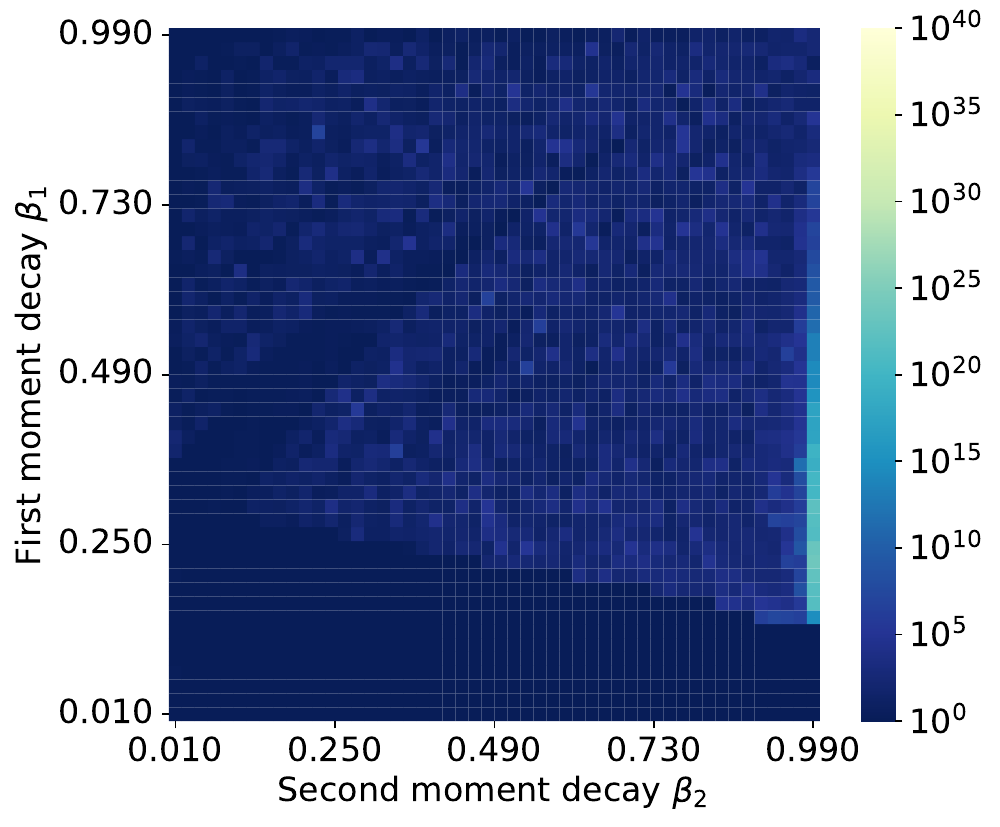}}
    \caption{Empirical phase diagrams of Adam optimizing $f(x) = \frac{x^2}{2}$. A sliding average filter with a window size of $100$ is applied to smooth numerical fluctuations. (a) and (b) show the minimum and final loss of the optimization process, respectively. (c) and (d) display the maximum and final values of the ratio $\frac{v_t}{g_t^2}$.}
    \label{fig:phase_x2}
\end{figure}

\textbf{Experiment settings} The initialization is set as $x_0 = 1.005$ with a learning rate of $\eta = 0.01$, while $\beta_1$ and $\beta_2$ are sampled from a uniform $50 \times 50$ grid spanning the region $[0.01, 0.99)^2$. Using this initialization and learning rate is to avoid the case where signGD~($\beta_1=\beta_2=0$) identifies the exact minimum.

Observed column-wise, the minimum loss increases as $\beta_2$ decreases for a fixed $\beta_1$. This phenomenon aligns with the findings in \cite{bai2025adaptive}: a larger $\beta_2$ induces a lagged response of $v_t$ to $g_t^2$. Specifically, while $g_t$ may decrease to a smaller value, $v_t$ remains relatively large due to this delay. This sustained magnitude ensures that the preconditioner does not exceed its threshold, thereby allowing the model to reach a lower loss.

Observed row-wise, with $\beta_2$ held constant, the minimum loss gradually increases as $\beta_1$ increases. A larger $\beta_1$ elevates the preconditioning threshold. Assuming the initial scale and decay rate of $v_t$ are identical (given a fixed $\beta_2$), a larger $\beta_1$ results in a delayed spike. However, a larger $\beta_1$ also increases the scale of overshooting and causes it to occur earlier in the optimization phase. This is evidenced by the earlier appearance of small-scale oscillations in the loss curve. In this scenario, the momentum causes overshooting before the loss can reach lower levels. 

\section{Supplementary Experiments}
\label{app:supplementary_exp}

In this section, we present additional experimental results that complement the findings in the main text.

\subsection{Zoom-in Analysis of the Phase Diagram}
\label{app:zoomin}

The stability conditions derived in Theorem~\ref{thm:adam_stability} define the boundaries of the convergence region. Figure~\ref{fig:convergence_phase_k_6} provides the theoretical and empirical phase diagrams for the case $k=6$, showing consistent alignment between theory and experiment.

For the case $k=4$, the theoretical stability region (where the spectral radius $\rho(J) < 1$) is primarily governed by the inequality $\beta_1 < \beta_2^{\frac{k}{2(k-2)}}$, which simplifies to $\beta_1 < \beta_2$. However, a secondary constraint appears in the lower-left corner (small $\beta_1$ and $\beta_2$), as shown in Figure~\ref{fig:zoomin_1_2}(a) and magnified in Figure~\ref{fig:zoomin_1_2}(b).

Figures~\ref{fig:zoomin_1_2}(c) and (d) display the heatmaps of the final training loss for the main region and the zoomed-in lower-left region, respectively. In the main region, the theoretical boundary for local stability aligns precisely with the experimental convergence boundary. However, discrepancies emerge in the lower-left corner: the experiments indicate that the loss may still converge in regions theoretically classified as locally unstable.

\begin{figure}[!htb]
    \centering
    \subfloat[Theoretical Phase Diagram]{\includegraphics[height=0.4\linewidth]{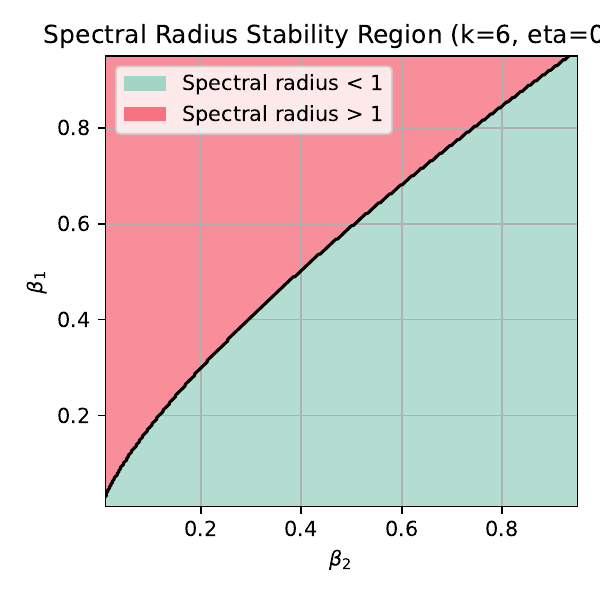}}\hspace{1.0cm}
    \subfloat[Empirical Phase Diagram]{\includegraphics[height=0.41\linewidth]{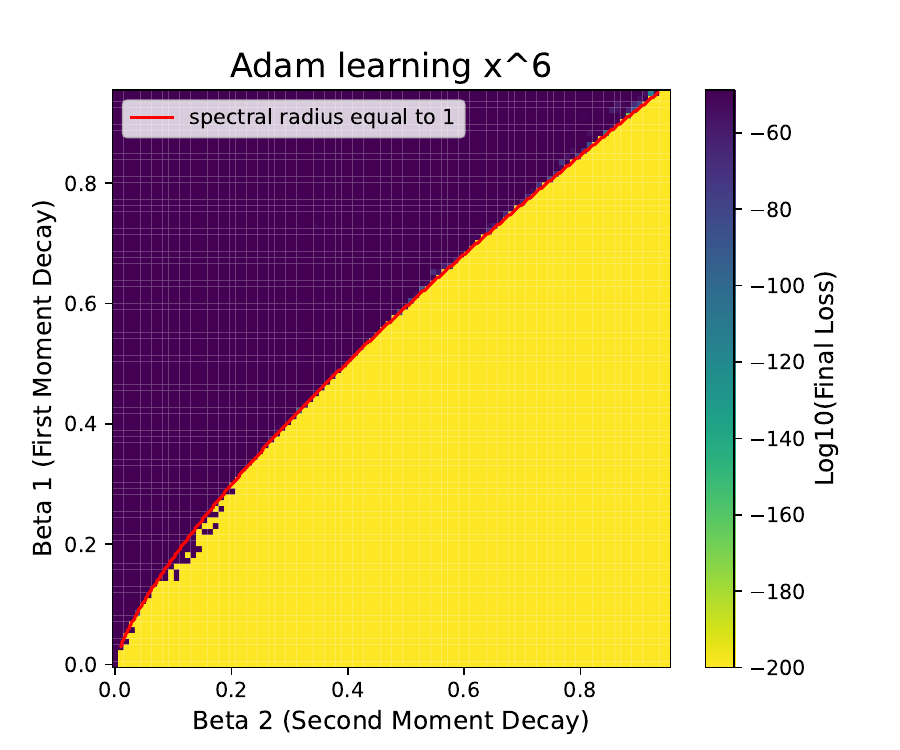}}
    \caption{\textbf{Phase diagrams for degree $k=6$.} (a) Theoretical convergence region partitioned by the stability conditions in Eq.~\eqref{eq:stability_condition_1}. (b) Experimental validation using Adam on $L(x)=\frac{1}{6}x^6$ ($x_0=1.0, \eta=0.001$). The heatmap displays the final training loss after $100,000$ steps.}
    \label{fig:convergence_phase_k_6}
\end{figure}

\begin{figure}[!htb]
    \centering
    \subfloat[Main Region Theory]{\includegraphics[width=0.4\linewidth]{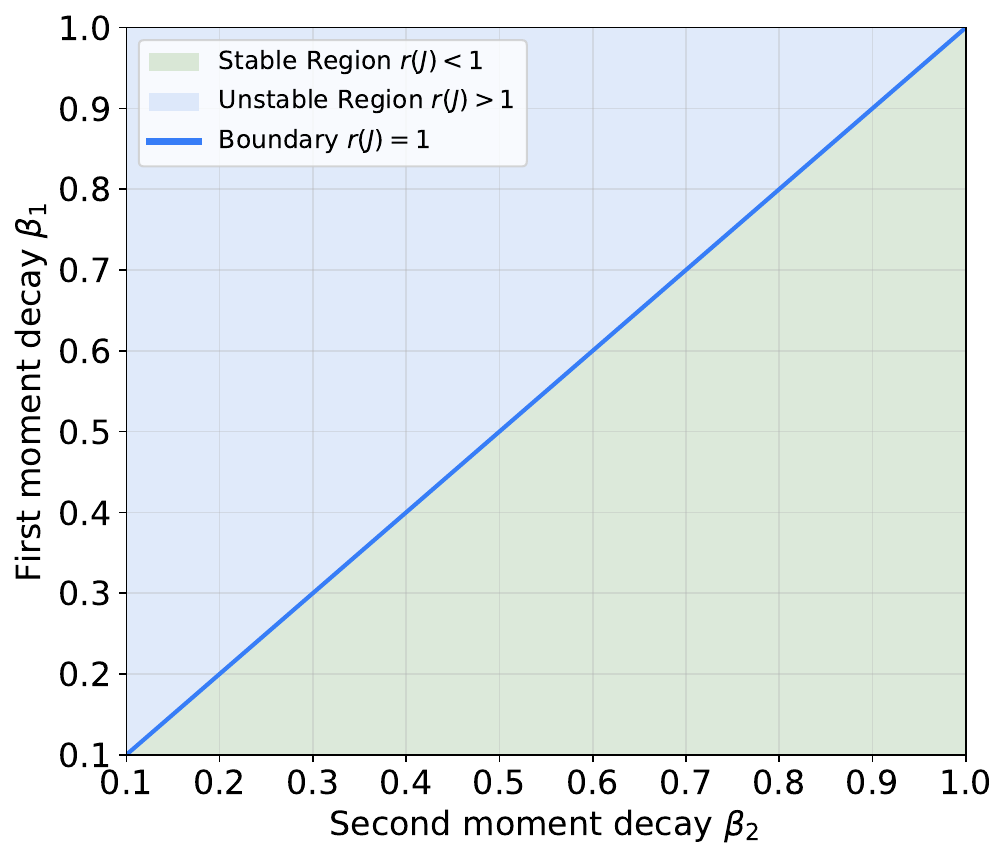}}\hspace{1.8cm}
    \subfloat[Lower-Left Zoom Theory]{\includegraphics[width=0.4\linewidth]{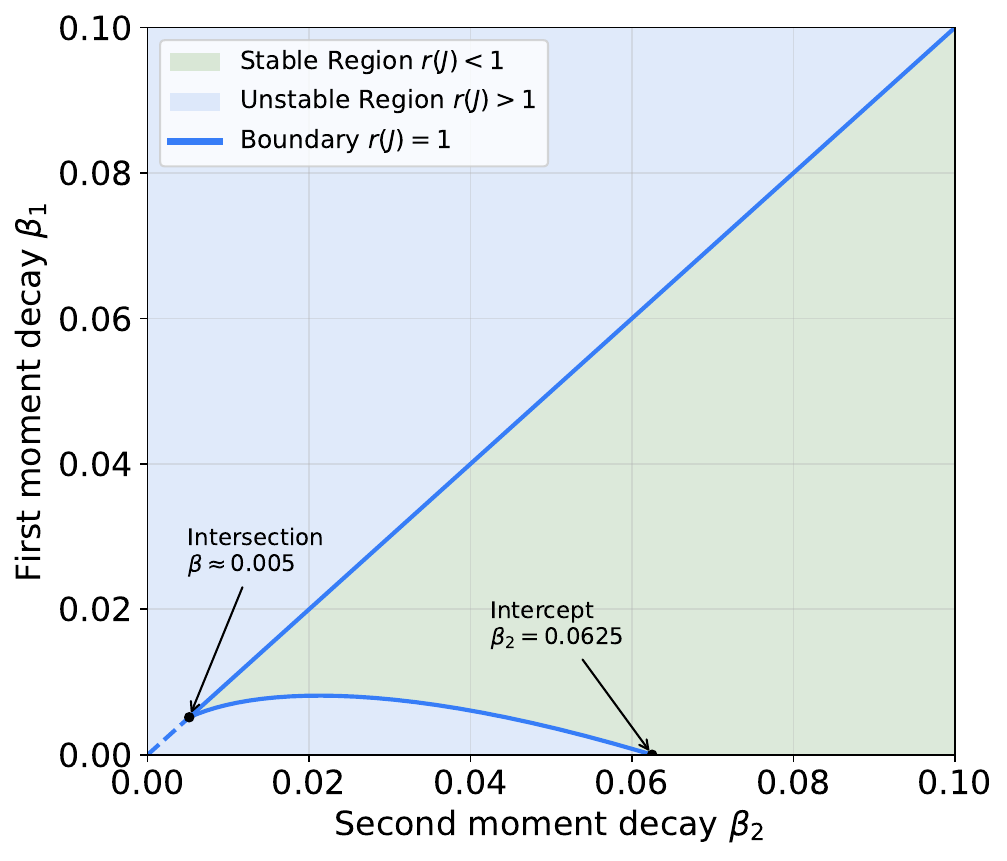}}\\
    \subfloat[Main Region Empirical]{\includegraphics[width=0.5\linewidth]{Figures/Loss_final_heatmap_zoomin_1.pdf}}
    \subfloat[Lower-Left Zoom Empirical]{\includegraphics[width=0.5\linewidth]{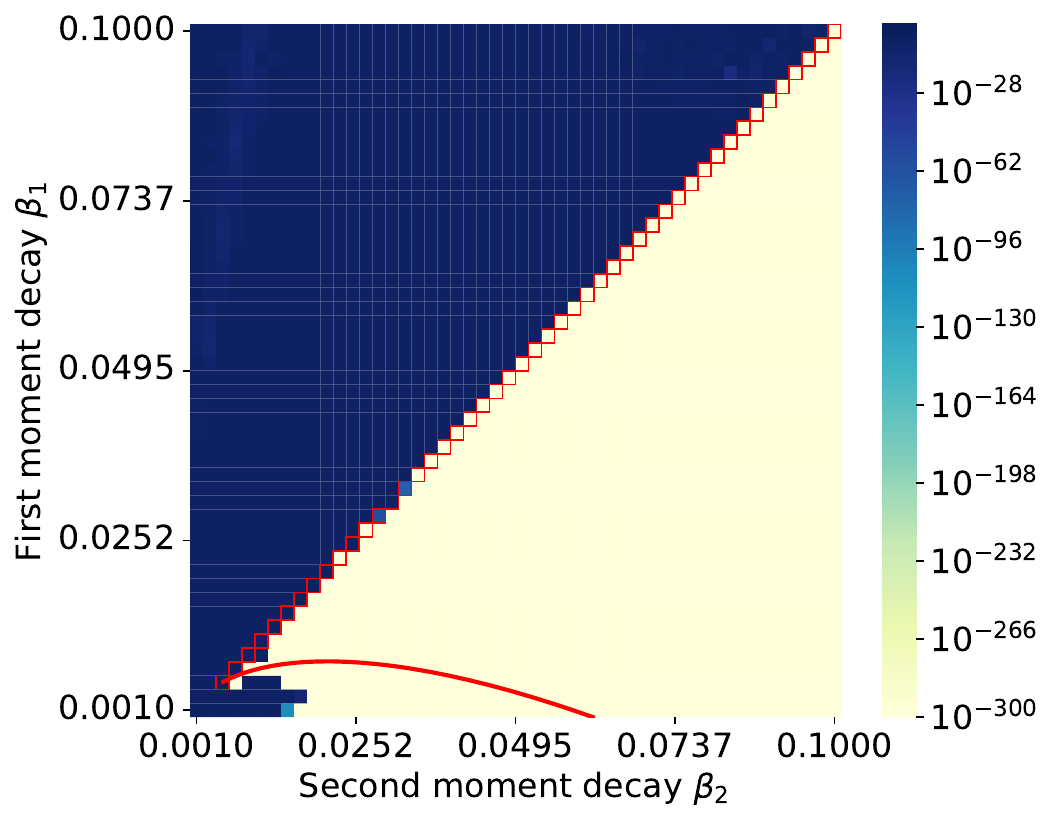}}
    \caption{\textbf{Zoom-in analysis for $k=4$.} (a) Theoretical stability boundary for the full parameter range. (b) Magnified theoretical boundary for the lower-left corner ($\beta_2 \in [0, 0.1]$). (c-d) Corresponding experimental loss heatmaps. Note the convergence observed in theoretically unstable areas in (d).}
    \label{fig:zoomin_1_2}
\end{figure}

Figure~\ref{fig:zoom_in_loss} elucidates the mechanism behind this discrepancy. We fix $\beta_1=0.001$ and vary $\beta_2$ to observe the transition in dynamics:
\begin{itemize}
    \item \textbf{Stable Region:} When $\beta_2$ is sufficiently large, the loss exhibits smooth linear convergence (Figure~\ref{fig:zoom_in_loss}(a)).
    \item \textbf{Period-Doubling:} As $\beta_2$ decreases, the fixed point becomes unstable, leading to period-doubling bifurcations and chaos. However, as long as the effective sharpness satisfies $u_t < 2/\eta$, the system remains bounded and converges, characterized by ``sawtooth'' fluctuations in the loss curve (Figure~\ref{fig:zoom_in_loss}(b)).
    \item \textbf{Instability:} Further decreasing $\beta_2$ leads to different behaviors depending on the existence of non-trivial fixed points. The system may exhibit transient linear convergence followed by a spike or pure oscillation (Figure~\ref{fig:zoom_in_loss}(c)).
\end{itemize}

Figures~\ref{fig:zoom_in_loss}(d) and (e) plot the bifurcation diagrams of the effective sharpness $u_t$ over 500 steps. For low momentum ($\beta_1=0.001$), even after the fixed point loses stability, the system undergoes a period-doubling route to chaos while maintaining $u_t < 2/\eta$, allowing for convergence. In contrast, for high momentum ($\beta_1=0.9$), the onset of instability causes $u_t$ to immediately exceed the $2/\eta$ threshold, preventing convergence.

\begin{figure}[!htb]
    \centering
    \subfloat[Large $\beta_2$ (Stable)]{\includegraphics[width=0.33\linewidth]{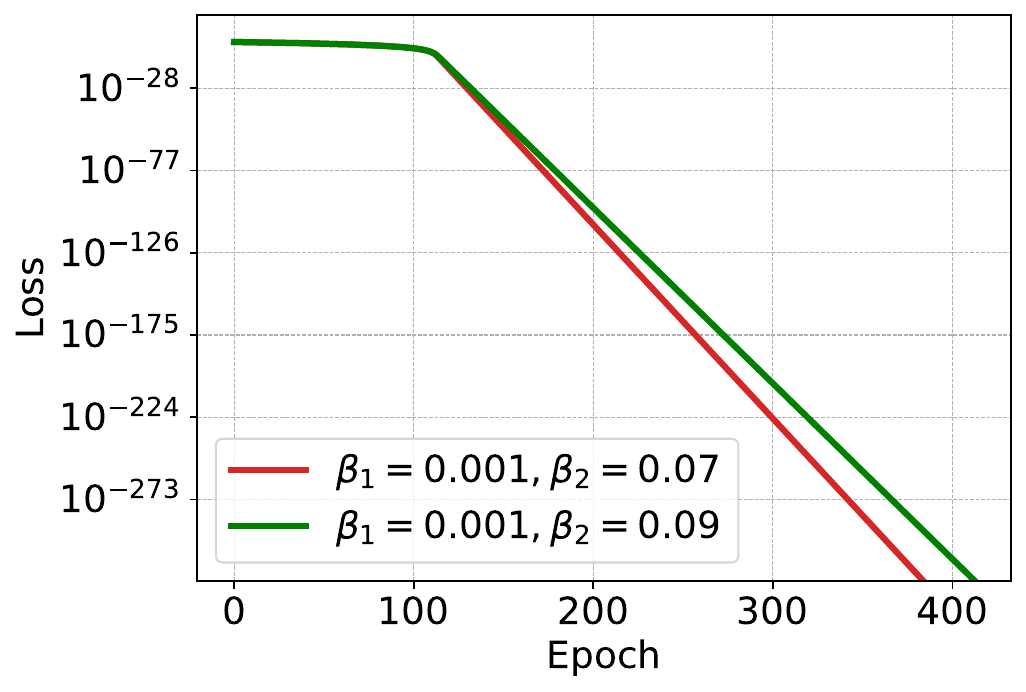}}
    \subfloat[Medium $\beta_2$ (Period Doubling)]{\includegraphics[width=0.33\linewidth]{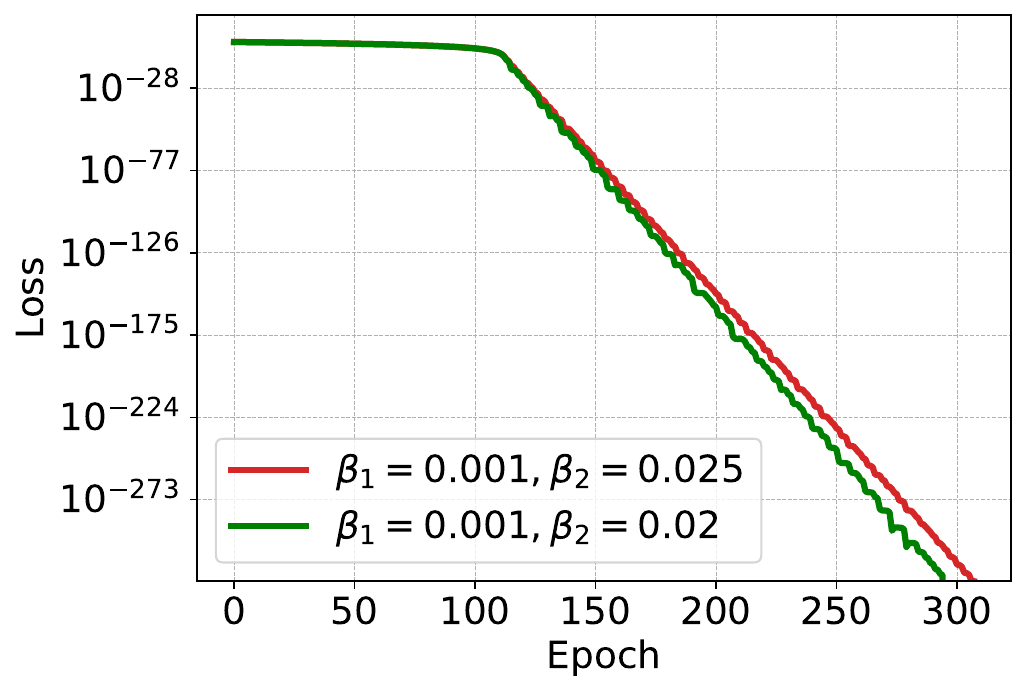}}
    \subfloat[Small $\beta_2$ (Unstable)]{\includegraphics[width=0.33\linewidth]{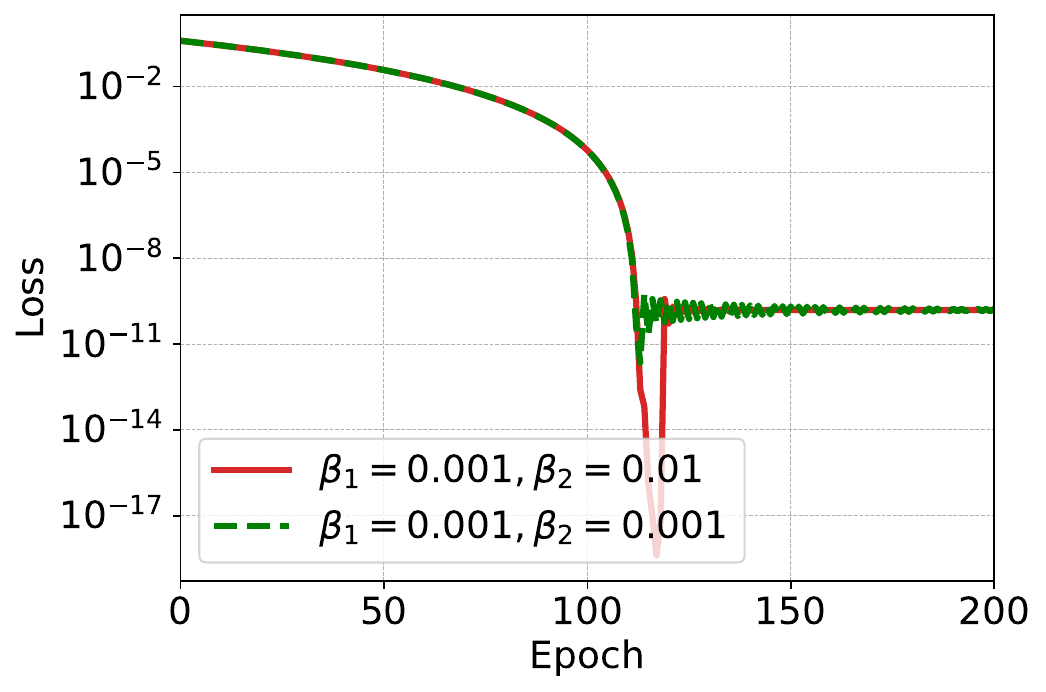}}\\
    \subfloat[Bifurcation ($\beta_1=0.001$)]{\includegraphics[width=0.5\linewidth]{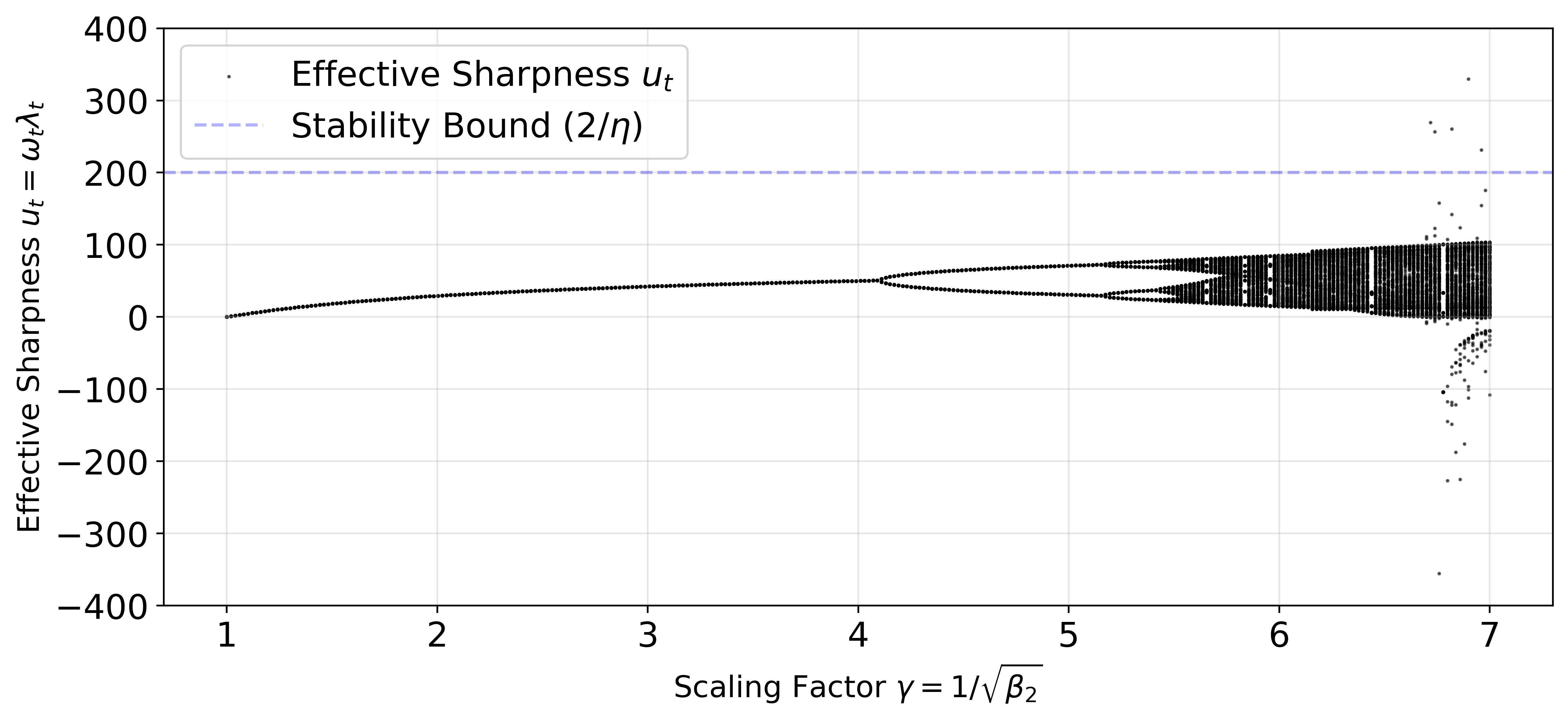}}
    \subfloat[Bifurcation ($\beta_1=0.9$)]{\includegraphics[width=0.5\linewidth]{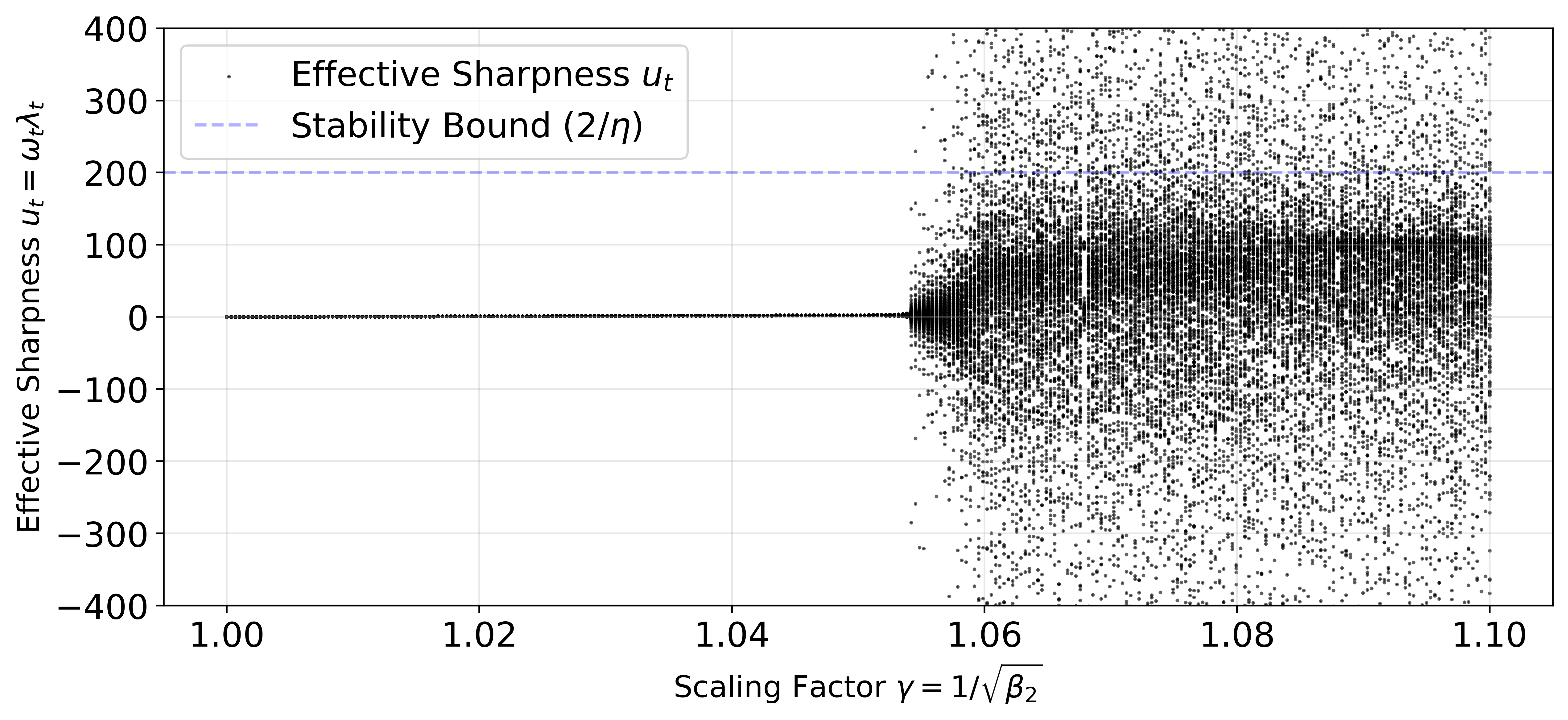}}
    \caption{\textbf{Dynamics and Bifurcation Analysis.} (a-c) Loss trajectories for fixed $\beta_1=0.001$ with decreasing $\beta_2$. (d-e) Bifurcation diagrams of effective sharpness $u_t$ versus $\beta_2$. The blue dashed line indicates the stability threshold $2/\eta$. Low $\beta_1$ allows the system to remain bounded (below threshold) despite local instability, whereas high $\beta_1$ leads to immediate divergence.}
    \label{fig:zoom_in_loss}
\end{figure}

\begin{figure}[!h]
    \centering
    \subfloat[$\beta_1=0.03, \beta_2=0.04$]{\includegraphics[width=0.33\linewidth]{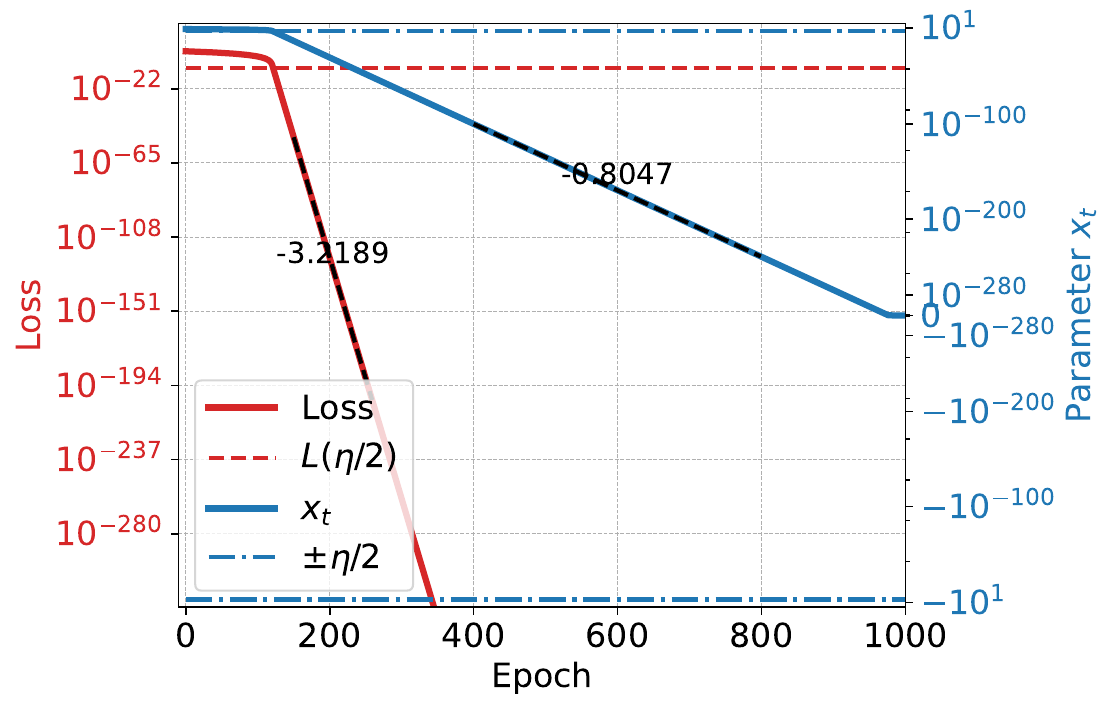}} 
    \subfloat[$\beta_1=0.03, \beta_2=0.029$]{\includegraphics[width=0.33\linewidth]{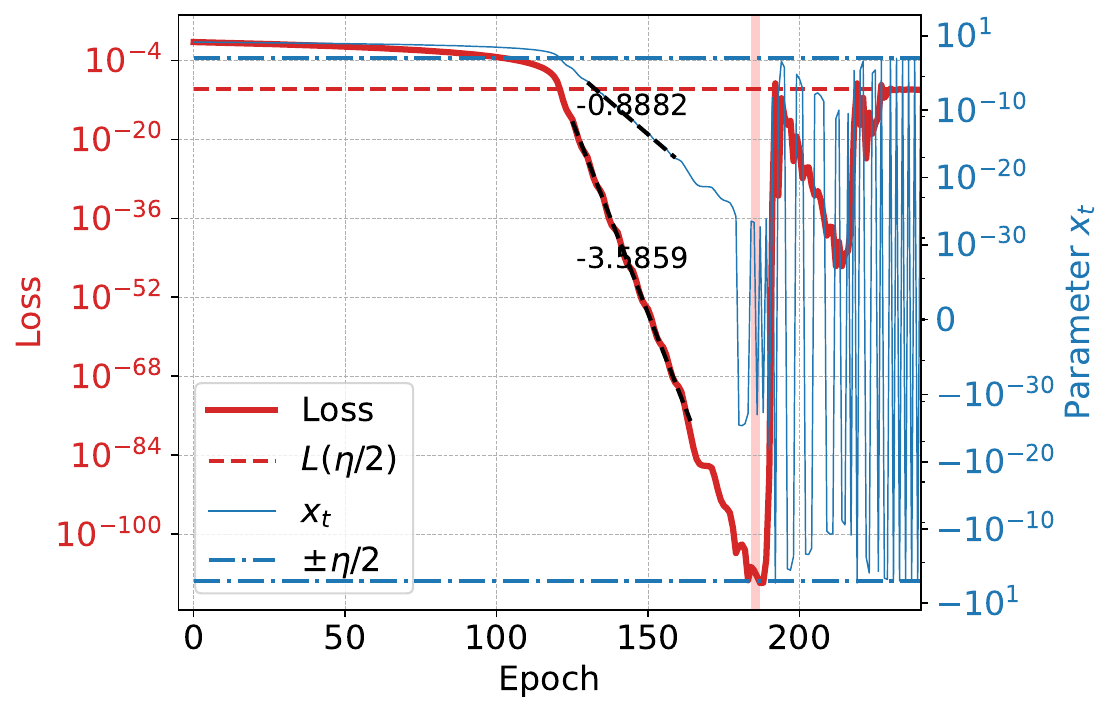}}    \subfloat[$\beta_1=0.03, \beta_2=0.0001$]{\includegraphics[width=0.33\linewidth]{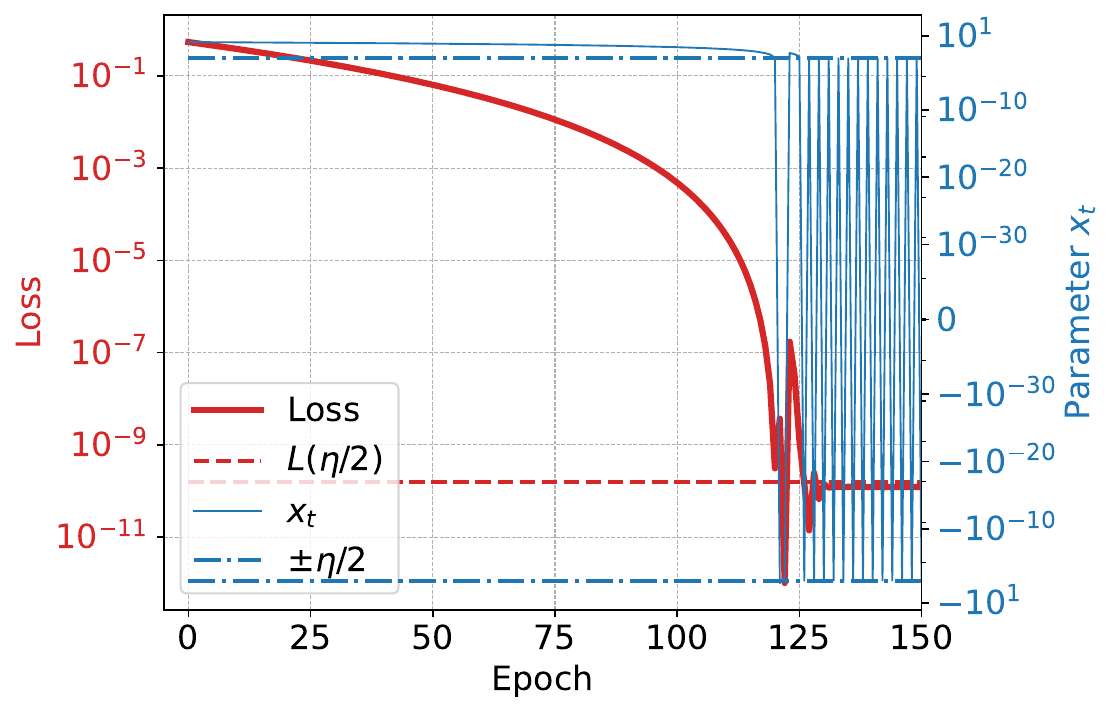}}\\
    \subfloat[$\beta_1=0.03, \beta_2=0.04$]{\includegraphics[width=0.32\linewidth]{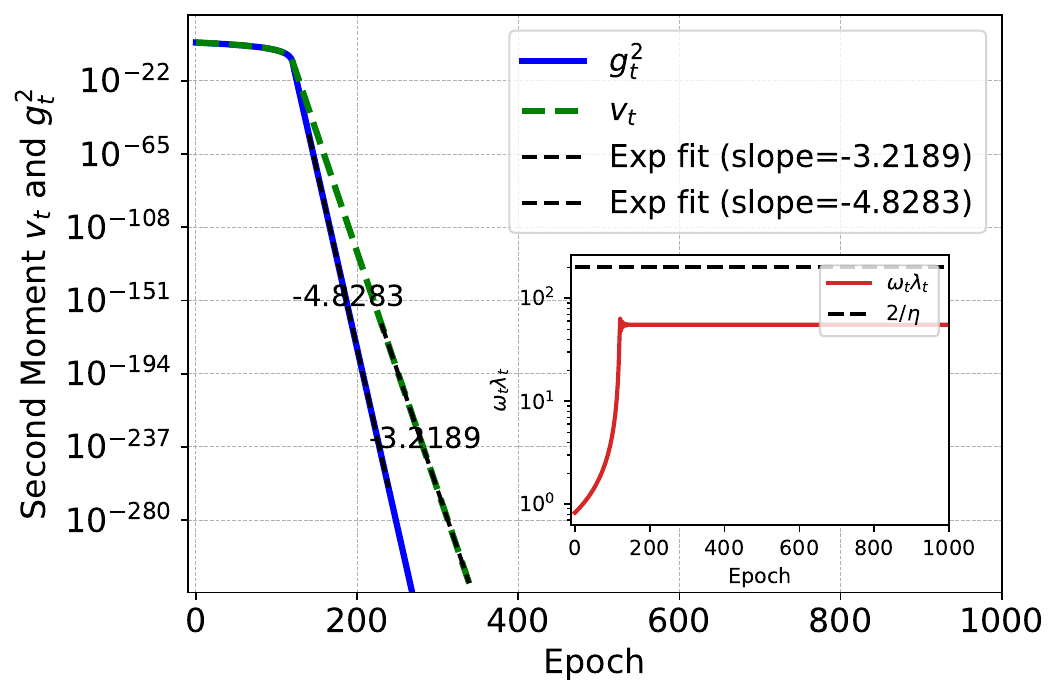}}    
    \subfloat[$\beta_1=0.03, \beta_2=0.029$]{\includegraphics[width=0.32\linewidth]{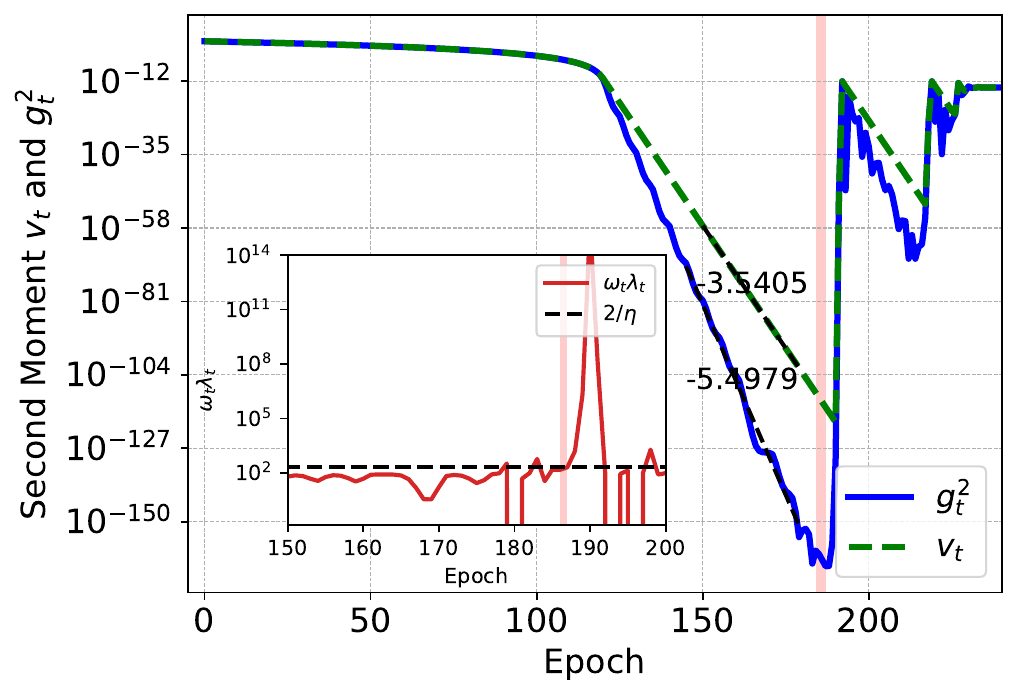}}    \subfloat[$\beta_1=0.03, \beta_2=0.0001$]{\includegraphics[width=0.31\linewidth]{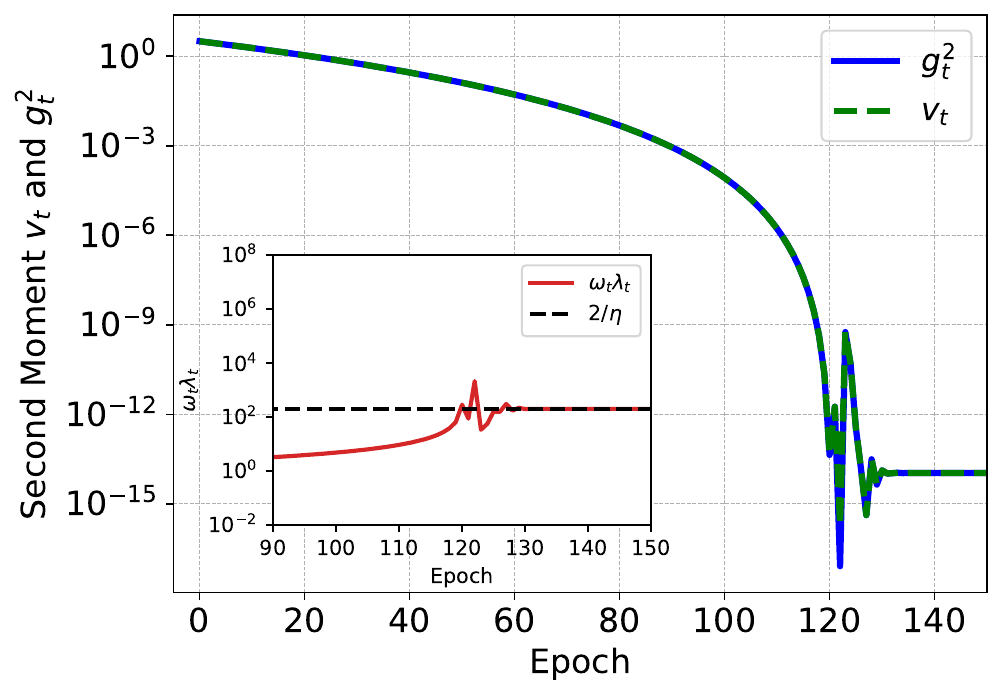}}\\
    \subfloat[$\beta_1=0.03, \beta_2=0.04$]{\includegraphics[width=0.32\linewidth]{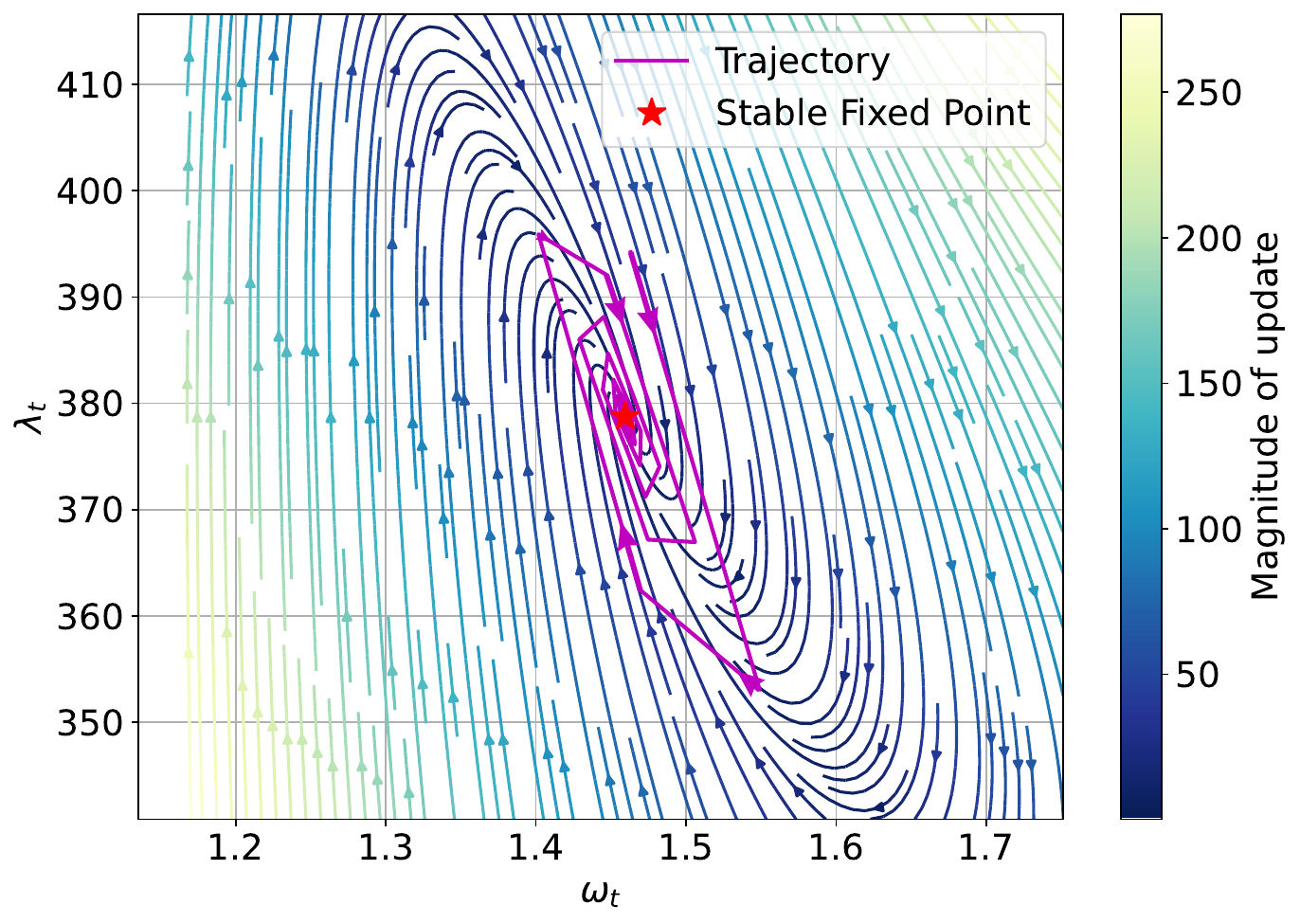}}    
    \subfloat[$\beta_1=0.03, \beta_2=0.029$]{\includegraphics[width=0.32\linewidth]{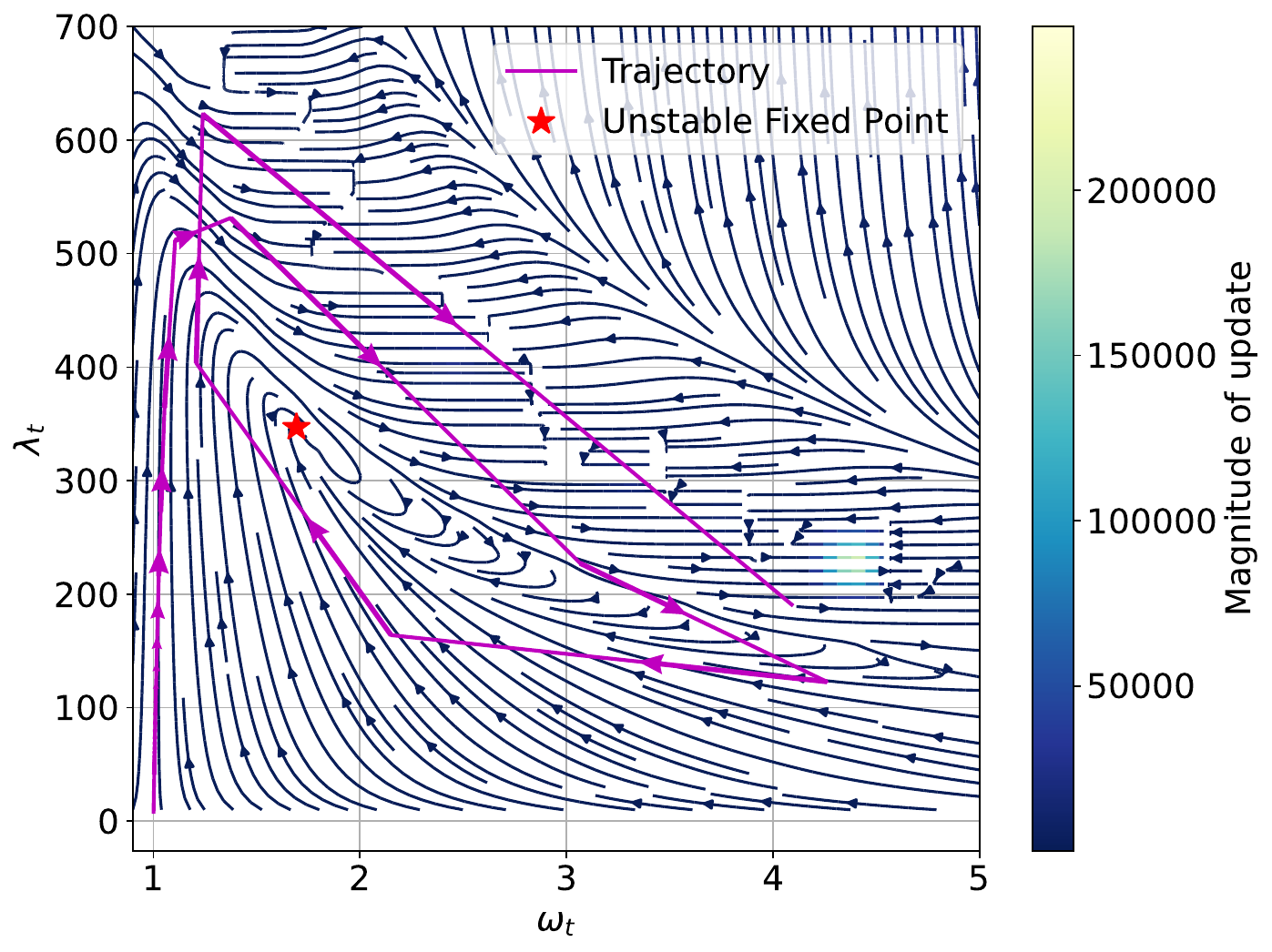}}    \subfloat[$\beta_1=0.03, \beta_2=0.0001$]{\includegraphics[width=0.31\linewidth]{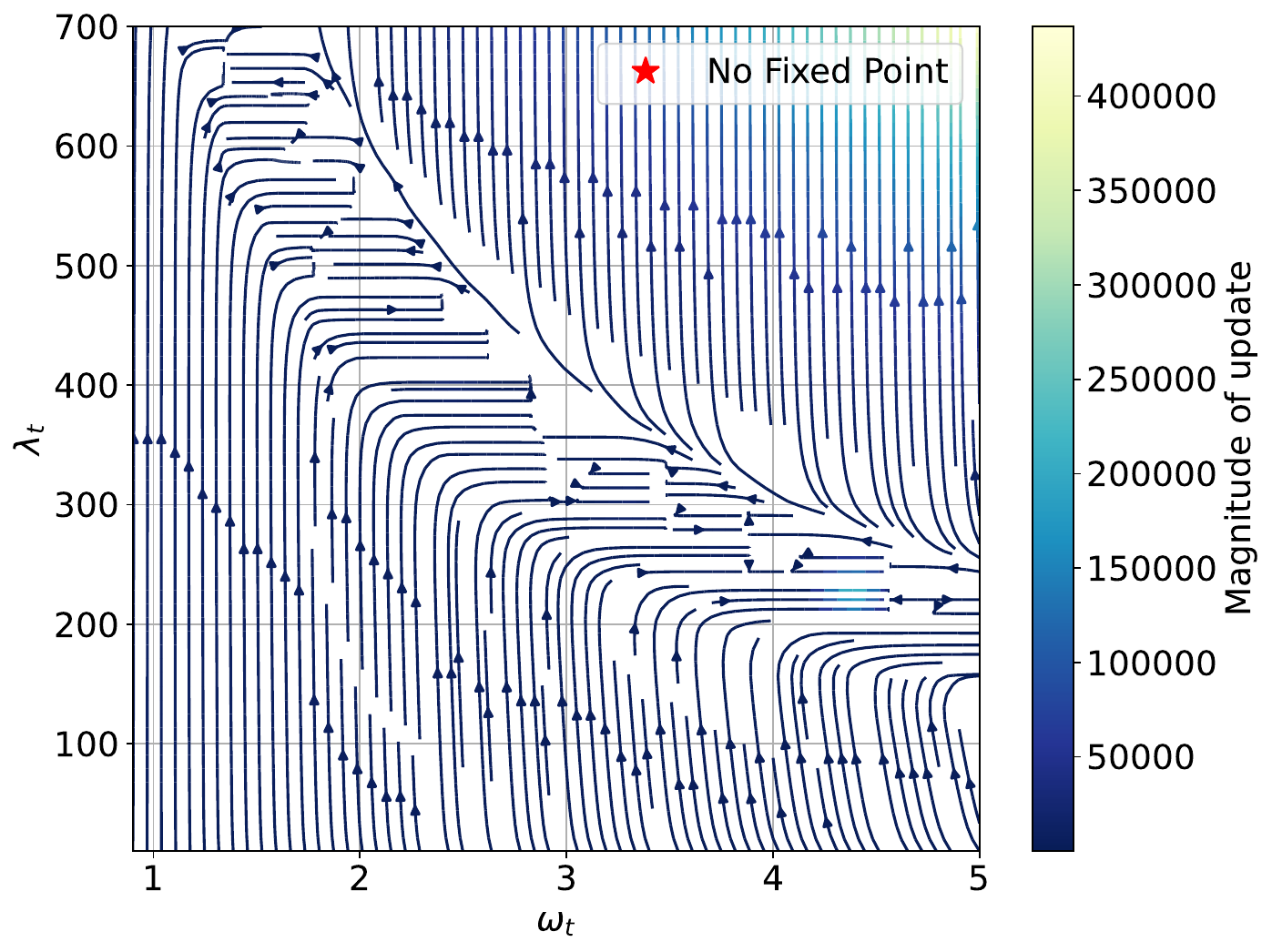}}
    \caption{Three distinct dynamical regimes of Adam on the degenerate objective $L(x)=\frac{1}{4} x^4$ with learning rate $\eta=0.01$. 
    \textbf{Top Row:} Evolution of training loss (red, left axis) and parameter trajectory $x_t$ (blue, right axis). The text online shows the slope of the exponential fit.
    \textbf{Middle Row:} Evolution of the second moment estimate $v_t$ versus the squared gradient $g_t^2$. Insets display the evolution of the stability metric $\omega_t\lambda_t$ relative to the theoretical threshold $2/\eta$.
    \textbf{Bottom Row:} Vector fields of $\omega$ and $\lambda$; purple lines represent evolutionary trajectories. 
    \textbf{(Left Column) Regime I: Stable Exponential Convergence.} $v_t$ completely decouples from $g_t^2$, facilitating stable exponential acceleration.
    \textbf{(Middle Column) Regime II: Exponential Convergence followed by Spike.} $v_t$ initially decouples, driving acceleration. However, the stability condition is persistently violated ($\omega_t\lambda_t > 2/\eta$) due to response lag, triggering a violent loss spike.
    \textbf{(Right Column) Regime III: SignGD-like Oscillation.} $v_t$ tracks $g_t^2$ closely (tight coupling), preventing the exponential convergence phase. Violations of the stability threshold ($\omega_t\lambda_t > 2/\eta$) trigger an instantaneous correction, resulting in oscillations.}
    \label{fig:three_regimes2}
\end{figure}

\begin{figure}[!htb]
    \centering
    \subfloat[Two-layer FNN $\mathrm{  ReLU}$]{
    \includegraphics[width=0.33\linewidth]{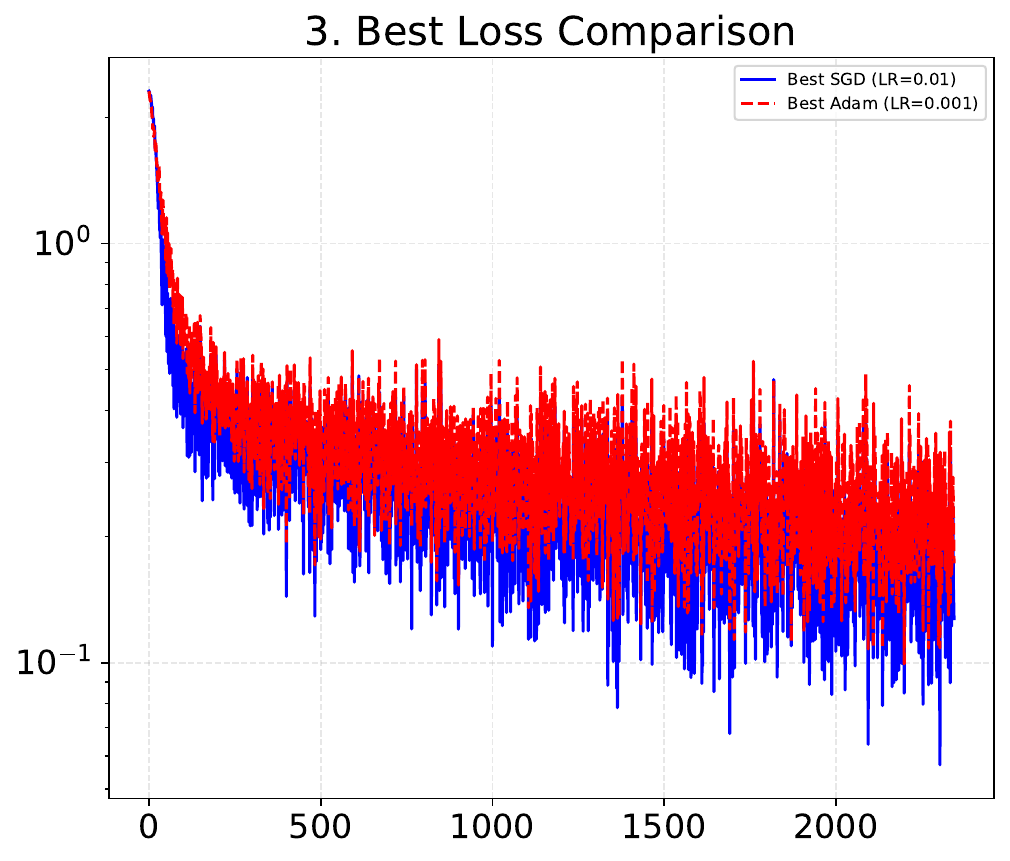}
    }
  \subfloat[Two-layer FNN $\mathrm{Softmax}$]
    {
    \includegraphics[width=0.33\linewidth]{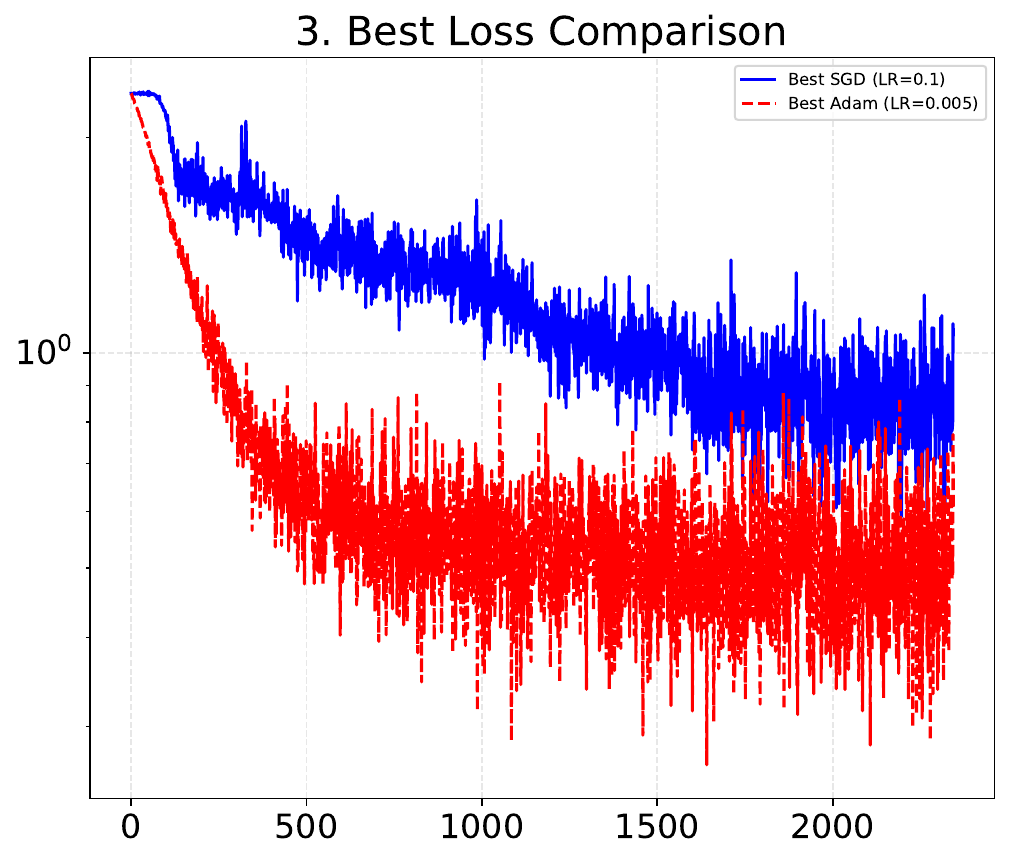}
    } \subfloat[Spectral Distribution]{
    \includegraphics[width=0.33\linewidth]{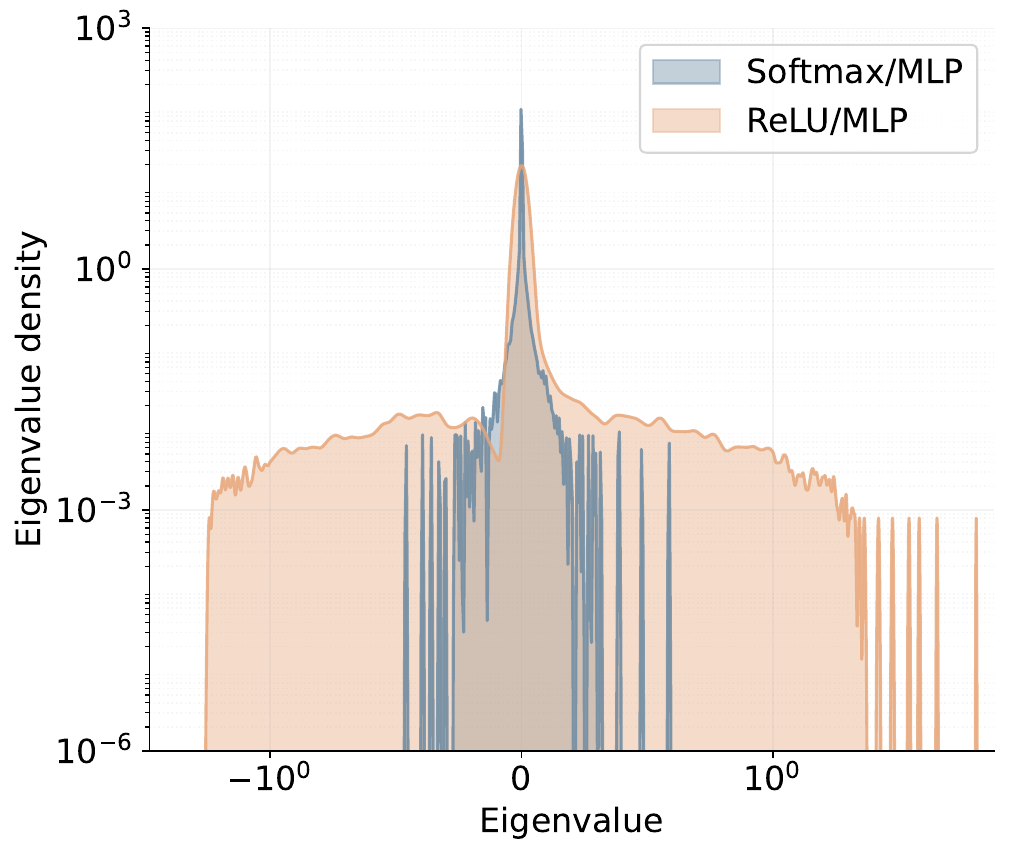}}\\
    \subfloat[CNN]{
    \includegraphics[width=0.33\linewidth]{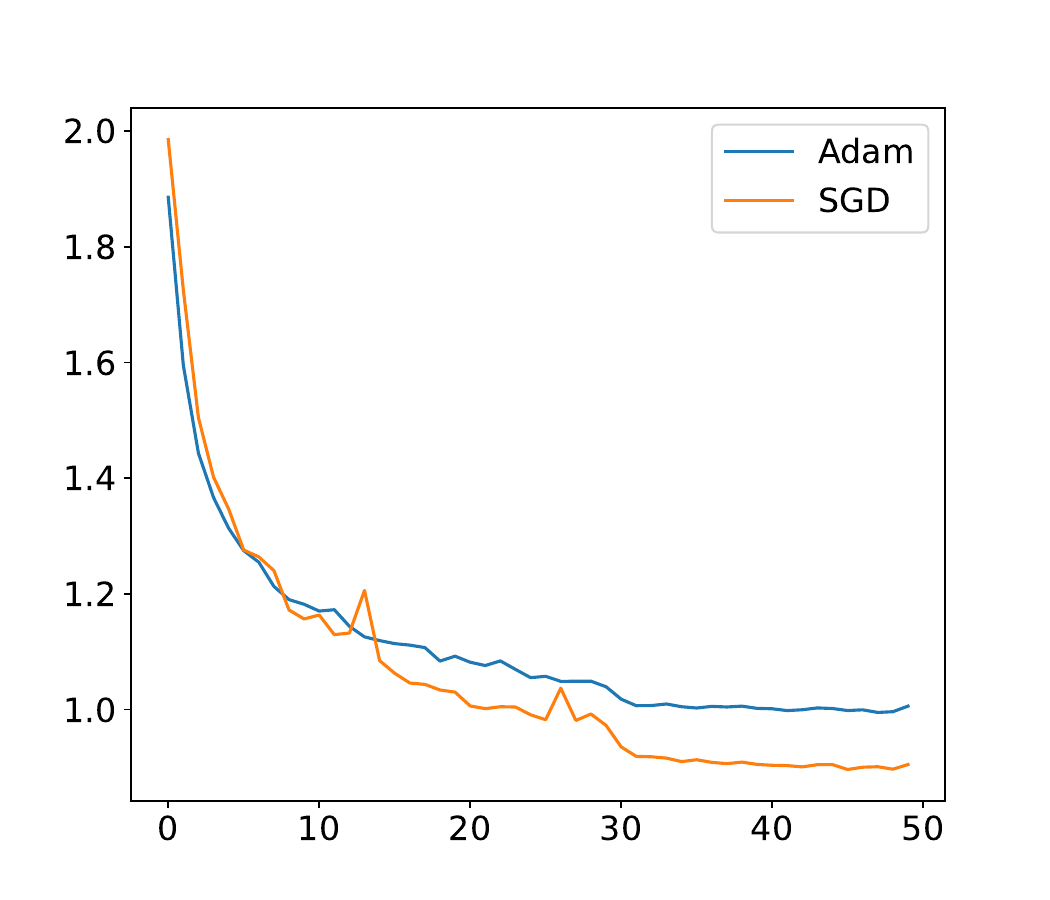}}
    \subfloat[Transformers]{
    \includegraphics[width=0.33\linewidth]{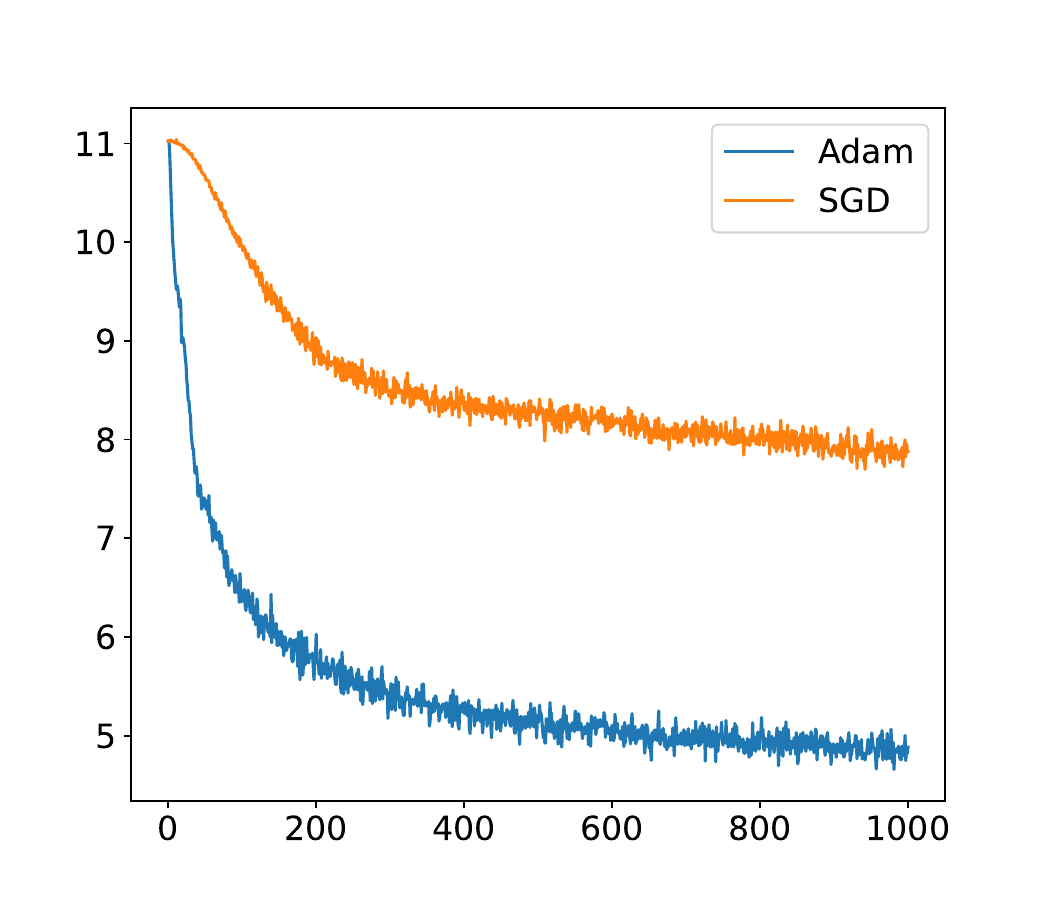}}
    \subfloat[Spectral Distribution]{
    \includegraphics[width=0.33\linewidth]{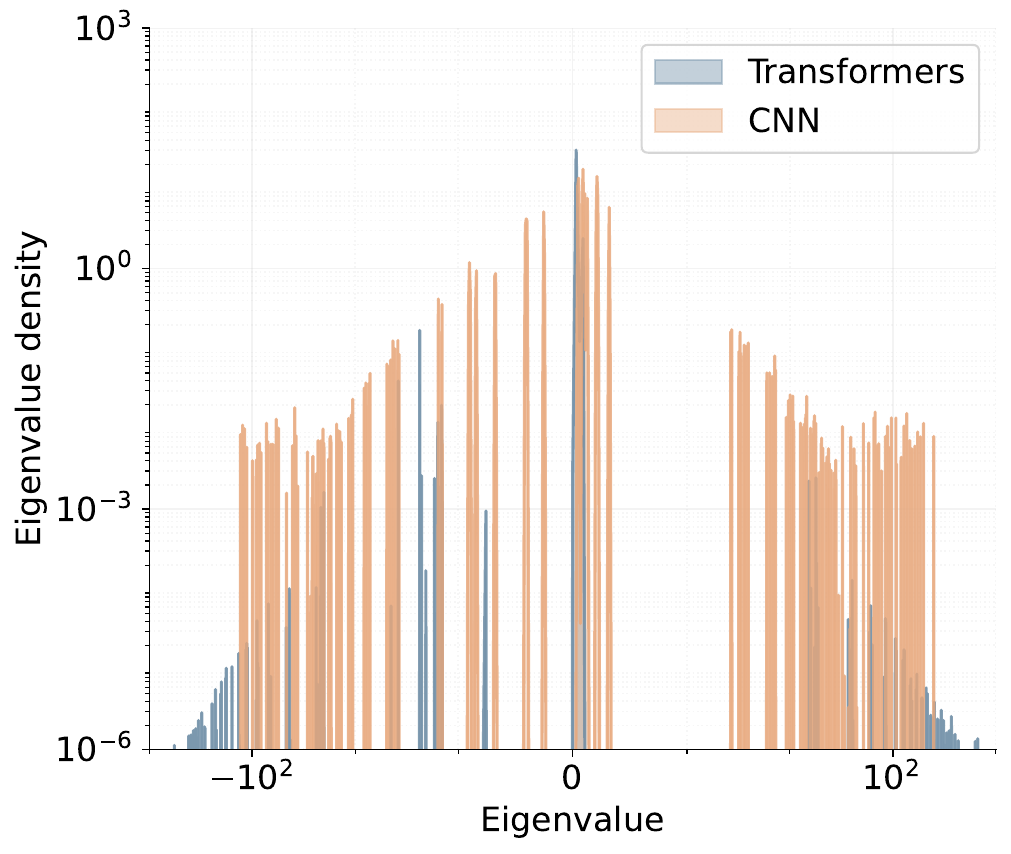}}

    \caption{Evolution of training loss across different architectures and optimizers, using learning rates tuned for optimal performance. \textbf{(a, d)} Adam and SGD perform comparably on the two-layer FNN with ReLU and the CNN. \textbf{(b, e)} Adam outperforms SGD on the two-layer FNN with Softmax and Transformers. \textbf{(c, f)} Hessian spectral densities of different model architectures at initialization. The densities are smoothed using KDE. (c) A two-layer FNN on MNIST with Softmax~(blue) and ReLU~(orange) activations. (f) GPT-2 (blue) and a two-layer CNN on CIFAR-10 (orange).}
    \label{fig:spectrum}
\end{figure}

% \begin{figure}[!htb]
%     \centering
    
%     \\
    
%     \caption{}
%     \label{fig:spectrum}
% \end{figure}

% \subsection{Spectral Distribution of Real Neural Networks}

\begin{figure}[!htb]
    \centering
    \subfloat[$\mathrm{ReLU}$]{
    \includegraphics[width=\linewidth]{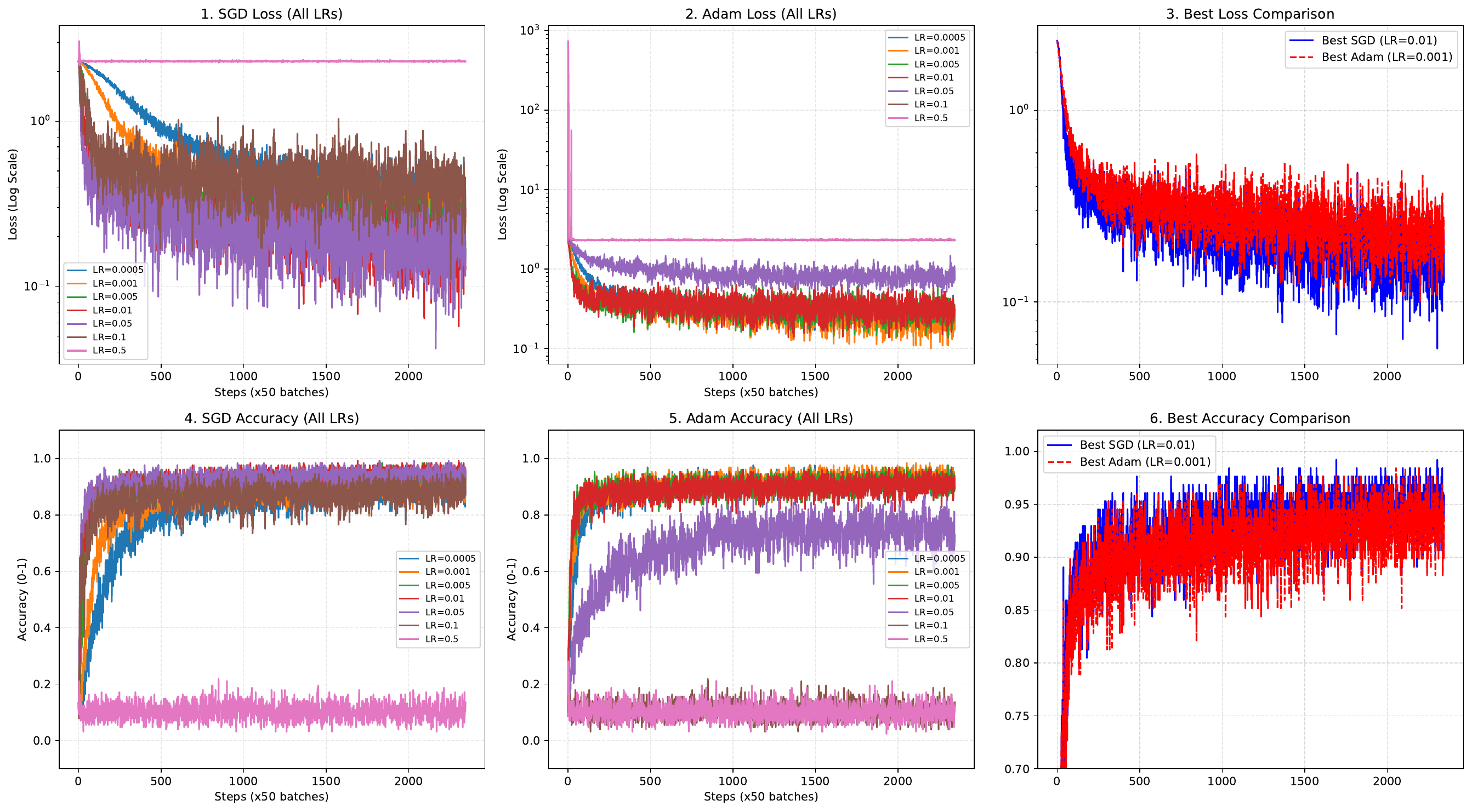}
    }\\
  \subfloat[$\mathrm{Softmax}$]
    {
    \includegraphics[width=\linewidth]{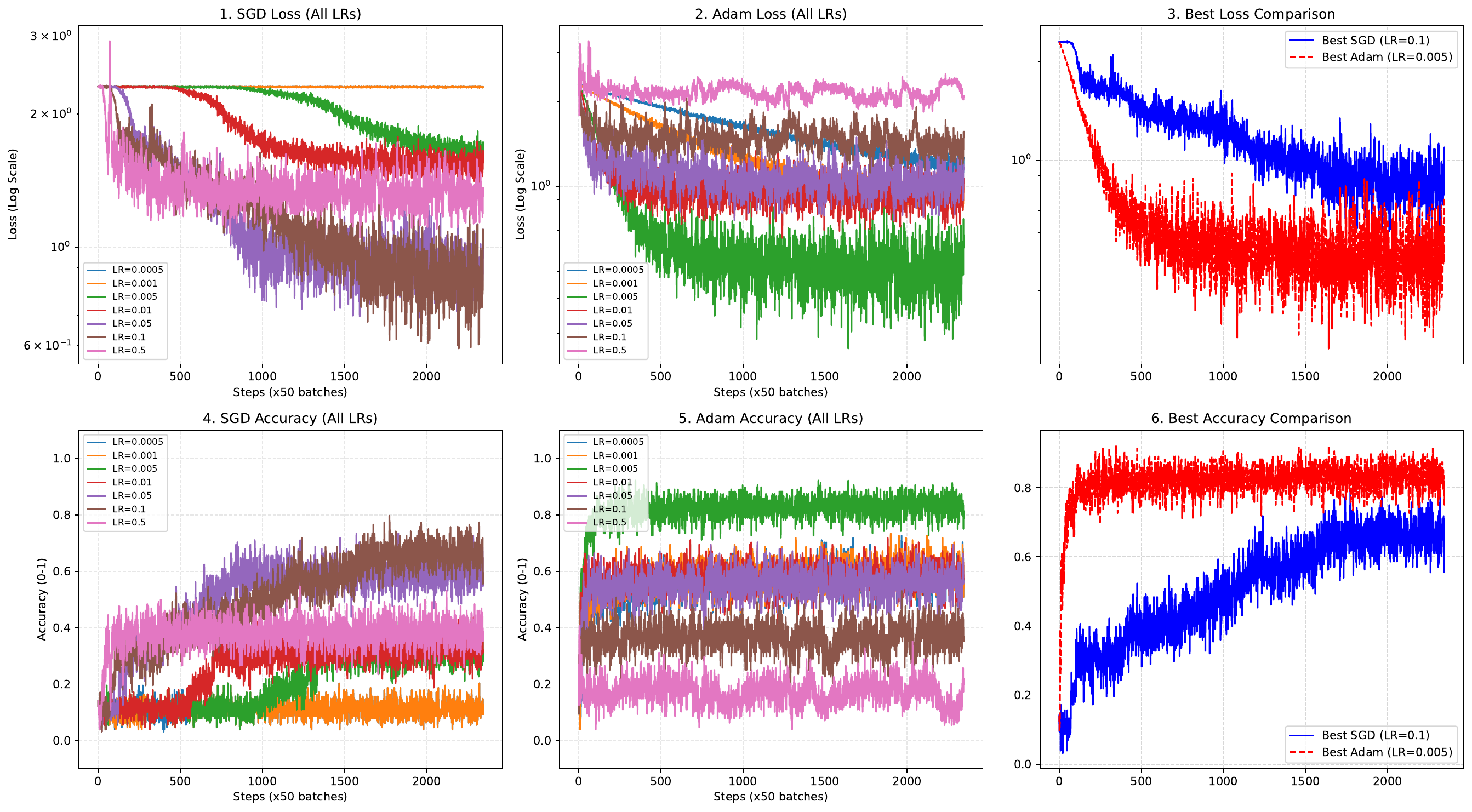}
    }\\
    \caption{Raw loss curves of a two-layer FNN trained with Adam and SGD across various learning rates. (a) ReLU activation. (b) Softmax activation.}
    \label{fig:loss_two_layer}
\end{figure}

\subsection{Experiment settings}

\textbf{Fig.~\ref{fig:spectrum}(c)} A two-layer neural network trained on the MNIST dataset using Cross-Entropy Loss. The architecture employs different activation $\mathrm{ReLU}$ and $\mathrm{Softmax}$. The learning rate is ranging from $[0.0005,0.001,0.005,0.01,0.05,0.1,0.5]$.

\textbf{Fig.~\ref{fig:spectrum}(f)} The Transformer model is GPT-2 trained on a subset split from the SlimPajama dataset. The learning rate is ranging from $[0.00005,0.0001,0.0005,0.001,0.005]$. The optimal learning rate is $0.001$ for both Adam and SGD. The CNN model comprises two convolutional layers and is trained on the CIFAR-10 dataset. The learning rate is ranging from $[0.0005,0.001,0.005,0.01,0.05, 0.1]$. The optimal learning rate is $0.005$ for Adam and $0.05$ for SGD.

\appendix

\end{document}